\renewcommand{\epsilon}{\varepsilon}
\DeclareMathOperator{\Expec}{\mathbb{E}}%
\DeclareMathOperator{\Exps}{e}%
\DeclareMathOperator{\Proba}{\mathbb{P}}%
\newcommand{\abs}[1]{\left\lvert#1\right\rvert} %
\newcommand{\bernoulli}[1]{\mathcal{B}(#1)}%
\newcommand{\binomial}[1]{\mathrm{Bin}(#1)}%
\newcommand\ceil[1]{\lceil#1\rceil}
\newcommand{\Complex}{\mathbb{C}}%
\newcommand{\cov}[1]{\mathrm{Cov}\left(#1\right)}%
\newcommand{\defeq}{\vcentcolon =}%
\newcommand{\deter}[1]{\det \left(#1\right)}
\newcommand{\Diff}{\mathrm{d}}%
\newcommand{\eqdef}{=\vcentcolon }%
\newcommand{\expl}[1]{\mathrm{exp}\left(#1\right)}%
\newcommand{\expec}[1]{\Expec\left[#1\right]}%
\newcommand{\exps}[1]{\Exps^{#1}}%
\newcommand{\gaussian}[2]{\mathcal{N}(#1,#2)}
\newcommand{\ii}{\mathbf{i}}%
\newcommand{\inner}[2]{\langle #1,#2\rangle}%
\newcommand{\lambdamin}{\lambda_{\min}} 
\newcommand{\lambdamax}{\lambda_{\max}}
\newcommand{\norm}[1]{\left\lVert #1 \right\rVert}%
\newcommand{\Integers}{\mathbb{N}}%
\newcommand{\littleo}[1]{o\left(#1\right)}%
\newcommand{\proba}[1]{\Proba\left (#1\right )}%
\newcommand{\Reals}{\mathbb{R}}%
\newcommand{\Relint}{\mathbb{Z}}%
\newcommand{\spec}[1]{\mathrm{Spec}(#1)}%
\newcommand{\Xbar}{\overline{X}}%
\newcommand{\yhat}{\hat{y}}%
\newcommand{\yinf}{\ybar_{\infty}}%
\newcommand{\ybar}{\overline{y}}%
\newcommand{\review}[1]{{#1}}
\newtheorem{assumption}{Assumption}
\def\th@plain{%
	\thm@notefont{}%
	\itshape %
}
\def\th@definition{%
	\thm@notefont{}%
	\normalfont %
}
\begin{document}

\title{Are Ensembles Getting Better All the Time?}

\author{\name Pierre-Alexandre Mattei \email pierre-alexandre.mattei@inria.fr \\
        \addr Université Côte d'Azur\\
        Inria, Maasai team\\
        Laboratoire J.A. Dieudonné, CNRS \\
        Nice, France \\
       \name Damien Garreau \email damien.garreau@uni-wuerzburg.de \\
\addr Julius-Maximilians-Universit\"at W\"urzburg \\
Institute for Computer Science / CAIDAS \\
W\"urzburg, Germany
}

\editor{Maya Gupta}

\maketitle

\begin{abstract}%
Ensemble methods combine the predictions of several base models. 
We study whether or not including more models 
always improves their average performance. 
This question depends on the kind of ensemble considered, as well as the predictive metric chosen. 
We focus on situations where all members of the ensemble are \emph{a priori} expected to perform \review{equally well}, 
which is the case of several popular methods such as random forests or deep ensembles. 
In this setting, we show that ensembles are getting better all the time if, and only if, the considered loss function is convex. 
More precisely, 
in that case, the \review{loss} 
of the ensemble is a decreasing function of the number of models. 
When the loss function is nonconvex, we show a series of results that can be summarised as:
ensembles of good models keep getting better, and ensembles of bad models keep getting worse. 
To this end, we prove a new result on the monotonicity of tail probabilities that may be of independent interest. 
We illustrate our results on a 
medical 
problem (diagnosing melanomas using neural nets) and a ``wisdom of crowds'' experiment (guessing the ratings of upcoming movies).
\end{abstract}

\begin{keywords}
Ensemble methods,  deep ensembles, random forests, large deviations, wisdom of crowds
\end{keywords}

\section{Introduction}

The idea that groups are collectively wiser than individuals 
can be traced back to various branches of science, ranging from political science to philosophy, economics, or sociology. 
In statistics, forecasting, and machine learning, this idea is embodied by \emph{ensemble methods}. 
Broadly speaking, ensembles combine the outputs of several base predictive models in a single decision, often by averaging (for regression) or by majority voting (for classification).

Early applications were mostly related to forecasting \citep{bates1969combination,clemen1989combining} and decision theory \citep{stone1961opinion,genest1986combining}. In the 1990s, a significant amount of new ensembling techniques were developed by the burgeoning machine learning community, including notably boosting \citep{freund1997decision}, Breiman's bagging \citeyearpar{breiman1996bagging} or random forests \citeyearpar{breiman2001random}, Ho's \citeyearpar{ho1998random} random subspace, neural network ensembles \citep{hansen1990neural,krogh1994neural}, and Bayesian model averaging \citep{hoeting1999bayesian}. 
Ensembles are routinely used in practice, for instance for probabilistic weather forecasting \citep{gneiting2005weather} (see also Kuncheva's \citeyearpar{kuncheva2014combining} monograph on the topic).
\citet{grinsztajn2022why} go as far as claiming that 
random forests and boosting remain the most efficient machine learning methods for tabular data. 
More recently, ensembles of neural networks have been instrumental within the deep learning revolution, notably through dropout \citep{srivastava2014dropout,baldi2014dropout,hron2018variational} and deep ensembles \citep{lakshminarayanan2017simple} and their variants (see, \emph{e.g.},  \citealp{sun2022towards,laurent2022packed}, and references therein).
Random projections ensembles are still actively researched \citep{cannings2017random,cannings2021random}.  

The empirical successes of ensembles seem to indicate that the more models are being aggregated, the better the ensemble is. 
\review{This, of course, can only be true on average. Adding a consistently bad model to the family will decrease the overall accuracy---for instance adding a model that always predicts the wrong class.}
The theoretical question we address in this paper is the following: \textbf{Is it always true that an ensemble of $K+1$ models performs better \review{on average} than an ensemble of $K$ models?}
\review{We refer to this question as the \emph{monotonicity question}.}
A positive answer \review{would have} practical consequences, \review{encouraging us to always take more base models in our ensemble. 
As we will see, the picture is more nuanced and needs precise analysis to emerge.}

\begin{figure}
\centering
\includegraphics[width = \columnwidth]{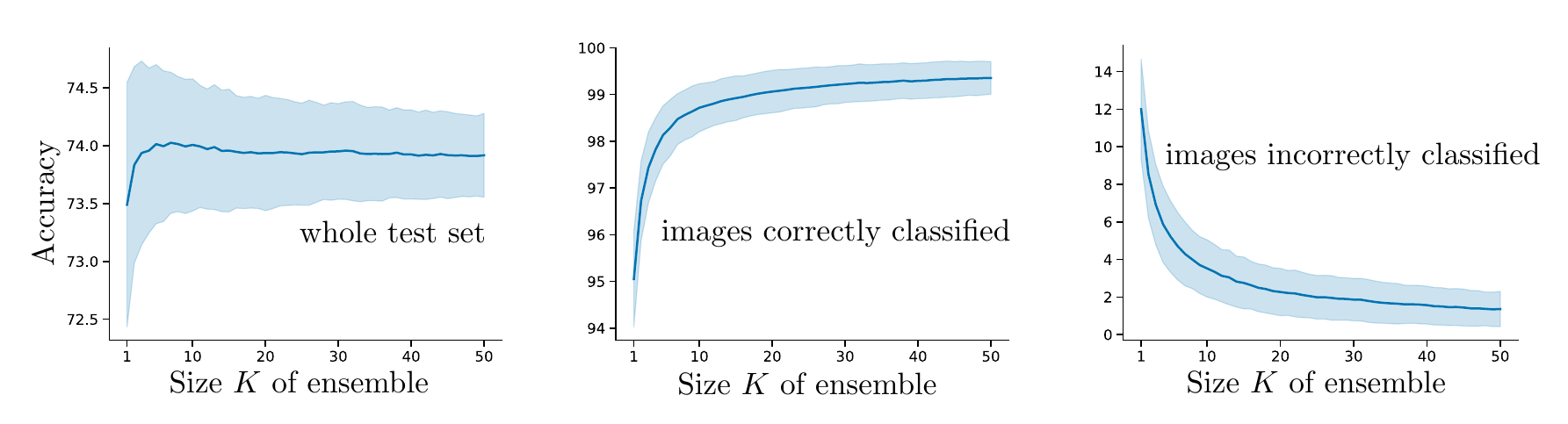}
\caption{\label{fig:accuracy_derma}Evolution of the accuracy of a dropout ensemble on the DermaMNIST dataset as the number of models grows (mean and standard deviation over $500$ repetitions). \emph{(Left)} The accuracy averaged over the whole test set has no clear monotonic pattern. \emph{(Middle)} Accuracy averaged over images for which the asymptotic prediction \review{(\emph{i.e.} the prediction of an ensemble of infinite size)} 
$\ybar_\infty$ is correct.
\review{The accuracy is getting better when the ensemble size grows.}  \emph{(Right)} Accuracy averaged over images for which $\ybar_\infty$ \review{is} incorrect.
\review{The accuracy is going down.} 
The behaviours of these middle and right panels \review{are related to the non-convexity of the classfication error}, and are explained by our theory. Notice that the three $y$-axes have different scales.}
\end{figure}

\subsection{A motivating example: neural nets for medical diagnosis}
\label{sec:intro-medical}

In this section, we consider the following practical problem: we want to predict whether a dermatoscopic image corresponds to a benign keratosis-like lesion or a melanoma. 
We use the DermaMNIST \citep{yang2023medmnist} data set, based on the HAM10000 collection \citep{tschandl2018ham10000}, and retain only the classes ``benign keratosis'' and ``melanoma'' (more details on the data set are available in Section~\ref{sec:experiments}).
Using a training set and a validation set, we train a LeNet-like convolutional neural network \citep{lecun1998gradient} 
with dropout to discriminate these two classes. 
We then use a Monte Carlo dropout ensemble to classify the test set \citep{gal2016dropout,hron2018variational}, and look at the evolution of the accuracy (Figure~\ref{fig:accuracy_derma}) and the cross-entropy loss (Figure~\ref{fig:crossent_derma}) as the ensemble grows. 
There is a crisp difference between the behaviours of the two losses. The overall accuracy has no clear monotonic pattern, but if we split the test set into two subsets depending on whether the asymptotic prediction is right or wrong, we notice that the accuracy is \review{in}creasing for right predictions and \review{de}creasing for wrong ones! 
On the other hand, the cross-entropy appears to be clearly decreasing, regardless of the quality of the prediction. 
Our goal is to provide clear theoretical explanations for these phenomena.

\begin{figure}
\centering
\includegraphics[width = \columnwidth]{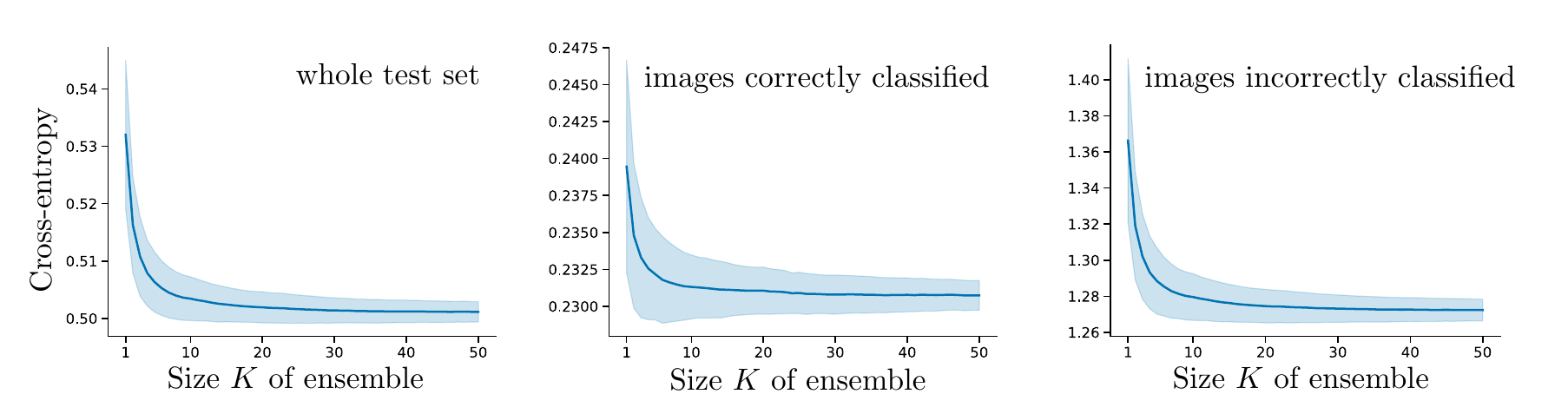}
\caption{\label{fig:crossent_derma}Evolution of the cross-entropy (or negative log-likelihood) of a dropout ensemble on the dermatology data set as the number of models grows (mean and standard deviation over 500 repetitions). \emph{(Left)} Cross-entropy averaged over the whole test set. \emph{(Middle)} Cross-entropy averaged over images for which the asymptotic prediction $\ybar_\infty$ is correct.  \emph{(Right)} Cross-entropy averaged over images for which the asymptotic prediction $\ybar_\infty$ in incorrect. As predicted by our theory, all curves are decreasing \review{because the cross-entropy loss is convex}. Notice that the three $y$-axes have different scales.}
\end{figure}

\subsection{Summary and organisation of the paper}

In this work, we focus on the simplest instance of ensembles, that involve unweighted averaging of predictions that are ``not too dependent.'' 
While simple, this setting remains the usual formalization of most ``wisdom of crowds'' studies, and the basis of state-of-the-art machine learning methods like random forests and deep ensembles. 
In this context, formalised in Section~\ref{sec:setting}, 
for an ensemble to be monotonically improving is deeply connected to the local geometry of the chosen loss function, more specifically to its convexity and its smoothness. 
More precisely, we show that:
\begin{itemize}
\item For \textbf{convex loss functions and exchangeable predictions} (Section~\ref{sec:convex-losses}), the expected loss is a non-increasing function of the number of ensembles members (Theorem~\ref{th:exch}), generalising several results previously obtained for specific losses \citep{noh2017regularizing,probst2018tune}. 
This result can be strengthened for strictly convex losses (Theorem~\ref{th:strong}).  
\item For \textbf{nonconvex loss functions and independent and identically distributed (i.i.d.) predictions} (Section~\ref{sec:non-convex-losses}), we show that, 
\review{when the number of base models is large enough,}
good ensembles keep getting better, and bad ensembles keep getting worse in the following sense:
\begin{itemize}
\item when the loss if sufficiently smooth, ensembles keep eventually getting better when the loss function is locally convex at their asymptotic predictions, whereas they keep getting worse if the loss is locally concave there (Theorem~\ref{th:smooth-ncvx});
\item for the classification error, ensembles keep eventually getting better when their asymptotic prediction corresponds to the true label, and keep getting worse if it corresponds to a wrong label (Theorem~\ref{th:class-error}). 
This result comes with not-so-mild regularity conditions that we relate to classical counterexamples \emph{à la} Condorcet.
\end{itemize}
\item To prove our result for nonsmooth losses, we prove two new theorems on the monotonicity of tail probabilities that may be of independent interest (Theorems~\ref{th:monotonicity-tail-proba} and~\ref{th:monotonicity-tail-proba-multivariate}).
This result is based on ``strong'' large deviations theorems \citep{bahadur1960deviations,petrov1965probabilities, joutard_2017}.
\end{itemize}
Informally, our results may be summarised by the sentence \textbf{ensembles are getting better all the time if, and only if, the considered loss function is convex}. 
We support these claims by two real data experiments presented in Section~\ref{sec:experiments}. 

\subsection{Related works: monotonicity and the wisdom of crowds}

The main heuristic justification of ensembles is that groups are collectively wiser than individuals, because aggregated decisions have a smaller variance than individual ones. This general idea is referred to in various, sometimes colorful, ways: some, {like \citet{dekel2009vox}}, praise the ``vox populi,'' following Galton's \citeyearpar{galton1907vox} seminal paper; the phrase ``wisdom of crowds,'' used for instance by \citet{prelec2017solution}  or \citet{simoiu2019studying}, was immensely popularised by Surowiecki's \citeyearpar{surowiecki2005wisdom} bestselling book; political scientists like \citet{althaus2003collective} often debate the virtues of the ``miracle of aggregation'' as a theoretical argument in favor of democracy; many scientists made the popular phrase ``two heads are better than one'' their own (\emph{e.g.}, \citealp{koriat2012two,chu2003question}). \review{A recent review of the philosophical ramifications of collective intelligence was provided by \citet{elkin2025opinion}.}

\cite{nicolas1785essai} proposed a first mathematical formalisation of this heuristic insight. His goal was to prove that, within a trial, the larger the number of jurors, the better. 
In modern machine learning jargon, he showed that ensembles containing an odd number of Bernoulli-distributed \review{base models} are monotonically improving, with respect to the classification error (see \citealt{shteingart2020majority}, for a modern proof). 
Surprisingly, he also noticed that monotonicity is broken if the ensemble is allowed to contain an even number of \review{models}.

\review{An important (and quite diverse) amount of work has been devoted to the theoretical study of ensembles; while giving a detailed overview is out of our scope, it is worth providing a few entry points to this literature. Some authors analysed the properties of specific kinds of ensembles, such as bagging \citep{larsen2023bagging}, random forests \citep{scornet2025theory}, random projections \citep{cannings2017random}, dropout \citep{hron2018variational} or deep ensembles \citep{wild2023rigorous}. Others studied the relationship between the benefits of ensembling and the diversity of the ensemble \citep{ortega2022diversity,theisen2023ensembles}. More related to our work are papers that study the behaviour of ensembles when $K$ is large. In particular, \citet{cannings2017random} and \citet{lopes2020estimating} derived asymptotic expansions of the test loss when $K \rightarrow \infty$. Unfortunately, such expansions do not capture monotonicity, even for large enough $K$ (this will be discussed further in Section \ref{sec:condorcet-example}).

However, in spite of this large body of work, the monotonicity question has been quite overlooked, especially theoretically. \citet{lam_et_al_1997}, \citet{berend2005monotonicity}, and \citet{dougherty2009odd} did revisit Condorcet's theorem, and studied the issues that make the presence of an even number of models problematic. Several generalisations of this theorem were also derived, in particular with weaker dependence assumptions (see, \emph{e.g.}, \citealp{ladha1993condorcet}). Closer to our investigations \citet{probst2018tune}, in the context of random forests, performed a very thorough theoretical and empirical study of the monotonicity question.} Their empirical conclusions were that, for some performance metrics (the cross-entropy or the Brier score for classification and the mean squared error for regression), ensembles are getting better all the time, but that is not necessarily the case for the classification error. Moreover, they provided theoretical support for some of these phenomena. 
Their paper was a key inspiration for this work, and our main contribution is to provide a unified view of their insights by essentially showing that ensembles are getting better all the time if, only if, the considered loss function is convex. 
Indeed, the cross-entropy, the Brier score, and the mean squared error are all convex losses (and always lead to monotonic improvements), whereas the classification error is not (and may lead to non-monotonicity). 

The relationship between the usefulness of ensembling and convexity was perhaps first highlighted by \citet{stone1961opinion} in the context of decision theory. 
It was further discussed by several papers in the 1990s \citep{mcnees1992uses,perrone1993improving,breiman1996bagging,george1999bayesian}. An interesting historical perspective on the question was provided by \citet[Section~6]{manski2010consensus}. However, the relationship between the monotonicity question and convexity appears to have been overlooked.

\paragraph{Monotonicity beyond ensembles}  
It is worth mentioning some related work on monotonicity in statistics and machine learning. A first intriguing phenomenon is the fact that the coverage probability of a confidence interval for the true parameter of a Bernoulli random variable is non-monotonic (see, \emph{e.g.}, \citealt{brown_et_al_2001}, Figure 1). 
In plain words, and perhaps counter-intuitively, increasing the number of observations does not guarantee that the probability for the confidence interval to contain the true parameter increases! 
Another related question is the one of the \emph{monotonicity of learning}: will a machine learning algorithm perform better when adding one new sample to the training set?
\review{\citet{devroye_gyorfi_lugosi_2013} call such class of models \emph{smart rules} predictors and notices that it is quite challenging to know for sure which models are ``smart.'' 
Indeed, most consistency results only provide an asymptotic statement in the sample size, with no monotonicity guarantee. 
More recently, this problem was cast as an open problem by \citet{viering2019open} at the 32nd Conference on Learning Theory (COLT). 
Progress on that front was reviewed by \citet{viering2022shape}.}
\review{Finally, the question of monotonicity also arises when looking at the variance of a model, as illustrated, for instance by \citet[Exercise~2.4]{rasmussen2006gaussian}, %
for Gaussian processes. 
}

\section{Background, notations, and examples}
\label{sec:setting}

\subsection{A general predictive framework}

We consider a predictive task where several methods output predictions $\hat{y}$, living in a convex set $C$. 
The quality of these predictions is evaluated using a loss function $L: C \rightarrow \mathbb{R}$. 
This broad setting contains several important cases, and we present below a few important examples. 
In most of these examples, $\hat{y}$ will be of the form $f(x)$, where $f$ is a \review{base model} and $x$ a given \review{fixed} data point. 
This means that $L(\yhat)$ is the loss of an \emph{individual prediction},  rather than the loss averaged over a dataset. 
Note that we require the prediction set to be convex because we want that averaged predictions remain inside $C$ (this is barely a restriction, since when the prediction set is nonconvex we can simply replace it by its convex hull). 

\textbf{Supervised learning} and \textbf{forecasting} are the main examples we are interested in. Consider a given data point $x \in \mathcal{X}$, with true label/response $y\in \mathcal{Y}$. The prediction will typically be $\hat{y} = f(x)$ where $f$ can be for instance a neural network, a random forest, or even a human expert opinion (\emph{e.g.}, a doctor diagnosing a patient based on their records $x$). 
In this setting, loss functions have the familiar form $L(\hat{y}) = \ell(\hat{y}, y)$ where $\ell: C \times \mathcal{Y} \rightarrow \mathbb{R}$ measures the mismatch between the true label and its prediction (see, \emph{e.g.}, \citealt{shalev2014understanding}).
As an example, for neural net classification with $n_\text{cl}$ classes, the prediction set $C$ will either be $\mathbb{R}^{n_\text{cl}}$ (in the absence of a final softmax layer, in this case $\hat{y}$ denotes the logits) or the ${n_\text{cl}}$-simplex ($\hat{y}$ will then be the output of the softmax layer). In both cases, $\ell$ will typically be the cross-entropy, defined in the case where $\hat{y}$ is the softmax output, as
\begin{equation}
L(\hat{y}) = \ell(\hat{y}, y) = \sum_{j=1}^{n_\text{cl}} -y_j \log \hat{y}_j
\, ,
\end{equation}
where the true label is represented by a one-hot encoding  $y= (y_1,\ldots,y_{n_\text{Cl}}) \in \{0,1\}^{n_\text{Cl}}$. 
Another common choice of loss function in that case is the classification error, also known as the $0-1$ loss. 
\review{Denoting $\text{argmax}(u)$ the index of the coordinate of $u$ with the largest value, the classification error can be written}
\begin{equation}
\label{eq:classif_error}
L(\hat{y}) = \ell(\hat{y}, y)  =\left\{\begin{matrix}
1 \; \text{ if } \; \text{argmax}(\hat{y}) \neq  \text{argmax}(y)\\ 
0 \; \text{ if } \; \text{argmax}(\hat{y}) =  \text{argmax}(y)
\, .
\end{matrix}\right.
\end{equation}

If the response set is continuous (\emph{e.g.}, if $\mathcal{Y} = \mathbb{R}$ as in a standard regression problem), the prediction set can be $\mathbb{R}$ if we are simply interested in predicting a point estimate of the response, then $\ell$ can be the squared error: $L(\hat{y}) = \ell(\hat{y}, y) = (\hat{y} - y)^2$.
If we are interested in the full conditional distribution of the response, $C$ can be defined as the set of probability densities over $\mathbb{R}$. 
The  loss $\ell$ can then be any scoring rule like the negative log-likelihood $- \log \hat{y} (y | x) $ or the continuous ranked probability score, see, \emph{e.g.}, \citet{gneiting2007strictly}. 
Other supervised examples with more exotic prediction spaces and loss functions include image segmentation or ordinal regression.%

 \textbf{Density estimation} is generally unsupervised, and not directly a predictive problem. 
 It is nevertheless a particular case of our framework. Here, $\hat{y}$ will be a probability density over the data space $\mathcal{X}$, for instance modelled by a mixture model or a deep generative model, which means that $C$ will be the set of all densities over $\mathcal{X}$. For a given $x \in \mathcal{X}$, a popular loss function is then the negative log-likelihood $L(\hat{y}) = - \log \hat{y}(x)$.  
 Similarly to the probabilistic regression example outlined above, any scoring rule can also be chosen (\emph{e.g.}, score matching and its variants). 
 More exotic losses include using statistical divergences between the distribution  $\hat{y}$ and the empirical distribution of a test or validation set. While ensembling is less common for density estimation than forecasting and supervised learning, it has been successfully used in several contexts. For instance, \citet{riesselman2018deep} used ensembles of deep generative models to model biological sequences and \citet{russell2015bayesian} and \citet{casa2021better} have used ensembles of Gaussian mixtures for density estimation and clustering.
 
\textbf{Statistical estimation} involves an unknown parameter $\theta^\star \in \Theta$. In this case, $C = \Theta$ and $\hat{y}$ can be any estimator of $\theta$.  A possible loss function in this example is given by $L(\hat{y}) = d( \hat{y}, \theta^\star)$, where $d$ is a distance over $\Theta$. Here again, the use of ensembles is less common than for supervised learning. Nonetheless, bagging and its variants have been thoroughly studied for statistical estimation (see, \emph{e.g.}, \citealt{buhlmann2002analyzing, friedman2007bagging,bach2008bolasso}).

\paragraph{Why is $L$ only a function of $\hat{y}$?} In many of these examples, the loss function depends on other quantities than the predictions (the  features, the true label...), yet we just denote it by $L(\yhat)$. 
Our analysis aims at studying the impact of the predictions on the loss, therefore we keep all these other quantities fixed, and vary only the predictions. This is why we only model our loss function $L$ as a function of the predictions $\hat{y}$. In particular, this means that our results typically concern the loss evaluated for a single given data point and a single ground truth label. Of course, we are also often interested in evaluating the loss not only on individual data points, but on batches or entire data sets. Our results on individual data points can then be extended by averaging them, as we discuss further in Section \ref{sec:whichresults?}.

\subsection{Several predictions, and how to average them}

From now on, we consider that we are given $K \in \mathbb{N}^\star$ different predictions, 
named $\hat{y}_1,\ldots,\hat{y}_K \in C$, that we wish to aggregate. 

\subsubsection{Where do our predictions \texorpdfstring{$\hat{y}_1,\ldots,\hat{y}_K$}{} come from?}
\label{sec:where_do_they_come_from}

A fundamental example to keep in mind is the situation where the predictions come from slight variations from the same model. 
In Breiman's \citeyearpar{breiman1996bagging} \emph{bagging}, each $\yhat_k$ comes from using the same algorithm on a different bootstraped version of the original data.
This is also the cornerstone of random forests, for which each $\yhat_k$ is the output of a single (random) decision tree \citep{breiman2001random}. 
Each tree has a different structure coming from the randomness of the tree construction procedure, leading to distinct, random $\yhat_k$s. 
Another important example, pioneered by \citet{hansen1990neural} and popularised by \citet{lakshminarayanan2017simple}, uses as $\yhat_k$s the outputs of neural networks trained with different weights initialisations. 
Another popular way of creating ensembles of deep nets is via \emph{dropout} \citep{srivastava2014dropout,baldi2014dropout}: each $\yhat_k$ is obtained by sampling a different dropout mask and applying it to the same backbone network. \citet{gal2016dropout} call such a technique Monte Carlo dropout.

In all the examples presented in the previous paragraph, the $\hat{y}_1,\ldots,\hat{y}_K$ are i.i.d. random variables. 
Assumptions of this sort will be required for our proofs.
Indeed, with the exception of Lemma~\ref{lem:triangle}, we will always require our predictions to be random variables with some (weak) dependence structure. 
We refer to Section~\ref{sec:exchangeable} for a simple example of what can go wrong when there is too much dependency between the individual models. 

We want to emphasise that our framework also encompasses situations where we are given a few fully deterministic predictions (for instance, the forecasts of several different physical models). One way to see them as random predictions and to make them amenable to our theory is to assume that the ordering of the $\hat{y}_1,\ldots,\hat{y}_K$ does not matter, or equivalently that we have no knowledge of how these models fare against each other. This can be formalised by assuming that we do not observe $\hat{y}_1,\ldots,\hat{y}_K$  but $\hat{y}_{\sigma(1)},\ldots,\hat{y}_{\sigma(K)}$, where $\sigma$ is a uniformly sampled permutation of $\{1,\ldots,K\}$. Because $\sigma$ is random, $\hat{y}_{\sigma(1)},\ldots,\hat{y}_{\sigma(K)}$ is also random. While not i.i.d. in general, this \emph{randomly reordered ensemble} satisfies another weaker condition called \emph{exchangeability}. A sequence of random variables is exchangeable when any of its reorderings has the same distribution, we will give a formal definition in Eq.~\eqref{eq:echnangeability}.

\subsubsection{How to aggregate predictions?}

Perhaps the simplest idea to aggregate predictions is to take their (unweighted) mean, namely to consider the prediction
\begin{equation}
\ybar_K =  \frac{1}{K} \sum_{k=1}^K \hat{y}_k
\, .
\end{equation}
 Note that we are allowed to consider such an aggregation since $C$ is convex, which implies that the ensemble prediction $\ybar_K$ indeed belongs to $C$. This simple aggregation scheme is used by many ensembles, for instance bagging (and in particular random forests), deep ensembles, or Monte Carlo dropout. 
 Many other aggregation schemes have been proposed (see, \emph{e.g.}, \citealt{kuncheva2014combining}, Chapters~4 and~5), but our analysis is fundamentally tied to the simple averaging technique, so we will not discuss them further.  
 A convenient property of $\ybar_K$ is that its asymptotic behaviour is quite intuitive. Indeed, when $\hat{y}_1,\ldots,\hat{y}_K$ are i.i.d. with mean $ \ybar_\infty = \mathbb{E}  \left[ \hat{y}_1 \right] = \cdots =  \mathbb{E}  \left[ \hat{y}_K \right]$, then the law of large numbers implies that $\ybar_K \longrightarrow \ybar_\infty$ when  $K \rightarrow +\infty$. 
 We will refer to $\ybar_\infty$ as the \emph{asymptotic prediction} of the ensemble.

Several other aggregation schemes can, in some settings, be equivalent to the simple average. For instance, the geometric mean can be written as a regular mean in log scale, since $\sqrt[K]{\hat{y}_1\cdots \hat{y}_K} = \exp \left( \frac{1}{K} \sum_{k=1}^K \log \hat{y}_k\right)$. The popular method of \emph{majority voting} can also be seen as a form of standard average.
Indeed, consider for instance a binary classification problem for which several votes $\hat{y}_1,\ldots,\hat{y}_K \in \{0,1 \}$ are cast for a true class $y \in \{0,1\}$. The prediction space $ \{0,1 \}$ is nonconvex, but we may choose to see the votes as elements of its convex hull $C = [0,1]$. 
A natural choice of loss function is then the classification error $L(\hat{y})$, defined by Eq.~\eqref{eq:classif_error}. 
The most natural way of combining these votes is to use majority voting, considering as final choice $\tilde{y}_\text{MV,$K$} = \mathbf{1}(\frac{1}{K} \sum_{k=1}^K\hat{y}_k>0.5)$. It turns out this majority vote will have the same loss as the ensemble prediction: $L(\tilde{y}_\text{MV,$K$} ) = L (\ybar_K ) $, which means that majority voting is equivalent to simple averaging.

\subsubsection{What sort of results are we going to prove?}
\label{sec:whichresults?}

A natural way of looking at the performance of the random ensemble prediction is to study its expectation. 
Our goal will be to prove that, under various settings, $\mathbb{E}[ L (\ybar_K)]$ is monotonic, where $\mathbb{E}$ is taken with respect to the randomness associated to the predictions. 
Sometimes, we will merely show that $\mathbb{E}[ L (\ybar_K)]$ is \emph{eventually monotonic}, which means that there exists an integer $K_0$ such that $(\mathbb{E}[ L (\ybar_K)])_{K \geq K_0}$ is monotonic.

A few comments about what this means are in order. 
The quantity $\mathbb{E}[ L (\ybar_K)]$ corresponds to the loss of a \emph{single} prediction problem, \emph{e.g.}, predicting the label of a single image, averaged over the randomness of the ensemble. This is in contrast with many machine learning results that average over a dataset. Nevertheless, the monotonicity of individual losses will imply monotonicity for losses averaged over datasets by simply applying our results to every single point of the dataset and using the fact that an average of decreasing (respectively increasing) sequences remains decreasing (respectively increasing). 
This is illustrated in Fig.~\ref{fig:crossent_derma}: our results imply that the cross-entropy loss decreases for every single test point, and in turn that the whole test cross-entropy decreases.
Things are slightly different in Fig.~\ref{fig:accuracy_derma}: we cannot average our results over the whole test set (since it is composed of points whose accuracy eventually grows and points whose accuracy eventually decreases). However, we can split the test dataset into two sub-datasets for which we can apply our theorems, leading to the middle and right panels of Fig.~\ref{fig:accuracy_derma}.

\section{Ensembles get better for convex losses}
\label{sec:convex-losses}

In this section, the loss function $L: C \rightarrow \mathbb{R}$ is always assumed to be convex. In a supervised context where $L(\hat{y}) = \ell(\hat{y}, y)$, this means that $\ell$ is convex in its first argument. In classification, this is for instance the case of the cross-entropy, the Brier score, or the hinge loss. In regression, convex losses include the mean absolute error, the mean squared error, or Huber's \citeyearpar{huber1964robust} robust loss. Examples in image segmentation include the cross-entropy or the Lov{\'a}sz-softmax loss \citep{berman2018lovasz}. Another ubiquitous example of convex loss is the negative log-likelihood, both for density estimation and probabilistic prediction. In decision theory, convex losses correspond to concave utilities and to risk aversion (see \emph{e.g.} \citealt{parmigiani2009decision}, Chapter 4). %

\subsection{Warm-up: An ensemble predicts better than a single model}

Let us begin by reviewing a few classical arguments in favour of ensembling with a convex loss, based on Jensen's inequality. The simple and compelling reasoning below, that dates back to early works on forecasting at least \citep[Section 3.4]{winkler1971probabilistic}, was called the ‘‘algebraic wisdom of crowds'' by \citet{manski2016interpreting}. Similar arguments were also given, for instance, by \citet{mcnees1992uses} or \citet[Section 4]{breiman1996bagging} in the specific cases of the squared and absolute losses and by \citet{perrone1993improving} and  \citet{george1999bayesian} for a generic convex loss. 
The convexity of $L$ directly implies that, for all $\yhat_1,\ldots,\yhat_K \in C$,
\begin{equation}
\label{eq:warmup-1sample} 
 L (\ybar_K) \leq
	  \frac{1}{K} \sum_{k=1}^K L(\hat{y}_k)  \leq \max_{k \in \{1,\ldots,K\} } L(\hat{y}_k)
   \, .
\end{equation}
Therefore the ensemble will perform better, on average, than any of its components randomly chosen (with uniform probability $1/K$). In particular, the ensemble will \emph{always} perform better than its worst component.

When all elements of the ensemble are identically distributed, so are $L(\hat{y}_1),\ldots, L(\hat{y}_K)$ and 
integrating the left-hand side of Eq.~\eqref{eq:warmup-1sample}
leads to
\begin{equation}
\label{eq:simple-jensen}
\mathbb{E}  \left[ L\left(\ybar_K \right) \right] \leq \mathbb{E}  \left[ L(\hat{y}_1) \right] = \cdots =  \mathbb{E}  \left[ L(\hat{y}_K) \right]
\, ,
\end{equation}
assuming that all involved expectations exist.
This means that an ensemble of identically distributed predictors performs better than a single one of them, on average. If we further assume that the ensemble has mean $ \ybar_\infty = \mathbb{E}  \left[ \hat{y}_1 \right] = \cdots =  \mathbb{E}  \left[ \hat{y}_K \right]  $, then 
\begin{equation}
	\label{eq:infinite}
	L(\ybar_\infty) \leq \mathbb{E}  \left[ L\left(\ybar_K \right) \right],
\end{equation}
due to Jensen's inequality and the fact that $\mathbb{E}[\hat{y}_k] = \ybar_\infty$ for all $k$. We chose to denote this mean $\ybar_\infty$ because the law of large numbers implies that $\ybar_K \longrightarrow \ybar_\infty$ almost surely when  $K \rightarrow \infty$, if we further assume that the $\yhat_ks$ are independent. 
According to Eq.~\eqref{eq:infinite}, this ``infinite" ensemble that would be obtained by averaging over all possible predictions, is better than any of its finite counterparts. However, this does not say how these finite ensembles fare against each other. As we shall see now, this question can also be addressed by playing around with Jensen's inequality.

\subsection{A first monotonicity lemma}

All the new results in this section are based on a simple identity, that expresses $\ybar_K$ as a convex combination of smaller ensembles. The starting point is to notice the following equality between $K$-dimensional vectors:
\begin{align}
	\begin{bmatrix}
		1 \\
		1 \\
		\vdots \\ 1\\
		1
	\end{bmatrix} &= \frac{1}{K-1} \left(\begin{bmatrix}
		0 \\
		1 \\
		\vdots \\ 1\\
		1
	\end{bmatrix} + \begin{bmatrix}
		1 \\
		0 \\
		\vdots \\ 1\\
		1
	\end{bmatrix} + \cdots +
	\begin{bmatrix}
		1 \\
		1 \\
		\vdots \\ 0\\
		1
	\end{bmatrix} +
	\begin{bmatrix}
		1 \\
		1 \\
		\vdots \\ 1\\
		0
	\end{bmatrix} \right).
\end{align}
This directly leads to
\begin{equation}
	\label{eq:triangle}
	\frac{1}{K} \sum_{k = 1}^{K} \hat{y}_k = \frac{1}{K} \sum_{j = 1}^{K} \frac{1}{K-1}  \sum_{k \neq j}  \hat{y}_k
\, ,
\end{equation}
which, combined with the convexity of $L$, leads to the following lemma.

\begin{lemma}[monotonicity lemma]
\label{lem:triangle}
Assume that $L$ is a convex function. 
Then, for any $\hat{y}_1,\ldots,\hat{y}_K \in C$, we have
\begin{equation} 
L\left( \frac{1}{K} \sum_{k = 1}^{K} \hat{y}_k \right) \leq \frac{1}{K} \sum_{j = 1}^{K} L\left(\frac{1}{K-1}  \sum_{k \neq j}  \hat{y}_k\right)
\, .
\end{equation}
\end{lemma}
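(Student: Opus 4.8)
The plan is to combine the vector identity stated just before the lemma with the convexity of $L$, applied to a particular convex combination. Starting from Eq.~\eqref{eq:triangle}, I would write
\begin{equation*}
\frac{1}{K} \sum_{k=1}^{K} \hat{y}_k
= \frac{1}{K} \sum_{j=1}^{K} \left( \frac{1}{K-1} \sum_{k \neq j} \hat{y}_k \right)
\, ,
\end{equation*}
which exhibits the full ensemble prediction $\ybar_K$ as the (uniform) average of the $K$ ``leave-one-out'' sub-ensemble predictions $\ybar_K^{(j)} \defeq \frac{1}{K-1}\sum_{k\neq j}\hat{y}_k$. Each $\ybar_K^{(j)}$ lies in $C$ because it is a convex combination of points of $C$, and the uniform average of the $\ybar_K^{(j)}$ is again a convex combination. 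Hence Jensen's inequality for the convex function $L$ applies directly.

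The key step is then a single application of convexity: since $\ybar_K = \sum_{j=1}^K \frac{1}{K} \ybar_K^{(j)}$ with nonnegative weights $1/K$ summing to one, convexity of $L$ gives
\begin{equation*}
L\left( \frac{1}{K} \sum_{k=1}^{K} \hat{y}_k \right)
= L\left( \sum_{j=1}^{K} \frac{1}{K}\, \ybar_K^{(j)} \right)
\leq \sum_{j=1}^{K} \frac{1}{K}\, L\!\left( \ybar_K^{(j)} \right)
= \frac{1}{K} \sum_{j=1}^{K} L\!\left( \frac{1}{K-1} \sum_{k\neq j} \hat{y}_k \right)
\, ,
\end{equation*}
which is exactly the claimed inequality. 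No integration or probabilistic argument is needed here; the statement is purely deterministic (it holds for \emph{any} fixed $\hat{y}_1,\dots,\hat{y}_K \in C$), so exchangeability or independence do not enter.

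There is essentially no serious obstacle: the only things to be careful about are (i) verifying the vector identity, i.e.\ that the all-ones vector equals $\frac{1}{K-1}$ times the sum of the $K$ vectors each having a single zero coordinate, which is immediate since each coordinate appears with a zero exactly once among the $K$ summands and with a one the remaining $K-1$ times; and (ii) noting that $K \geq 2$ is implicitly required for the sub-ensembles to be well defined (the case $K=1$ being vacuous or needing separate mention). If I wanted to foreshadow the later use of the lemma, I would remark that taking expectations of both sides when the $\hat{y}_k$ are exchangeable makes every term $\Expec[L(\ybar_K^{(j)})]$ on the right-hand side equal to $\Expec[L(\ybar_{K-1})]$, which is how the monotonicity $\Expec[L(\ybar_K)] \leq \Expec[L(\ybar_{K-1})]$ will be derived in Theorem~\ref{th:exch}; but that step is not part of proving this lemma.
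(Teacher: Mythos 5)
Your proof is correct and follows exactly the paper's argument: rewrite $\ybar_K$ via Eq.~\eqref{eq:triangle} as the uniform average of the $K$ leave-one-out sub-ensemble predictions, then apply Jensen's inequality for the convex $L$ to that convex combination. Your side remarks (that $K\geq 2$ is implicitly needed and that the statement is purely deterministic) are accurate but do not change the substance.
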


This inequality may already be interpreted as a monotonicity result: removing an element of the ensemble uniformly at random will hurt, on average. Note that this comes \emph{without any assumption on the ensemble itself}, which could be composed of very different and potentially deterministic models. 
To prove stronger results, we need to make additional assumptions on the ensemble. 
Our assumptions will be probabilistic, \review{either requiring independent, identically-distributed predictions, or a weakened version of this assumption called exchangeability}.
\review{They} will translate the fact that we suspect that, \emph{a priori}, all elements of the ensembles perform \review{equally well}.

\subsection{Exchangeable ensembles are getting better all the time}
\label{sec:exchangeable}

We saw in the previous section (Eq.~\ref{eq:infinite}) that assuming that the ensemble is identically distributed led to an asymptotic form of monotonicity: the infinite ensemble $\ybar_\infty$ performs better than any finite ensemble, which in turn performs better than any single model, because of Eq.~\eqref{eq:simple-jensen}. 
However, the identically distributed assumption alone will not take us very far. 
Indeed, let us consider the following example: 
an ensemble of three models such that $\hat{y}_1$ and $\hat{y}_2$ are independent and identically distributed, and $\hat{y}_3 = \hat{y}_1$. 
Then, one can show that the ensemble gets \emph{worse} in average when adding its third component:
\begin{equation}
\label{eq:contrex-id}
\mathbb{E} \left[ L\left( \frac{\hat{y}_1+\hat{y}_2}{2} \right) \right] \leq \mathbb{E} \left[ L\left( \frac{\hat{y}_1+\hat{y}_2 + \hat{y}_3}{3} \right) \right]
\, .
\end{equation}
This directly follows from the identity
\begin{equation}
	\frac{\hat{y}_1+\hat{y}_2}{2} = \frac{1}{2} \left(   \frac{2 \hat{y}_1 + \hat{y}_2}{3}  +\frac{\hat{y}_1 + 2\hat{y}_2}{3} \right),
\end{equation}
combined with the Jensen's inequality and the fact that
\begin{equation}
	\hat{y}_1+\hat{y}_2+\hat{y}_3 = 2 \hat{y}_1 + \hat{y}_2 =_d  2 \hat{y}_2 + \hat{y}_1,
\end{equation}
where $=_d$ denotes equality in distribution.

This simple counter-example motivates the use of a stronger assumption than identically distributed. A natural choice would be to assume that the elements of the ensembles are i.i.d., which is often the case in practice (for instance in random forests or deep ensembles). We will rely on an assumption that is actually weaker than i.i.d. but stronger than i.d.: \emph{exchangeability}. We remind that a sequence of random variables $\hat{y}_1,\ldots,\hat{y}_K$ is exchangeable when, for any permutation $\sigma \in {S}_K$ of the indices,
    \begin{equation}
     \label{eq:echnangeability}   (\hat{y}_1,\ldots,\hat{y}_K) =_d (\hat{y}_{\sigma(1)},\ldots,\hat{y}_{\sigma(K)}).
    \end{equation}
For more insights 
on exchangeability, we recommend \citet{diaconis1988recent}. 
Intuitively, an ensemble is exchangeable when the ordering of its members is not important. 

Many popular ensembles are exchangeable. This is of course the case of all i.i.d. ensembles (including bagging, random forests, random projections, deep ensembles), but there also exists non-i.i.d. exchangeable ensembles. One example that was sketched at the end of Section~\ref{sec:where_do_they_come_from} is the one of a randomly reordered ensemble. Another general example is the one of cross-validated committees (\citealp{rokach2010ensemble}, Section~2.2.4) that train different models on different exchangeables subsets of the training data. Political scientists have also argued that exchangeability is a more sensible assumption than i.i.d. for jury theorems \emph{à la} Condorcet \citep{ladha1993condorcet,peleg2012extending}.

However, there are also ensembles for which the exchangeability assumption is inherently wrong. 
This is notably the case of boosting, or any similar technique that trains models iteratively, making the construction of the $k$th base model dependent on the $k-1$ previous models, and hereby making their ordering essential. 
This is also the case when we have prior knowledge that some models are much better than others. In these two cases, using a weighted average instead of just the usual mean $\ybar_K$ is more advisable.

Under this assumption of exchangeability, we can show the following result, that has a simple interpretation: \emph{when the loss is convex and the ordering does not matter, ensembles monotonically get better, on average}.

\begin{theorem}[monotonicity for convex losses, \citealp{marshall1965inequality}]
\label{th:exch}
	If the loss $L$ is convex and $\hat{y}_1,\ldots,\hat{y}_K$ are exchangeable, then,
	\begin{equation}
		\label{eq:exch_mono}
		\mathbb{E} \left[ L\left( \ybar_K \right)   \right] \leq \mathbb{E} \left[ L\left( \ybar_{K-1} \right)   \right].
	\end{equation}
\end{theorem}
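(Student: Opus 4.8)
The plan is to combine the deterministic inequality of Lemma~\ref{lem:triangle} with exchangeability in order to replace each of the $K$ leave-one-out sub-ensembles on its right-hand side by a single $(K-1)$-ensemble. First I would apply Lemma~\ref{lem:triangle}, which holds pointwise for every realisation $\hat{y}_1,\ldots,\hat{y}_K \in C$ because $L$ is convex:
\begin{equation*}
L\left(\ybar_K\right) \le \frac{1}{K}\sum_{j=1}^{K} L\left(\frac{1}{K-1}\sum_{k\neq j}\hat{y}_k\right).
\end{equation*}
Taking expectations on both sides --- with the standing assumption, exactly as in Eq.~\eqref{eq:simple-jensen}, that all the expectations involved exist --- linearity of the expectation gives
\begin{equation*}
\mathbb{E}\left[L\left(\ybar_K\right)\right] \le \frac{1}{K}\sum_{j=1}^{K}\mathbb{E}\left[L\left(\frac{1}{K-1}\sum_{k\neq j}\hat{y}_k\right)\right].
\end{equation*}

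Second, I would show that each summand on the right equals $\mathbb{E}[L(\ybar_{K-1})]$. Fix $j \in \{1,\ldots,K\}$ and choose any permutation $\sigma \in \mathfrak{S}_K$ with $\{\sigma(1),\ldots,\sigma(K-1)\} = \{1,\ldots,K\}\setminus\{j\}$. Exchangeability, Eq.~\eqref{eq:echnangeability}, gives $(\hat{y}_{\sigma(1)},\ldots,\hat{y}_{\sigma(K)}) =_d (\hat{y}_1,\ldots,\hat{y}_K)$, and restricting to the first $K-1$ coordinates yields $(\hat{y}_k)_{k\neq j} =_d (\hat{y}_1,\ldots,\hat{y}_{K-1})$. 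Since $(z_1,\ldots,z_{K-1}) \mapsto L\left(\frac{1}{K-1}\sum_{i=1}^{K-1} z_i\right)$ is a fixed measurable map (convexity of $L$ gives enough regularity), equality in distribution transfers to equality of expectations, so $\mathbb{E}\left[L\left(\frac{1}{K-1}\sum_{k\neq j}\hat{y}_k\right)\right] = \mathbb{E}[L(\ybar_{K-1})]$ for every $j$. Substituting this back,
\begin{equation*}
\mathbb{E}\left[L\left(\ybar_K\right)\right] \le \frac{1}{K}\sum_{j=1}^{K}\mathbb{E}\left[L\left(\ybar_{K-1}\right)\right] = \mathbb{E}\left[L\left(\ybar_{K-1}\right)\right],
\end{equation*}
which is precisely Eq.~\eqref{eq:exch_mono}.

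I do not expect a serious obstacle here: the only points needing care are the measurability/integrability bookkeeping (we inherit the existence-of-expectations hypothesis already used for Eq.~\eqref{eq:simple-jensen}, and convexity of $L$ on the convex set $C$ gives the needed regularity), together with the elementary but essential observation that any $K-1$ of $K$ exchangeable random variables are themselves exchangeable and equidistributed with $(\hat{y}_1,\ldots,\hat{y}_{K-1})$. It is worth emphasising that, in contrast with Lemma~\ref{lem:triangle}, this second step genuinely uses exchangeability and not merely identical distribution, which is consistent with the counterexample in Eq.~\eqref{eq:contrex-id}.
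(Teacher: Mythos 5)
Your proof is correct and takes essentially the same approach as the paper: integrate Lemma~\ref{lem:triangle} and then use exchangeability to identify each leave-one-out term with $\mathbb{E}[L(\ybar_{K-1})]$. The only difference is that you spell out the permutation argument behind the exchangeability step, which the paper leaves implicit.
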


\begin{proof}
	From Lemma \ref{lem:triangle}, we have, by integrating
	\begin{equation} 
		\label{eq:proof_eq}
		\mathbb{E} \left[ L\left( \frac{1}{K} \sum_{k = 1}^{K} \hat{y}_k \right) \right] \leq \frac{1}{K} \sum_{j = 1}^{K} \mathbb{E} \left[L\left(\frac{1}{K-1}  \sum_{k \neq j}  \hat{y}_k\right)\right]
  \, .
	\end{equation}
But, because $\yhat_1,\ldots,\yhat_K$ are exchangeable, for any $j \in \{1,\ldots, K\}$,
	\begin{equation}
		\mathbb{E} \left[	L\left(\frac{1}{K-1}  \sum_{k \neq j}  \hat{y}_k\right)\right] = \mathbb{E} \left[L\left( \frac{1}{K-1} \sum_{k = 1}^{K-1} \hat{y}_k \right) \right]
  \, ,
	\end{equation}
thus all the terms of the sum of the right-hand side of Eq.~\eqref{eq:proof_eq} are equal to the right-hand side of  Eq.~\eqref{eq:exch_mono}, which concludes the proof.
\end{proof}

A version of Theorem~\ref{th:exch} was first proved by \citet{marshall1965inequality} in a very different context, using the concept of \emph{majorisation} (see, \emph{e.g.},  \citealt{marshall2011inequalities}). In the machine learning community, a similar result was independently rediscovered and popularised in the context of variational inference for deep generative models by \citet{burda2016}. The connection between majorisation and the results of \citet{marshall1965inequality} and \citet{burda2016} was highlighted and discussed by \citet{mattei2022}. \citet{struski2022bounding} provided an alternative proof of the result of \citet{burda2016} that is similar to our short proof of Theorem~\ref{th:exch}.

None of the works mentioned in the previous paragraph are related to ensembles. However, in the context of ensembles, several particular cases of Theorem~\ref{th:exch} for specific loss functions have been previously proved. In particular,  \citet{probst2018tune} dealt with two cases: when $L$ is the square loss and when $L$ is the cross entropy and $K$ is large enough. Moreover, \citet{noh2017regularizing} noticed that the result of \citet{burda2016} could be applied to ensembles when $L$ is the negative log-likelihood.

\subsection{Strict improvements for strongly convex losses}

The previous result show that the average loss is non-increasing. 
Under which conditions is it \emph{strictly} decreasing? 
This depends both on the properties of the loss (for instance, if $L$ is constant, no ensemble can strictly improve) and of the ensemble (an ensemble containing identical copies of the same predictor will have constant loss). We will see now that two simple assumptions are sufficient to ensure strict improvements:  strong convexity of the loss, and existence of a nonsingular covariance of the ensemble. 
In this subsection, we further assume that the prediction set $C$ is equipped with a norm $\| \cdot \|$. 

\begin{definition}[strong convexity]
Let $\mu > 0$.	
A function $L: C \rightarrow \mathbb{R}$ is $\mu$-strongly convex when, for all $x,y \in C$, we have
\begin{equation}
L(tx+(1-t)y) \leq tL(x) +(1-t)L(y) - \mu t(1-t) \|x - y \|^2
\, .
\end{equation}
\end{definition}

Strong convexity leads to a strengthening of Jensen's inequality (see, \emph{e.g.}, \citealt{nikodem2014strongly}, Theorem~2) that we can combine with our key identity Eq.~\eqref{eq:triangle} to prove strict improvements for i.i.d. ensembles.

\begin{theorem}[strict monotonicity for strongly convex losses] \label{th:strong}
Assume that the ensemble is exchangeable and has second order moments.
Let us write $\Sigma \succ 0$ for its covariance matrix. 
Define $\rho \defeq \textup{tr}(\textup{Cov}(\yhat_1,\yhat_2))/ \textup{tr}(\Sigma)$ its correlation coefficient.
Then, if the loss $L$ is $\mu$-strongly convex, then
	\begin{equation}		\label{eq:strong}
		\mathbb{E} \left[ L\left( \ybar_K \right)   \right] \leq \mathbb{E} \left[ L\left( \ybar_{K-1}\right)   \right] -  \frac{\mu}{K(K-1)} (1- \rho) \textup{tr} (\Sigma).
	\end{equation}
\end{theorem}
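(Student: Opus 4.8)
The plan is to re-run the proof of Theorem~\ref{th:exch}, replacing plain Jensen's inequality by its strongly convex refinement. Writing $z_j \defeq \frac{1}{K-1}\sum_{k\neq j}\hat{y}_k$ for the $K$ leave-one-out sub-ensembles, the identity Eq.~\eqref{eq:triangle} says that $\ybar_K = \frac1K\sum_{j=1}^K z_j$ is their uniform average. The strong-convexity version of Jensen's inequality (\citealt{nikodem2014strongly}, Theorem~2), applied to these $K$ points, gives the pointwise bound
\begin{equation*}
L(\ybar_K) \;\leq\; \frac1K\sum_{j=1}^K L(z_j) \;-\; \frac{\mu}{K^2}\sum_{1\le j<l\le K}\norm{z_j - z_l}^2 .
\end{equation*}
The first term on the right is exactly what was integrated in Theorem~\ref{th:exch}: each $z_j$ is an average of $K-1$ of the exchangeable predictions, hence $z_j =_d \ybar_{K-1}$, so after taking expectations $\frac1K\sum_j \mathbb{E}[L(z_j)] = \mathbb{E}[L(\ybar_{K-1})]$. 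It therefore only remains to evaluate the expectation of the correction term.

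Here the key observation is the telescoping identity $z_j - z_l = \frac{1}{K-1}(\hat{y}_l - \hat{y}_j)$, valid for any $j\neq l$, so that $\norm{z_j - z_l}^2 = \frac{1}{(K-1)^2}\norm{\hat{y}_j - \hat{y}_l}^2$ and, by exchangeability, $\mathbb{E}\norm{z_j - z_l}^2 = \frac{1}{(K-1)^2}\,\mathbb{E}\norm{\hat{y}_1 - \hat{y}_2}^2$ for every pair. Taking $\norm{\cdot}$ to be the Euclidean norm and expanding around the common mean $m = \mathbb{E}[\hat{y}_1]$, the cross term produces $\mathbb{E}\inner{\hat y_1 - m}{\hat y_2 - m} = \textup{tr}(\textup{Cov}(\hat y_1,\hat y_2)) = \rho\,\textup{tr}(\Sigma)$ while each square term gives $\mathbb{E}\norm{\hat y_i - m}^2 = \textup{tr}(\Sigma)$, whence $\mathbb{E}\norm{\hat{y}_1 - \hat{y}_2}^2 = 2(1-\rho)\,\textup{tr}(\Sigma)$.

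Collecting the $\binom{K}{2} = K(K-1)/2$ identical terms in the sum then yields
\begin{equation*}
\frac{\mu}{K^2}\sum_{j<l}\mathbb{E}\norm{z_j - z_l}^2 \;=\; \frac{\mu}{K^2}\cdot\frac{K(K-1)}{2}\cdot\frac{2(1-\rho)\,\textup{tr}(\Sigma)}{(K-1)^2} \;=\; \frac{\mu}{K(K-1)}(1-\rho)\,\textup{tr}(\Sigma),
\end{equation*}
which is exactly the advertised gap. I do not expect a genuine obstacle here: the argument is essentially bookkeeping on top of Lemma~\ref{lem:triangle}. The two points requiring a little care are (i) checking that the discrete strongly convex Jensen inequality holds in the required ``pairwise'' form — it does, since it reduces to the two-point definition by writing $L = (L - \mu\norm{\cdot}^2) + \mu\norm{\cdot}^2$, applying ordinary Jensen to the convex part, and simplifying $\sum_i t_i\norm{w_i - \bar w}^2 = \sum_{i<j} t_i t_j\norm{w_i - w_j}^2$ (with $\bar w$ the barycenter); and (ii) making sure the norm in the strong-convexity definition, $\textup{tr}(\Sigma)$, and $\rho$ are mutually consistent, i.e. that $\mathbb{E}\norm{\hat y_1 - m}^2 = \textup{tr}(\Sigma)$, which holds once $\norm{\cdot}$ is the Euclidean norm on $C \subseteq \Reals^d$.
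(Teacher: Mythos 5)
Your proof is correct and follows essentially the same route as the paper: apply the strongly convex refinement of Jensen's inequality (Nikodem's Theorem~2) to the leave-one-out sub-ensembles from Eq.~\eqref{eq:triangle} and integrate using exchangeability. The only difference lies in evaluating the correction term — the paper expands $\mathbb{E}\bigl\|\ybar_{K-1} - \ybar_K\bigr\|^2$ via Bienaymé's identity, whereas your pairwise form of the inequality, combined with the telescoping identity $\tfrac{1}{K-1}\sum_{k\neq j}\hat{y}_k - \tfrac{1}{K-1}\sum_{k\neq l}\hat{y}_k = \tfrac{1}{K-1}(\hat{y}_l - \hat{y}_j)$, arrives at the same bound more transparently.
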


The proof of Theorem~\ref{th:strong} can be found in Appendix~\ref{app:strong}. 
For i.i.d. ensembles like random forests of deep ensembles, we have $\rho = 0$ which slightly simplifies the bound. 
This bound is tight when $L$ is the square loss and $\hat{y}_1, \ldots, \hat{y}_K$ are i.i.d. standard Gaussian. 
When $\yhat_1 = \cdots = \yhat_K$, $\rho = 1$, and the bound is also tight. 

This result formalises the fact that we can expect the loss improvements of ensembling to be larger as the variance of the ensemble grows. One may wonder what happens in the extreme case where the variance is infinite. Perhaps surprisingly, nothing clear can be said in this case. 
Indeed, take for instance $\hat{y}_1,\ldots,\hat{y}_K$ to be i.i.d. Cauchy random variables. Then, $\ybar_K$ is also Cauchy distributed, which means that the performance of the ensemble will be constant for any loss function, even though the variance is infinite.

\section{What happens for nonconvex losses?}
\label{sec:non-convex-losses}

For convex losses, we saw that the theory is quite simple and compelling: the average performance of ensembles is always improving. 
The picture becomes much more complex when the loss is no longer assumed to be convex. 
Indeed, we saw in Section~\ref{sec:intro-medical} that there was a crisp difference between the cross-entropy loss and the accuracy.

As we will see, there is nonetheless a general pattern for nonconvex losses. Some ensembles get better: they are the ones that, on average, ``get it right" (in the sense that $\ybar_\infty$ is a ``good prediction"), and correspond for instance to the ones on the middle panel of Fig.~\ref{fig:accuracy_derma}. On the other hand, when  $\ybar_\infty$ is a ``bad prediction,'' we will see that ensembling hurts (as it does on the right panel of Fig.~\ref{fig:accuracy_derma}). 

There are two sorts of nonconvex loss that will interest us. The first ones are the sufficiently smooth ones. Their monotonicity will be analysed by Taylor-expanding the loss around $\ybar_\infty$. We will then study the most popular nonconvex loss: the classification error, that will prove challenging to tackle due to its discontinuity. 
Unlike results for the convex case, which were true for any $K$, all results in this section will be asymptotic, in the sense that we will prove that $\mathbb{E}[ L(\ybar_K)]$ is \emph{eventually monotonic}  (\emph{i.e.}, that it is monotonic for $K$ large enough).

\subsection{Smooth nonconvex losses}
\label{sec:smoothnoncvx}

A first case of nonconvex loss that is easy to tackle is the one of a \emph{concave} loss. 
Indeed, if $L$ is concave, then $-L$ is convex which, by virtue of Theorem~\ref{th:exch}, implies that $\mathbb{E}[-L(\ybar_K)]$ decreases and thus that $\mathbb{E}[L(\ybar_K)]$ increases: for a concave loss, ensembles are getting worse all the time!
However, to the best of our knowledge, concave losses are never used in statistics and machine learning. 
\citet{laan2017rescuing} considered the example of a ``square-root cost" $L(\hat{y}) = \sqrt{ |y - \hat{y} |}$ which is locally concave everywhere except at zero, but not \emph{globally} concave (which prevents us from using Theorem~\ref{th:exch}).

Although globally concave loss are not used, \emph{some losses are locally concave in some part of the prediction space}, and locally convex on another. When these losses are smooth enough, these concave/convex parts can be recognised by checking if the Hessian of the loss is negative/positive definite. 

\begin{figure}[t!]
\centering
\includegraphics[width = 0.98\columnwidth]{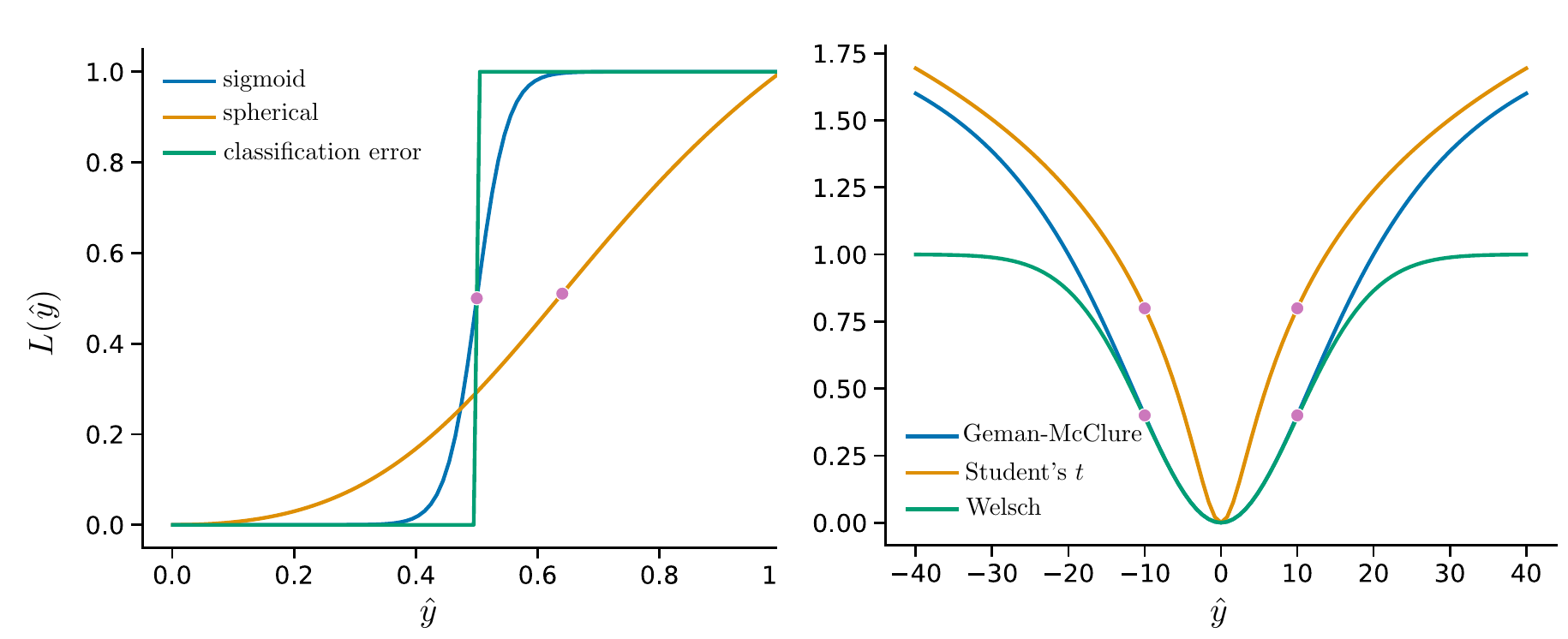}
\caption{\label{fig:sigmoid}Nonconvex loss functions for binary classification (left panel) and regression (right panel). In both case the true label/response is $y=0$. %
For smooth losses, purple dots denote inflexion points: the frontier between the concave part of the loss (that corresponds to ``right" predictions) and the concave one (that corresponds to ``wrong" ones).}
\end{figure}

Most of these smooth nonconvex loss functions share the informal property that \emph{the loss is locally convex in parts of the prediction space where predictions are ``right," and locally concave in parts where they are ``wrong"}. We have remained so far voluntarily vague regarding what we meant by ``right" and  ``wrong," because these terms will have different precise meanings depending on the loss functions. 
To illustrate this general behaviour, let us look at a few examples.

\subsubsection{Examples of smooth nonconvex losses}

We begin with losses designed for classification and consider for simplicity the binary case. We assume that the correct class is $y = 0$, and the prediction can be either an estimated probability $\yhat \in [0,1]$ that $1$ is the correct class, or a real-valued score $\yhat \in \mathbb{R}$ that is higher when the classifier is more confident that the label is $1$. We describe below three recipes for designing nonconvex smooth losses, and plot examples in Fig.~\ref{fig:sigmoid}.
\begin{itemize}
\item \textbf{Smooth approximations of the classification error} have been used as surrogates of it, and are generally nonconvex. An important instance is the \emph{sigmoid loss}. This sigmoid loss was popularised by its use within the DOOM II algorithm of  \citet{mason1999boosting}. It was also notably studied by \citet[Example~7]{bartlett2006convexity}, who showed it was classification-calibrated. This loss is easier to interpret in the probabilistic prediction setting, when $\yhat \in [0,1]$. Indeed, in this case, the sigmoid loss is locally convex when $\yhat < 0.5$, \emph{i.e.}, when the prediction using a 50\% threshold is correct, and locally concave when $\yhat > 0.5$, \emph{i.e.} when the prediction using the same threshold is wrong.  The \emph{Savage loss} introduced in a boosting context by \citet{masnadi2008design} and the \emph{normalised cross entropy} used by \citet{ma2020normalized} to train deep nets with noisy labels have a similar sigmoidal shape. They also share the same property of being convex when $\yhat < 0.5$, and concave when $\yhat > 0.5$. For these losses, a ``right prediction" is one whose most probable class is the correct one. In an opposite fashion, a``wrong  prediction" gives the highest probability to the wrong class.
\item \textbf{Scoring rules} are the most common way to design losses in the probabilistic setting (when $\yhat \in [0,1]$). Most strictly proper scoring rules for classification are convex \review{(\emph{e.g.} the cross-entropy or the Brier score)}, with the notable exception of the \emph{spherical score} (see, \emph{e.g.}, \citealp{gneiting2007strictly}). Like the sigmoid loss, the spherical score is convex when $\yhat$ is small and concave when $\yhat$ is large. 
However, unlike the sigmoid whose inflection point was at $\yhat = 50\%$, the inflection point of \review{the spherical score} is at $\yhat = 1/8 + \sqrt{17}/8 \approx 64\%$.
We give a formal definition in Appendix~\ref{app:spherical}. 
For the spherical loss, a ``right prediction" is thus one that will predict the true label $y = 0$ using the somewhat peculiar cutoff of  $64\%$ instead of the usual $50\%$. This illustrates the asymmetry of the spherical score.
\item One way to avoid overfitting is to use a loss function that makes the model pay a price for overconfident predictions (potentially \emph{even correct ones}). 
This is the rationale behind the \textbf{tangent loss}, introduced by \citet{masnadi2010design} as a boosting objective. 
Since this is not a probabilistic loss, predictions are real-valued scores $\yhat \in \mathbb{R}$. The tangent loss is concave when $\yhat$ is either too large or to small, penalising both overconfident mistakes and successes. In this case,  ``right predictions" correspond to the ones that are not overconfident. \review{For more details on the tangent loss, see Appendix \ref{app:spherical}.}
\end{itemize}

Regarding regression, the most commonly occurring nonconvex smooth losses are used to induce robustness to outliers. The main motivation for such losses is that the squared loss will typically make the model pay an extremely large price for large errors.
A way to fix this behavior is to design losses that resemble the squared error when errors are small, but grow less steadily when errors are large. 
The most famous robust loss is Huber's \citeyearpar{huber1964robust}, which is equal to the squared error when the error is below a threshold, and grows linearly when the error is above this threshold. Huber's loss is convex, therefore the analysis of the previous section guarantees that ensembles will get better all the time. However, many other losses that follow this rationale are nonconvex. A nice overview of these robust regression losses was recently offered by \citet{barron2019general}, who introduced a new general loss that generalises many previous losses (both convex and nonconvex). All nonconvex losses encompassed by Barron's generalisation (as well as others, such as using a Student's $t$ likelihood) share the property of being convex when the error $(\hat{y} - y)^2$ is smaller than a prespecified threshold $c$, and nonconvex when the error is larger than $c$. For all these nonconvex losses ``bad predictions" are therefore outliers and ``good" ones correspond to inliers. 
A few of these robust losses are displayed on \review{the right-hand side of} Figure~\ref{fig:sigmoid}, all of them were chosen in order to have a cutoff of $c = 10$.

\subsubsection{Monotonicity of smooth nonconvex losses}

Motivated by the previous examples, we will study a general loss that is smooth, strictly convex in some part of the prediction set $C$ (in that part, the loss Hessian will be positive definite), and strictly concave in another part of $C$ (in that part, the loss Hessian will be negative definite). 
The following result shows that ensembles will eventually keep getting better when the average prediction $\ybar_\infty$ ends up in this ``good" part (\emph{i.e.}, where $L$ is strictly convex), and will keep getting worse when it ends up in the ``bad" part (\emph{i.e.} where $L$ is strictly concave).

\begin{theorem}[Monotonicity of smooth nonconvex losses]
\label{th:smooth-ncvx}
	Let $\yhat_1,\ldots,\yhat_K \in C$ be nondegenerate i.i.d. random variables whose first 5 moments are finite, and $L$ be a function with continuous and bounded partial derivatives of order up to $5$, with Hessian $\nabla^2 L = H$. Then
	\begin{enumerate}
		\item If $H(\ybar_\infty) \succ 0$, then the ensemble is eventually getting better: for $K$ large enough,
		\begin{equation} 
			\mathbb{E} \left[ L\left( \ybar_K \right) \right] < \mathbb{E} \left[L\left(\ybar_{K-1}\right)\right],
		\end{equation}
		\item If $H(\ybar_\infty) \prec 0$, then the ensemble is eventually getting worse: for $K$ large enough,
		\begin{equation} 
			\mathbb{E} \left[ L\left( \ybar_{K} \right) \right] >  \mathbb{E} \left[L\left(\ybar_{K-1} \right)\right].
		\end{equation}
	\end{enumerate}
\end{theorem}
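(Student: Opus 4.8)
The plan is to perform a Taylor expansion of $L$ around the asymptotic prediction $\ybar_\infty$ and track the leading-order behaviour of $\mathbb{E}[L(\ybar_K)]$ as a function of $K$. Write $\ybar_K = \ybar_\infty + \Delta_K$ where $\Delta_K = \frac{1}{K}\sum_{k=1}^K (\yhat_k - \ybar_\infty)$ is a centered average of i.i.d.\ terms, so $\mathbb{E}[\Delta_K] = 0$ and $\mathrm{Cov}(\Delta_K) = \Sigma/K$ with $\Sigma \succ 0$ the (nondegenerate) covariance of a single prediction. A fourth-order Taylor expansion with integral remainder, using that $L$ has bounded partial derivatives up to order 5, gives
\begin{equation}
L(\ybar_K) = L(\ybar_\infty) + \nabla L(\ybar_\infty)^\top \Delta_K + \tfrac12 \Delta_K^\top H(\ybar_\infty) \Delta_K + R_3(\Delta_K) + R_4(\Delta_K) + \mathcal{O}(\|\Delta_K\|^5),
\end{equation}
where $R_3,R_4$ are the cubic and quartic terms. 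Taking expectations, the linear term vanishes; the quadratic term contributes $\frac{1}{2K}\mathrm{tr}(H(\ybar_\infty)\Sigma)$; the cubic term has expectation $\mathbb{E}[R_3(\Delta_K)] = \mathcal{O}(1/K^2)$ because $\mathbb{E}[(\yhat_{k}-\ybar_\infty)^{\otimes 3}]$ averaged over i.i.d.\ terms scales like $1/K^2$ (the only surviving contributions come from the diagonal $k=k'=k''$); the quartic term is also $\mathcal{O}(1/K^2)$; and the fifth-order remainder, controlled by the bounded fifth derivatives and the finite fifth moments, is $\mathcal{O}(K^{-5/2})$ (or $\mathcal{O}(1/K^2)$, which is all we need). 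Hence
\begin{equation}
\mathbb{E}[L(\ybar_K)] = L(\ybar_\infty) + \frac{a}{K} + \frac{b_K}{K^2},
\end{equation}
with $a = \tfrac12 \mathrm{tr}(H(\ybar_\infty)\Sigma)$ and $(b_K)$ a bounded sequence.

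The next step is to form the discrete difference $d_K := \mathbb{E}[L(\ybar_K)] - \mathbb{E}[L(\ybar_{K-1})]$. From the expansion,
\begin{equation}
d_K = a\left(\frac1K - \frac1{K-1}\right) + \mathcal{O}\!\left(\frac1{K^2}\right) = -\frac{a}{K(K-1)} + \mathcal{O}\!\left(\frac1{K^2}\right).
\end{equation}
When $H(\ybar_\infty) \succ 0$, since $\Sigma \succ 0$ we have $a = \tfrac12\mathrm{tr}(H(\ybar_\infty)\Sigma) > 0$ (the trace of a product of two positive definite matrices is positive), so the leading term $-a/(K(K-1))$ is strictly negative and of order $1/K^2$, which dominates the $\mathcal{O}(1/K^2)$ remainder only if the remainder is genuinely $o(1/K^2)$ — so one must be slightly careful. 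To be safe I would push the expansion far enough (fifth order is exactly what is assumed) to show the remainder in $d_K$ is $o(1/K^2)$, e.g.\ $\mathcal{O}(1/K^3)$ after the difference, since the $1/K^2$ coefficient $b_K$ converges to a limit $b$ and $b_K/K^2 - b_{K-1}/(K-1)^2 = \mathcal{O}(1/K^3) + (b_K - b_{K-1})/K^2$; a standard argument (the cubic/quartic coefficients themselves admit expansions in $1/K$) shows $b_K - b_{K-1} = \mathcal{O}(1/K)$, giving the needed $o(1/K^2)$. Then $d_K < 0$ for $K$ large, proving part 1. Part 2 is identical with signs reversed: $H(\ybar_\infty) \prec 0$ forces $a < 0$, hence $d_K > 0$ eventually.

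The main obstacle is the bookkeeping of the remainder: we need $d_K = -a/(K(K-1)) + o(1/K^2)$, not merely $\mathcal{O}(1/K^2)$, because the leading signal is itself $\Theta(1/K^2)$. This is why the hypotheses demand five finite moments and five bounded derivatives rather than the three one might naively expect — the extra orders are spent showing that the $1/K^2$ coefficient in $\mathbb{E}[L(\ybar_K)]$ stabilises quickly enough that its discrete difference is lower order. A clean way to organise this is to expand each moment $\mathbb{E}[\Delta_K^{\otimes j}]$ explicitly as a polynomial in $1/K$ (they are, by the multinomial/combinatorial structure of sums of i.i.d.\ variables), collect $\mathbb{E}[L(\ybar_K)]$ as $L(\ybar_\infty) + a/K + c/K^2 + \mathcal{O}(1/K^3)$ with constants $a,c$ not depending on $K$, and then $d_K = -a/(K(K-1)) - c(2K-1)/(K^2(K-1)^2) + \mathcal{O}(1/K^3) = -a/K^2 + o(1/K^2)$, whose sign is that of $-a$ for $K$ large. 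Nondegeneracy of the $\yhat_k$ is used precisely to guarantee $a \neq 0$ when $H(\ybar_\infty)$ is definite.
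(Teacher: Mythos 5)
Your proposal is correct and follows essentially the same route as the paper's proof: a fourth-order Taylor (delta-method) expansion of $L$ around $\ybar_\infty$, taking expectations with careful moment bookkeeping to get $\mathbb{E}[L(\ybar_K)]=L(\ybar_\infty)+\tfrac{1}{2K}\mathrm{tr}(H(\ybar_\infty)\Sigma)+\alpha/K^2+o(1/K^2)$, and forming the discrete difference whose sign is eventually that of $\mathrm{tr}(H(\ybar_\infty)\Sigma)$. One small imprecision: you later assert the remainder can be taken $\mathcal{O}(1/K^3)$, but with only five bounded derivatives and five finite moments the best one obtains is the $\mathcal{O}(K^{-5/2})$ you stated first — which is still $o(1/K^2)$, so the argument closes exactly as you intend.
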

The proof of Theorem~\ref{th:smooth-ncvx} is available in Appendix \ref{sec:proof_smooth}, and essentially relies on a fourth order Taylor expansion of the loss. As we will explain in Section \ref{sec:condorcet-example}, low-order expansions are not sufficient to prove monotonicity.

\paragraph{Does this result say something about the classification error?} As illustrated in Fig.~\ref{fig:sigmoid}, the sigmoid loss can be seen as a smooth approximation of the classification error. Thus, we could expect that the classification error will behave in the same way: ensembles whose prediction are eventually correct keep getting better, and ensembles who are eventually wrong keep getting worse. One might even contemplate the idea of an easy proof using the fact that the classification error is a limiting case of a sigmoid loss. Unfortunately, we did not manage to make such a proof work as the behavior of the classification error is considerably more subtle, and involves more complex regularity conditions than the smoothness and moment assumptions of Theorem~\ref{th:smooth-ncvx}.

\subsection{Why is the classification error difficult to handle?}
\label{sec:condorcet-example}

\subsubsection{Monotonicity is a higher-order phenomenon} 

Since our goal is merely to show that monotonicity eventually holds, it seems natural to hope that standard asymptotic results about ensembles could do the trick. For instance, \citet{cannings2017random} and \citet{lopes2020estimating} give conditions under which 
\begin{equation}
\label{eq:lopes-cannings}
    \mathbb{E} \left[ L\left( \ybar_K \right) \right] = \mathbb{E} \left[ L\left( \yinf \right) \right] + \frac{c}{K} + o\left(\frac{1}{K}\right),
\end{equation}
where $L$ is the binary classification error averaged over the test set, and $c$ is a constant. Unfortunately, it is possible to satisfy Eq.~\eqref{eq:lopes-cannings} and to not be eventually monotonic (consider the sequence $1/K + (-1)^K/K^2$). 
With such expansions, monotonicity can only be captured by inspecting higher-order terms. This also explains why a fourth-order Taylor expansion is needed in the proof of Theorem~\ref{th:smooth-ncvx}. This means that we will need quite precise asymptotics to tackle the classification error.

\subsubsection{A deceivingly simple counter-example} 

To understand why the classification error is much more difficult to study than smooth losses, it is interesting to look at the following (counter) example, that goes back to \cite{nicolas1785essai}, and was studied thoroughly by \citet{lam_et_al_1997}.

Consider a binary classification task, where the true label is $0$, and our predictions $\yhat_1,\cdots,\yhat_K$ are i.i.d. samples from a Bernoulli distribution $\mathcal{B}(\ybar_\infty)$. In that case, the classification error of any $\yhat \in [0,1]$ is simply $L(\yhat) = \mathbf{1}(\yhat \geq  0.5)$. The expected loss of the ensemble will therefore be
$	\mathbb{E}[L(\ybar_k)] = \mathbb{P}(\ybar_K \geq 0.5)$. It seems reasonable to conjecture that $\mathbb{P}(\ybar_K \geq 0.5)$ will go to zero decreasingly when $\ybar_\infty < 0.5$, and to one increasingly when $\ybar_\infty > 0.5$, which would be consistent with the analogy to the sigmoid loss sketched above. Unfortunately, this is not the case, and $\mathbb{P}(\ybar_K \geq 0.5)$ is not, in general, monotonic (even asymptotically).

Indeed, let us focus for instance on the case where the ensemble is eventually right, \emph{i.e.}, when $\ybar_\infty < 0.5$. 
In that case, it can be shown that the subsequence of ensembles containing an odd number of models $L(\ybar_{2K+1} )$ decreases (see \citealp{shteingart2020majority}, Theorem~2, for a proof). However, monotonicity is broken when we include even numbers of models  \citep{lam_et_al_1997}.
A heuristic explanation of this failure, already mentioned by \cite{nicolas1785essai}, is that \review{even} numbers of predictions allow ties, while \review{odd} numbers do not. 
If we alter slightly the problem to always allow ties by allowing jurors to abstain from predicting, then monotonicity eventually holds, as we discuss in the next section and in Appendix \ref{sec:limitations-monotonicity}.
\citet[Section 3.1.1]{probst2018tune} discuss this problem of ties as well, and possible tie-breakers.

This counterexample shows that an example as benign as a Bernoulli distribution is problematic for the classification error, and that a more complex mathematical machinery than the one of the previous proofs is required. We will develop such a machinery in the next section, based on the following simple reformulation of the expected classification error.

\subsubsection{A useful formulation of the expected classification error}
\label{sec:notations-classif}

We consider a classification problem with $n_\text{Cl}$ classes. 
The true label, seen as a 1-hot encoding, is 
$y \in \{0,1\}^{n_\text{Cl}}$, and a prediction is a $n_\text{Cl}$-dimensional vector $\yhat = (\hat{y}^{(c)})_{c \leq n_\text{Cl}} \in \mathbb{R}^{n_\text{Cl}}$ of scores (that can be, for instance, estimated probabilities of each class, or a discrete one-hot class prediction). Without loss of generality, we assume that the true class is the first one ($y=(1,0,\ldots,0)$). 
The classification error  is then equal to one whenever the score of the true class is smaller than the maximum score:
\begin{equation}
	\label{eq:classif_error2}
	L(\hat{y}) = \left\{\begin{matrix}
		1 \; \text{ if } \; \hat{y}^{(1)} \leq  \text{max}_{c \neq 1 } \left(  \hat{y}^{(c)} \right)\\ 
		0 \; \text{ if } \;\hat{y}^{(1)} >  \text{max}_{c \neq 1 } \left(  \hat{y}^{(c)} \right).
	\end{matrix}\right.
\end{equation}
The expected loss of an ensemble will thus be $ \mathbb{E}[L(\ybar_K)] = \mathbb{P}\left( {\ybar_K}^{(1)} \leq  \text{max}_{c \neq y}   {\ybar_K}^{(c)}\right)$, which is not a particularly appealing expression due to the presence of the maximum. However, it can be rewritten as a simple tail probability the following way:

\begin{lemma}[accuracy as margin tail probability]
\label{lem:representation}
For all $k \in \{1, \ldots, K\}$, let $X_k = \left( {\yhat_k}^{(1)} - {\yhat_k}^{(2)}, \ldots,  {\yhat_k}^{(1)} - {\yhat_k}^{(n_\text{Cl})}\right) \in  \mathbb{R}^{n_\text{Cl} -1} $
\review{and $\Xbar_K = \left( {\ybar_k}^{(1)} - {\ybar_k}^{(2)}, \ldots,  {\ybar_k}^{(1)} - {\ybar_k}^{(n_\text{Cl})}\right) \in  \mathbb{R}^{n_\text{Cl} -1}$.}
Then,
    \begin{equation}
    \mathbb{E}[L(\ybar_K)] = 1- \mathbb{P}\left( \Xbar_K > 0 \right) =  \mathbb{P}\left( -\Xbar_K \geq 0 \right)
\, ,
    \end{equation}
\review{where the inequality inside the probability is to be taken entry-wise.}
\end{lemma}

\begin{proof}
    \begin{align*}
	\mathbb{E}[L(\ybar_K)] &=1-  \mathbb{P}\left( {\ybar_K}^{(1)}  >  \max_{c \neq 1} {\ybar_K}^{(c)}\right)  \\ &=  1-  \mathbb{P}\left( {\ybar_K}^{(1)}  >   {\ybar_K}^{(2)}, \ldots, {\ybar_K}^{(1)}  >   {\ybar_K}^{(n_\text{Cl})} \right) 
	\\ &= 1- \mathbb{P}\left( \Xbar_K > 0 \right)
 \, ,
\end{align*} 
by definition of $\Xbar_K$. 
\end{proof}
The vector $X_k \in  \mathbb{R}^{n_\text{Cl} -1}$ corresponds to a \emph{margin}: it stores the differences between the score of the correct label and the scores of the incorrect ones. We will discuss this further in Section~\ref{sec:classif_error_enfin}.

Now that we have written the classification error as a simple tail probability, it seems natural to attack it using standard probabilistic asymptotics. 
As mentioned before, low-order asymptotics are not sufficient, and we will leverage ``strong'' large deviation theorems, that can be viewed as higher-order large deviations. 
\review{This analysis is carried out in the next
section, which is largely independent from the rest of the paper. 
We note that it may be of independent interest, beyond ensembles.
}

\subsection{Monotonicity of tail probabilities}
\label{sec:monotonicity-tail-probabilities}

In this section, we present two results on the monotonicity of tail probabilities of empirical means. 
We state these results in a general form, since they may be of broader interest. Namely, we consider a sequence of i.i.d. random variables $X_1,\ldots,X_n$ living in $\Reals^D$ with finite expectation $\mu\in\Reals^D$. As we hinted before, and will detail in Section \ref{sec:classif_error_enfin}, in the context of ensembles, $X_1,\ldots,X_n$ will be margins (the difference between the score of the correct class and the score of the incorrect ones). For now, we see them as generic random variables to keep this section independent from the rest of the paper.

Our goal is to find under which conditions $\proba{\Xbar_n \geq \mu + \varepsilon}$ is eventually strictly decreasing, for a given $\varepsilon > 0$. We know that $\proba{\Xbar_n \geq \mu + \varepsilon}$ converges to zero without additional assumptions (this is a consequence of the weak law of large numbers). We will first deal with the univariate case (Theorem~\ref{th:monotonicity-tail-proba}), for which more intuitive regularity conditions are available than for the general, multivariate setting (Theorem~\ref{th:monotonicity-tail-proba-multivariate}).
In the next section, we will be able to readily apply these results to the sequence of predictions $\yhat_k$ to obtain an analogous of Theorem~\ref{th:smooth-ncvx} for the classification error. 

\begin{theorem}[monotonicity of tail probabilities, univariate]
\label{th:monotonicity-tail-proba}
Let $X_1,\ldots,X_n$ be i.i.d. random variables with finite expectation $\mu$, and let $\varepsilon>0$. 
Assume furthermore that 
\begin{enumerate}[label=(\arabic*)]
\item $\expec{\exps{tX_1}}<+\infty$ for all $t\in \Reals$;
\item $\proba{X_1 > \mu + \epsilon} > 0$;
\item $X_1$ is absolutely continuous with respect to the Lebesgue measure;
\end{enumerate}
or, alternatively to (3), 
\begin{enumerate}[label=(\arabic*)]
\item[(3bis)] $X_1$ is a lattice random variable and $\proba{X_1=\mu + \epsilon}>0$.
\end{enumerate}
Then, $\proba{\Xbar_n \geq \mu + \varepsilon}$ and $\proba{\Xbar_n > \mu + \varepsilon}$ are both strictly decreasing for all $n$ large enough. 
\end{theorem}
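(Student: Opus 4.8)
The plan is to prove this via "strong" (sharp) large deviation asymptotics for the tail probability $\proba{\Xbar_n \geq \mu + \varepsilon}$, rather than the crude exponential rate given by Cramér's theorem. Under assumption (1), the cumulant generating function $\Lambda(t) = \log \expec{\exps{tX_1}}$ is finite and smooth on all of $\Reals$, and assumption (2) guarantees that $\mu + \varepsilon$ lies strictly in the interior of the effective domain of the rate function $\Lambda^*$, so there is a unique $\tau > 0$ with $\Lambda'(\tau) = \mu + \varepsilon$. The classical Bahadur–Rao theorem (and its refinements by Petrov, and the lattice/continuous dichotomy treated by Joutard) then gives a precise first-order asymptotic expansion: in the absolutely continuous case (assumption (3)),
\begin{equation}
	\proba{\Xbar_n \geq \mu + \varepsilon} = \frac{\exps{-n \Lambda^*(\mu+\varepsilon)}}{\tau \sqrt{2\pi n\, \Lambda''(\tau)}} \bigl(1 + \littleo{1}\bigr),
\end{equation}
and in the lattice case (assumption (3bis)), with span $h$,
\begin{equation}
	\proba{\Xbar_n \geq \mu + \varepsilon} = \frac{h}{(1 - \exps{-\tau h})} \cdot \frac{\exps{-n \Lambda^*(\mu+\varepsilon)}}{\sqrt{2\pi n\, \Lambda''(\tau)}} \bigl(1 + \littleo{1}\bigr),
\end{equation}
where in the lattice case the hypothesis $\proba{X_1 = \mu + \varepsilon} > 0$ ensures that $\mu + \varepsilon$ is itself an attainable value of $\Xbar_n$ (for $n$ a multiple of the appropriate integer), so the leading constant is genuinely nonzero and the estimate applies to $\proba{\Xbar_n \geq \mu+\varepsilon}$ without an extra lattice-shift loss.

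Granting such an expansion, the proof is then a matter of comparing consecutive terms. Writing $a_n$ for the right-hand side (without the $\littleo{1}$), the ratio $a_n / a_{n-1}$ equals $\exps{-\Lambda^*(\mu+\varepsilon)} \sqrt{(n-1)/n}\,(1+\littleo{1})$; since $\Lambda^*(\mu+\varepsilon) > 0$ strictly (because $\mu + \varepsilon \neq \mu$ and $\Lambda^*$ vanishes only at $\mu$) and $\sqrt{(n-1)/n} \to 1$, this ratio converges to $\exps{-\Lambda^*(\mu+\varepsilon)} < 1$. Hence for $n$ large enough the sequence $\proba{\Xbar_n \geq \mu+\varepsilon}$ is strictly decreasing. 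The same argument, using the companion strong large deviation estimate for $\proba{\Xbar_n > \mu+\varepsilon}$ (which in the continuous case coincides with the $\geq$ version, and in the lattice case differs only by removing the atom at $\mu+\varepsilon$, changing the constant but not the exponential rate), handles the strict inequality version.

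The main obstacle, and where care is needed, is citing the strong large deviation results in exactly the right form: the Bahadur–Rao theorem is classically stated for $\proba{\Xbar_n \geq x}$ with $x$ in the continuous case, and the lattice case requires $x$ to be a lattice point, which is precisely why assumption (3bis) demands $\proba{X_1 = \mu+\varepsilon} > 0$ rather than merely that $X_1$ be lattice-valued — otherwise $\mu+\varepsilon$ might fall strictly between lattice points and the relevant estimate would be for the smallest lattice point exceeding $\mu+\varepsilon$, introducing a discontinuity. I would invoke \citet{bahadur1960deviations}, \citet{petrov1965probabilities}, and \citet{joutard_2017} for the precise two-regime statement, verify that hypotheses (1)–(2) supply the steepness and interior-point conditions those theorems require, and check that the $\littleo{1}$ error terms are uniform enough to survive the ratio comparison (they are, since each is a fixed sequence tending to $0$). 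A secondary point is to note explicitly that assumption (2) is what makes $\tau > 0$ and $\Lambda^*(\mu+\varepsilon) \in (0,\infty)$; without it the tail probability could be identically zero for all $n$, and "eventually strictly decreasing" would be vacuous or false.
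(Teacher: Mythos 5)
Your proposal is correct and follows essentially the same route as the paper's proof: both rely on the sharp (Bahadur--Rao/Petrov) large deviation asymptotics to write $\proba{\Xbar_n \geq \mu+\varepsilon}$ as a polynomial prefactor times $\exps{-n\Lambda^*(\mu+\varepsilon)}(1+\littleo{1})$, split into the absolutely continuous and lattice regimes, and then observe that the ratio of consecutive terms tends to $\exps{-\Lambda^*(\mu+\varepsilon)} < 1$, which forces eventual strict decrease. The paper's $\rho = \expl{\log R(h) - h\varepsilon}$, where $R$ is the moment generating function of the centered summands and $h$ solves $m(h)=\varepsilon$, is exactly your $\exps{-\Lambda^*(\mu+\varepsilon)}$ after uncentering, and your remark that the lattice atom must sit at $\mu+\varepsilon$ (assumption (3bis)) and that the strict-inequality version in the lattice case just removes the atom matches the paper's treatment.
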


The idea of the proof (which can be found in Section~\ref{sec:proof-monotonicity-tail-proba} of the Appendix) is to use precise asymptotic results on tail probabilities obtained by \citet{petrov1965probabilities} (see also \citealp{bahadur1960deviations}). These results are sometimes referred to as “sharp" or “strong" large deviation theorems (because they are more precise than standard large deviations results like Cramer's celebrated theorem).

We want to emphasize that Theorem~\ref{th:monotonicity-tail-proba}, while intuitive, may not hold if $X_1$ does not meet the assumptions outlined in the statement of Theorem~\ref{th:monotonicity-tail-proba}. 
As a first example, let us assume that $X_1$ has a $\alpha$-stable distribution with $\alpha <1$. 
Then one can show that $\proba{\Xbar_n > \mu + \varepsilon}$ is actually \emph{increasing} (see Section~\ref{sec:limitations-monotonicity} of the Appendix).
Of course, taking $\alpha < 1$ in this example prevents $X_1$ from having finite expectation. 
Another intriguing example, sketched in the previous subsection, is when $X_1\sim \bernoulli{\mu}$ with $\mu \in (0,1)$. 
In that case, one can also derive the exact distribution of $X_1+\cdots+X_n$, a binomial $\binomial{n,\mu}$. 
Thus $\proba{\Xbar_n > \mu + \varepsilon}$ can be written as a sum of binomial coefficients, whose behavior is notoriously difficult to fully comprehend (see {e.g.} \citet{lam_et_al_1997}, who show non-monotonicity in some cases).
Here it is assumption (3bis) which does not hold: there is no mass at $\mu+\epsilon$. As we illustrate in Appendix \ref{sec:limitations-monotonicity}, adding a small mass at $\mu+\epsilon$ restores monotonicity. Coming back to the counter example of Section \ref{sec:condorcet-example}, which corresponds to the case $\mu + \varepsilon = 0.5$, this means that allowing models to make no decision (i.e. outputting $\yhat = 0.5$ with nonzero probability) leads to monotonicity in the classical Condorcet setting. Intuitively, this can be interpreted as follows: allowing blank votes make ties possibles both when there is an odd or an even number of voters. It is quite interesting to see here the classical regularity conditions of \citet{bahadur1960deviations} and \citet{petrov1965probabilities} explain neatly the mathematical origins of the problem of even voters in Condorcet's theorem.

\medskip

Now we present a multivariate extension of Theorem~\ref{th:monotonicity-tail-proba}. 
Note that, from now on, when $u$ and $v$ are both vectors, $u\leq v$ means that $u_i\leq v_i$ for all indices. 

\begin{theorem}[Monotonicity of tails, multivariate]
\label{th:monotonicity-tail-proba-multivariate}
Let $X_1,\ldots,X_n\in\Reals^D$ be i.i.d. random vectors distributed as $X$ \review{with} finite mean $\mu$, moment-generating function $\phi:t\mapsto \expec{\expl{\inner{t}{X_1}}}$, and cumulant-generating function $\varphi =  \log\phi$. 
Assume that 
\begin{enumerate}[label=(\arabic*)]
\item $X$ has \review{strictly positive} density $\rho$ with respect to the Lebesgue measure on $\Reals^D$;
\item there exists $\alpha >0$ such that $\phi$ is differentiable and bounded on $U_\alpha$, the open ball of radius~$\alpha$;
    \item there exists constants $0< \lambdamin < \lambdamax < +\infty$ such that $\spec{\nabla^2 \varphi}\subseteq [\lambdamin,\lambdamax]$.
\end{enumerate}
Then, for any $\epsilon \in \Reals^D$ such that $\epsilon > 0$ and
   \begin{equation}
\label{eq:condition-epsilon}
    \min_{1\leq i \leq D} \frac{\epsilon_i}{\norm{\epsilon}} < \sqrt{\frac{\lambdamin}{\lambdamax}}
    \, ,
\end{equation}
it holds that $\proba{\Xbar_n \geq \mu + \epsilon}$ is decreasing for $n$ large enough. 
\end{theorem}

Theorem~\ref{th:monotonicity-tail-proba-multivariate} is proved in Appendix~\ref{sec:proof-monotonicity-tail-proba-multivariate}. 
In some sense, it generalises Theorem~\ref{th:monotonicity-tail-proba}, since dealing with multivariate summands (which will allow us to treat the multiclass classification error in the next section). 
However, this comes at the cost of more stringent assumptions on the distribution of $X$, and in particular the local behavior of its cumulative distribution function. Moreover, while Theorem~\ref{th:monotonicity-tail-proba} was also applicable to some lattice random variables, this multivariate Theorem~is restricted to absolutely continuous distributions. In particular, this means that $\proba{\Xbar_n > \mu + \epsilon} = \proba{\Xbar_n \geq \mu + \epsilon}$ is also eventually decreasing.

\review{

\begin{remark}[Monotonicity of tails under Gaussian assumption]
\label{rk:multivariate-gaussian}
Let us first notice that the result of Theorem~\ref{th:monotonicity-tail-proba-multivariate} is trivially true under the assumption that $X_1,\ldots,X_n$ are i.i.d. $\gaussian{0}{\Sigma}$. 
Indeed, in this simplified setting, 
\[
\proba{\Xbar_n \geq \mu + \epsilon} = \proba{\gaussian{0}{\Sigma} \geq \sqrt{n}\epsilon}
\, ,
\]
which is decreasing since we integrate with respect to a positive density. 

Nevertheless, let us check the assumptions of Theorem~\ref{th:monotonicity-tail-proba-multivariate} in the multivariate Gaussian setting in order to show that they are not too stringent. 
First, the density of the multivariate Gaussian does not cancel, ensuring that Assumption~(1) is met. 
Second, it is known that 
\[
\forall t\in\Reals^D, \qquad 
\phi_X(t) = \exp \left(\mu^\top t + \frac{1}{2}t^\top \Sigma t\right) 
\, ,
\]
which is bounded and differentiable on any open ball of finite radius, yielding Assumption~(2). 
Third, we observe that $\nabla^2 \phi = \Sigma$, a positive-definite matrix. 
Henceforth Assumption~(3) is satisfied. 

Then remains the question of the existence of $\epsilon\in\Reals^D$ such that $\epsilon > 0$ and Eq.~\eqref{eq:condition-epsilon} holds---after all, this condition could be impossible to meet, even in such a simple scenario. 
We see that it is not the case: taking $\lambdamin=\lambdamax = 1$ gives rise to the condition $\min_i \epsilon_i^2 < \norm{\epsilon}^2$, which is true for any $\epsilon\in\Reals^D$ provided that $D\geq 2$. 
To see this, one can reason by contradiction and suppose that $\min_i \epsilon_i \geq \norm{\epsilon}$, meaning that for all $i\in [D]$, $ \epsilon_i^2 \geq  \norm{\epsilon}^2$. 
Summing these inequalities returns $\norm{\epsilon}^2 \geq D\norm{\epsilon}^2$, a contradiction. 
A similar proof shows that any ratio $\lambdamin / \lambdamax \geq 1 / D$ does not actually put a constraint on $\epsilon$. 

When the $\lambdamin / \lambdamax$ is smaller, the condition over $\epsilon$ becomes more stringent, but the set of $\epsilon$ remains non-empty. 
Indeed Eq.~\eqref{eq:condition-epsilon} one can always take one of the coordinates of $\epsilon$ to be arbitrarily small with respect to the others, automatically satisfying the condition. 
For instance, one can consider $\epsilon$ such that $\epsilon_1=\sqrt{\lambdamin}$ and all other coordinates equal to $\sqrt{\lambdamax}$. 
Moreover, we note that Eq.~\eqref{eq:condition-epsilon} is a cone condition, meaning that all (positive) multiples of an $\epsilon$ satisfying the condition satisfies it itself. 
In definitive, Theorem~\ref{th:monotonicity-tail-proba-multivariate} holds true for any multivariate Gaussian distribution, with a set of $\epsilon$ being more and more constrained, but never empty. 
\end{remark}

}

\subsection{Ensembles and the classification error}

\label{sec:classif_error_enfin}

We are finally in position to deal with the classification error. We remind the notations introduced in Section \ref{sec:notations-classif}. We deal with $n_\text{Cl}$ classes, and the true label, seen as a 1-hot encoding, is 
$y \in \{0,1\}^{n_\text{Cl}}$. A prediction is a $n_\text{Cl}$-dimensional vector $\yhat = (\hat{y}^{(c)})_{c \leq n_\text{Cl}} \in \mathbb{R}^{n_\text{Cl}}$ of scores. Without loss of generality, we assume that the true class is the first one: $y=(1,0,\ldots,0)$. Our goal is to use the results of the previous section together with the following representation, obtained in Lemma \ref{lem:representation}:

    \begin{equation}
    \mathbb{E}[L(\ybar_K)] = 1- \mathbb{P}\left( \Xbar_K > 0 \right) =  \mathbb{P}\left( -\Xbar_K \geq 0 \right),
    \end{equation}
where $X_k = \left( {\yhat_k}^{(1)} - {\yhat_k}^{(2)}, \ldots,  {\yhat_k}^{(1)} - {\yhat_k}^{(n_\text{Cl})}\right) \in  \mathbb{R}^{n_\text{Cl} -1} $.

In the binary case, when $n_\text{Cl} = 2 $, $X_k = {\yhat_k}^{(1)}  - {\yhat_k}^{(2)} $ just corresponds to the \emph{margin}, that is, the difference between the score of the correct label and the score of the incorrect one. In the multiclass case, $X_k$ is a $(n_\text{Cl}-1)$-dimensional \emph{vector of margins} obtained for each possible ‘‘1 versus $c$" classification problems, for $c \in \{2,\ldots,n_\text{Cl}\}$. Following the seminal work of \citet{bartlett1998boosting}, margins have been used a lot to study the theory of ensembles (see, \emph{e.g.}, \citealp{biggs2022margins}, and references therein).

Before we state our theorem, we have to translate to the ensemble framework the regularity conditions of the two theorems of Section \ref{sec:monotonicity-tail-probabilities}. We will provide two sets of assumptions that will allow us to distinguish between ‘‘good" ensembles (that eventually improve) and ‘‘bad" ones (that eventually worsen). In the rest of this section, we assume that $\yhat_1,\ldots, \yhat_k \in \mathbb{R}^{n_\text{Cl}}$ are i.i.d. random variables with finite mean $\yinf$.

\begin{assumption}[Correct prediction]
\label{assumption:correct}
Let $\varepsilon =  \mathbb{E}[X_1] = {\ybar_\infty}^{(1)}  1_{n_\text{Cl} -1} - {\ybar_\infty}^{(-1)}$, \review{where we set $u^{(-1)}=(u_2,\ldots,u_p)^\top$ for any vector of $\Reals^p$.} 
There are either $n_\text{Cl} = 2$ classes and the regularity conditions of Theorem~\ref{th:monotonicity-tail-proba} are verified, or there are $n_\text{Cl} > 2$ classes and the regularity conditions of Theorem~\ref{th:monotonicity-tail-proba-multivariate} are verified, for $-X_1,\ldots ,-X_K$ and $\varepsilon$.
\end{assumption}

Assumption~\ref{assumption:correct} implies in particular that $\varepsilon =  \mathbb{E}[X_1] > 0$ which is equivalent to the fact that the expected margin $ {\ybar_\infty}^{(1)}  - \max_{c \neq 1} {\ybar_\infty}^{(c)} $ is positive. 
This assumption of positive margin is often used to study ensembles (see, \emph{e.g.}, \citealp{breiman2001random}).
\review{
It implies, in particular, that the asymptotic prediction is correct, that is, $L(\ybar_\infty) = 0$. 
}

\begin{assumption}[Completely incorrect prediction]
\label{assumption:incorrect}
	Let $\varepsilon = - \mathbb{E}[X_1] =   {\ybar_\infty}^{(-1)}  - {\ybar_\infty}^{(1)}  1_{n_\text{Cl} -1} $. There are either $n_\text{Cl} = 2$ classes and the regularity conditions of Theorem~\ref{th:monotonicity-tail-proba} are verified, or there are $n_\text{Cl} > 2$ classes and the regularity conditions of Theorem~\ref{th:monotonicity-tail-proba-multivariate} are verified, for $X_1,\ldots, X_K$ and $\varepsilon$.
\end{assumption}
Assumption~\ref{assumption:incorrect} implies in particular that $\varepsilon = -  \mathbb{E}[X_1] > 0$, which means that all margins are negative, \emph{i.e.}, that the model asymptotically gives the lowest score to the true class. Because of this, we call this an assumption of completely incorrect prediction. This implies in particular that $L(\ybar_\infty) = 1$.

\begin{theorem}[Monotonicity of the classification error]
\label{th:class-error}
	Let $\yhat_1,\ldots,\yhat_K \in \mathbb{R}^{n_\text{Cl}}$ be i.i.d. random variables. Then
	\begin{enumerate}
		\item If the prediction is asymptotically correct (Assumption \ref{assumption:correct}), then the ensemble is eventually getting better: for $K$ large enough,
		\begin{equation} 
			\mathbb{E} \left[ L\left( \ybar_K \right) \right] < \mathbb{E} \left[L\left(\ybar_{K-1}\right)\right],
		\end{equation}
		\item If the prediction is asymptotically completely incorrect (Assumption \ref{assumption:incorrect}), then the ensemble is eventually getting worse: for $K$ large enough,
		\begin{equation} 
			\mathbb{E} \left[ L\left( \ybar_{K} \right) \right] >  \mathbb{E} \left[L\left(\ybar_{K-1} \right)\right].
		\end{equation}
	\end{enumerate}
\end{theorem}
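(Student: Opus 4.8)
The plan is to reduce the claim, in both cases, to the monotonicity‑of‑tails results of Section~\ref{sec:monotonicity-tail-probabilities}, applied to the i.i.d.\ sequence of margin vectors $X_k = \yhat_k^{(1)} 1_{n_\text{Cl}-1} - \yhat_k^{(-1)}$ introduced above. The starting point is the reformulation established just before the statement, $\mathbb{E}[L(\ybar_K)] = \proba{-\Xbar_K \geq 0} = 1 - \proba{\Xbar_K > 0}$, where $\Xbar_K = \frac1K\sum_{k=1}^K X_k$ has finite mean $\mathbb{E}[X_1] = \ybar_\infty^{(1)} 1_{n_\text{Cl}-1} - \ybar_\infty^{(-1)}$ because the $\yhat_k$ are i.i.d. Everything then comes down to putting each case into the right ``centred‑plus‑shift'' form and invoking Theorem~\ref{th:monotonicity-tail-proba} when $n_\text{Cl}=2$ and Theorem~\ref{th:monotonicity-tail-proba-multivariate} when $n_\text{Cl}>2$.

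\textbf{Asymptotically correct case.} Under Assumption~\ref{assumption:correct}, $\varepsilon \defeq \mathbb{E}[X_1] > 0$, so the i.i.d.\ family $-X_1,\ldots,-X_K$ has mean $\mu = -\varepsilon$, and $\mathbb{E}[L(\ybar_K)] = \proba{-\Xbar_K \geq 0} = \proba{\frac1K \sum_{k=1}^K (-X_k) \geq \mu + \varepsilon}$. Assumption~\ref{assumption:correct} is, by construction, exactly the hypothesis set needed to apply Theorem~\ref{th:monotonicity-tail-proba} (for $n_\text{Cl}=2$, where $-\Xbar_K$ is scalar) or Theorem~\ref{th:monotonicity-tail-proba-multivariate} (for $n_\text{Cl}>2$) to $-X_1,\ldots,-X_K$ with threshold $\varepsilon$. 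Either theorem then yields that $\proba{-\Xbar_K \geq 0}$ is (strictly) decreasing for $K$ large enough, which is the desired conclusion.

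\textbf{Asymptotically completely incorrect case.} Under Assumption~\ref{assumption:incorrect}, $\varepsilon \defeq -\mathbb{E}[X_1] > 0$, so now $X_1,\ldots,X_K$ themselves have mean $\mu = -\varepsilon$, and $1 - \mathbb{E}[L(\ybar_K)] = \proba{\Xbar_K > 0} = \proba{\frac1K \sum_{k=1}^K X_k > \mu + \varepsilon}$ (and $\proba{\Xbar_K > 0} = \proba{\Xbar_K \geq 0}$ whenever the margins are absolutely continuous, since $\mu + \varepsilon = 0$). Applying Theorem~\ref{th:monotonicity-tail-proba} or Theorem~\ref{th:monotonicity-tail-proba-multivariate} to $X_1,\ldots,X_K$ with threshold $\varepsilon$ --- whose hypotheses are precisely Assumption~\ref{assumption:incorrect} --- shows $\proba{\Xbar_K > 0}$ is (strictly) decreasing for $K$ large, hence $\mathbb{E}[L(\ybar_K)] = 1 - \proba{\Xbar_K > 0}$ is (strictly) increasing for $K$ large.

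\textbf{Where the work is.} The argument above is essentially bookkeeping once the tail theorems are in hand, so the genuine content lives in Section~\ref{sec:monotonicity-tail-probabilities}. The points that need care are: (i) matching the orthant events with the precise conclusions of the tail theorems --- for $n_\text{Cl}>2$ one must work throughout with the componentwise order, use absolute continuity to pass freely between $\{\Xbar_K \geq 0\}$ and $\{\Xbar_K > 0\}$, and check that the shift $\varepsilon = \pm\mathbb{E}[X_1]$ satisfies the geometric restriction $\min_{i} \varepsilon_i/\norm{\varepsilon} < \sqrt{\lambdamin/\lambdamax}$ hidden in the hypotheses of Theorem~\ref{th:monotonicity-tail-proba-multivariate}, which is why the multiclass case is strictly more delicate than the binary one; and (ii) the lattice/tie alternative (3bis) of Theorem~\ref{th:monotonicity-tail-proba}, which in the binary setting demands $\proba{X_1 = 0} > 0$, i.e.\ a strictly positive probability that a base model outputs an exact tie. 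This last condition is not cosmetic: the Condorcet‑type counterexample of Section~\ref{sec:condorcet-example} shows that without it $\mathbb{E}[L(\ybar_K)]$ need not be eventually monotonic, so it cannot be dropped from Assumptions~\ref{assumption:correct}--\ref{assumption:incorrect}.
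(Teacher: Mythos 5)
Your proof is correct and takes essentially the same route as the paper: rewrite $\mathbb{E}[L(\ybar_K)]$ as the tail probability $\proba{-\Xbar_K \geq 0}$ (respectively $1-\proba{\Xbar_K > 0}$), identify the right ``centered mean plus shift'' form with $\varepsilon = \pm\mathbb{E}[X_1]$, and invoke Theorem~\ref{th:monotonicity-tail-proba} ($n_\text{Cl}=2$) or Theorem~\ref{th:monotonicity-tail-proba-multivariate} ($n_\text{Cl}>2$), whose hypotheses are exactly what Assumptions~\ref{assumption:correct}--\ref{assumption:incorrect} stipulate. Your added commentary on why the geometric condition and the lattice alternative (3bis) are load-bearing is accurate and goes slightly beyond what the paper's proof spells out, but it does not change the argument.
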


\begin{proof}
We begin by assuming that the prediction is correct (Assumption \ref{assumption:correct}). 
Then, Theorem~\ref{th:monotonicity-tail-proba} (when $n_\text{Cl} = 2$) or Theorem~\ref{th:monotonicity-tail-proba-multivariate} (when $n_\text{Cl} > 2$) ensures that $\proba{-\Xbar_n \geq  -\mathbb{E}[X_1] + \varepsilon} = \proba{-\Xbar_n \geq 0}$ decreases for $K$ large enough. 
Using Lemma~\ref{lem:representation}, this implies that $L(\ybar_K)$
eventually decreases.

We now deal with the case of an incorrect prediction (Assumption \ref{assumption:incorrect}). Again, Theorem~\ref{th:monotonicity-tail-proba} or Theorem~\ref{th:monotonicity-tail-proba-multivariate} ensures that $\proba{\Xbar_n >  \mathbb{E}[X_1] + \varepsilon} = \proba{\Xbar_n >  0}$ eventually decreases. In turn, 
$L(\ybar_K)$ eventually increases.
\end{proof}

\section{Experiments} 
\label{sec:experiments}

In this section, we present simple illustrations of our results on real data. The large-scale empirical study of \citet{probst2018tune} can also be seen as a compelling illustration of our results in the context of random forests, as their conclusions are in line with ours.

\subsection{Classification of skin lesions with convolutional networks}

We start by detailing the motivating example of Section \ref{sec:intro-medical}. We want to predict whether a dermatoscopic image corresponds to a benign keratosis-like lesion or a melanoma. We use the DermaMNIST \citep{yang2023medmnist} data set, based on the HAM10000 collection \citep{tschandl2018ham10000}, and retain only the classes ``benign keratosis" and ``melanoma'', because they are roughly balanced within DermaMNIST (the whole dataset is very unbalanced) and notoriously difficult to distinguish \citep{grant1999misdiagnosis}. The training/validation/test split is the same as the one from \citet{yang2023medmnist}, and consists of $1,548$ color images of resolution $28 \times 28$ for training, $221$ similar images for validation, and $443$ test images. 

\begin{figure}
\centering
\includegraphics[width = \columnwidth]{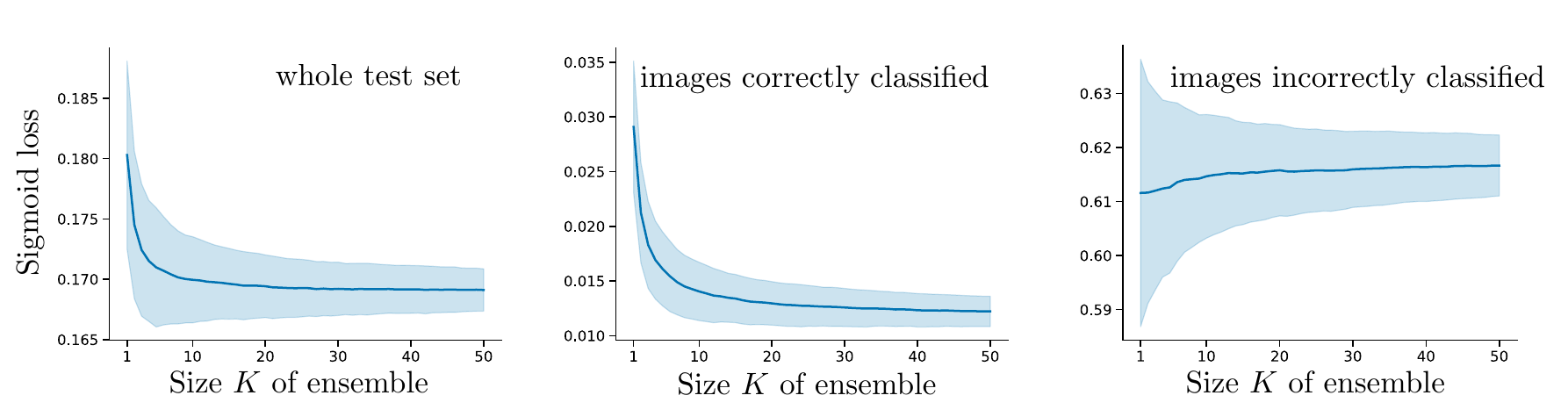}
\caption{\label{fig:sigmo_derma}Evolution of the sigmoid loss of a dropout ensemble on the 
dermatology data set as the number of models grows (mean and standard deviation over $500$ repetitions). \emph{(Left)} The sigmoid loss averaged over the whole test appears to be decreasing. \emph{(Middle)} The loss averaged over images for which the asymptotic prediction $\ybar_\infty$ is correct is getting better.  \emph{(Right)} The loss averaged over images for which the asymptotic prediction $\ybar_\infty$ in incorrect is getting worse. The behaviours of these middle and right panels are explained by our theory. Notice that the three $y$-axes have different scales.}
\end{figure}

We use a simple LeNet-like convolutional network \citep{lecun1998gradient} whose fully connected layers are regularised with a dropout rate of $50 \%$ \citep{srivastava2014dropout}. At test time, our base predictions $\yhat_1, \ldots ,\yhat_K$ are the outputs of the trained network with different random dropout masks. The ensemble prediction $\ybar_K$ is called Monte Carlo model averaging by \citet[Section 7.5]{srivastava2014dropout} and was popularised in particular by \citet{gal2016dropout}, who called it  \emph{Monte Carlo dropout}.

As we saw in the introduction (Figure~\ref{fig:crossent_derma}), the test cross-entropy is a decreasing function of $K$. This is explained by our theory, that indicates that, the average cross-entropy should be decreasing \emph{for all test images}, because of Theorem \ref{th:exch} and the convexity of the cross-entropy. On the other hand, Figure~\ref{fig:accuracy_derma} illustrated that the accuracy of the model was improving for images well-classified by the asymptotic model $\ybar_\infty $, and getting worse for the other points. This is again consistent with our theory, illustrating the dichotomy of Theorem~\ref{th:class-error}.

One result that was not illustrated in the introduction was the theorem for smooth nonconvex losses (Theorem~\ref{th:smooth-ncvx}). To fill this gap, we look at the evolution of the sigmoid loss (see Figure \ref{fig:sigmoid}) as the dropout ensemble grows. The result, displayed in Figure \ref{fig:sigmo_derma}, tells a story similar to the one of the classification error: the loss averaged over images well-classified by $\ybar_\infty $ is going down, and going up when averaged over wrongly classified images, consistently with Theorem~\ref{th:smooth-ncvx}.

\review{

To also illustrate our results in the multiclass setting, we consider a version of the same dataset with an additional class, corresponding to basal-cell carcinoma, which is the most common type of skin cancer. We present our results in Figure~\ref{fig:multi}. Here, while the assumptions of Theorem \ref{th:class-error} are difficult to check in practice, the conclusion of the theorem seems to hold: the accuracy increases for correct predictions and decreases for completely incorrect predictions. For predictions that are neither, the accuracy is empirically decreasing. We conjecture that some extension of Theorem \ref{th:class-error} should explain this phenomenon.

\begin{figure}
\centering
\includegraphics[width = \columnwidth]{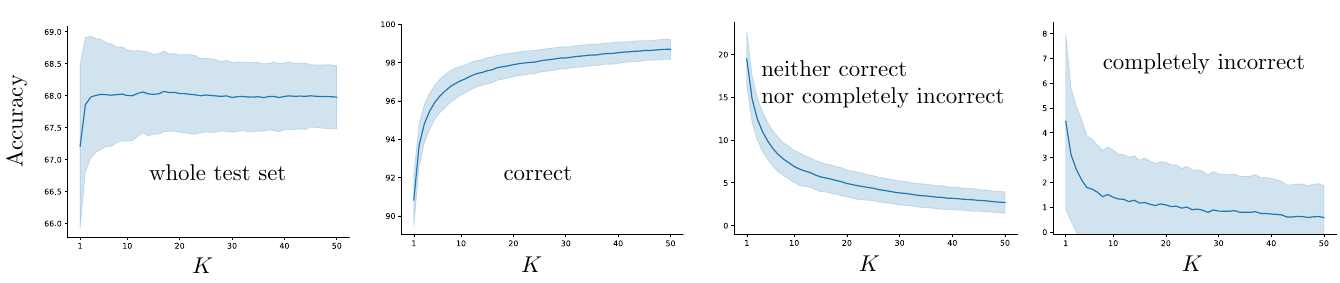}
\caption{\review{\label{fig:multi}Evolution of the accuracy of a dropout ensemble on the 3-class
dermatology data set as the number of models grows (mean and standard deviation over $500$ repetitions). \emph{(Left)} The accuracy averaged over the whole test set has a non-monotonic behaviour.  \emph{(Centre-left)}  The accuracy averaged over images for which the prediction is asymptotically correct is improving.  \emph{(Centre-Right)} The accuracy averaged over images for which the prediction is asymptotically incorrect (yet the correct label is not the one with the smallest predicted probability) is degrading.  \emph{(Right)} The accuracy averaged over images for which the prediction is asymptotically completely incorrect (\emph{i.e.} the correct label has the smallest predicted probability) is degrading. Theorem \ref{th:class-error} provides a theoretical explanation of the behaviour of centre-left and right panels.}}
\end{figure}

}

\subsection{Wisdom of crowds for movie rating predictions}

As an example of prediction of a continuous value, we now turn to an experiment closer to social sciences than machine learning. The goal is to predict the rating of an upcoming movie, using the wisdom of crowds principle, based on data collected by \citet{simoiu2019studying}.

Several people were asked to come up with a prediction for the future ‘‘audience score" rating of an upcoming movie on the website \url{www.rottentomatoes.com}. The full dataset contains $20$ movies, that were rated (out of $100$) several months before their release, by around $450$ users (the minimum number of ratings is $447$ and the maximum $478$).

\begin{figure}
\centering
\includegraphics[width = \columnwidth]{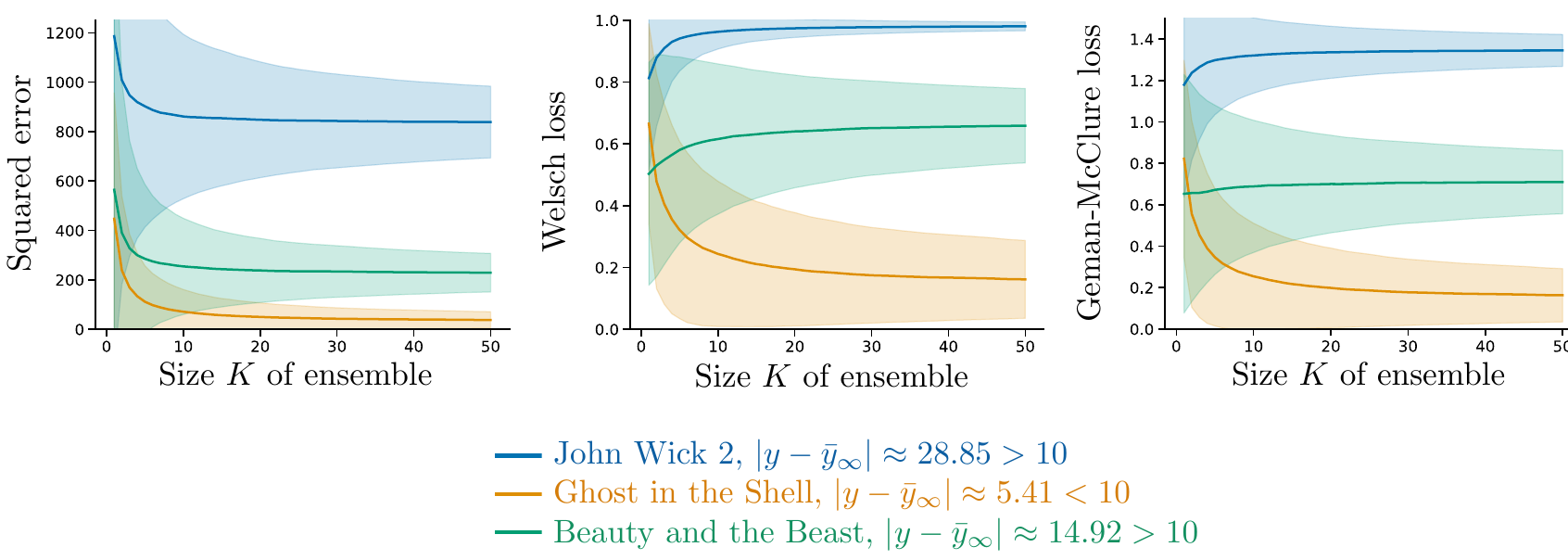}
\caption{\label{fig:crowds1}Evolution of the error at predicting the ratings of three movies as the size of the crowd grows (mean and standard deviation over $10,000$ repetitions). \emph{(Left)} The squared error is decreasing for all three movies. Since this is a convex loss, this is in line with Theorem~\ref{th:exch}. \emph{(Middle and Right)} The Welsh and Geman-McClure losses decreases for Ghost in the Shell, but increases for the other two movies. This is in line with Theorem~\ref{th:smooth-ncvx}, which implies that, for these two losses, the loss should be eventually decreasing when $|y - \bar{y}_\infty| < 10$, and increasing when $|y - \bar{y}_\infty| > 10$.}
\end{figure}

For each movie and for each $K \in \{1,\ldots, 50\}$, we sample randomly (with replacement) $K$ of its ratings $\yhat_1 \ldots, \yhat_K \in [1,100]$ to build an ensemble prediction $\ybar_K\in [1,100]$. We also have a ground truth rating $y \in [1,100]$ for each movie, that was collected on Rotten Tomatoes after it came out.
Our goal is to study if the ensemble prediction of a crowd of $K$ is more accurate that the one of $K+1$. To this end, we look in Figure \ref{fig:crowds1} at three losses for predicting a continuous value: the squared error, the Welsh loss, and the Geman-McClure loss (for more details on these two losses, see Figure \ref{fig:sigmoid}).

The squared error is convex, and consequently decreases, as predicted by Theorem~\ref{th:exch}. 
The two other losses, on the other hand, are non-convex. 
Their hyperparameters were chosen such that both losses are strictly convex when the error is small ($|y - \bar{y}_\infty| < 10$), and strictly concave when $|y - \bar{y}_\infty| > 10$, as in Fig.~\ref{fig:sigmoid}. 
Since these two losses are additionally smooth (\emph{i.e.}, infinitely differentiable, \citealp{barron2019general}), we can use Theorem~\ref{th:smooth-ncvx} to predict that they will go up for John Wick 2 and Beauty and the Beast (because the errors are larger than $10$) and go down for Ghost in the Shell (because the error is smaller than $10$). 

In Appendix~\ref{app:more_movies}, we show similar curves for three more movies. It is also possible to produce such curves for all $20$ movies using a Python notebook available at \url{https://github.com/pamattei/Getting-Better-Ensembles}.

\section{Conclusion}

Our initial question was \emph{‘‘Is it always true that an ensemble of $K+1$ models performs better than an ensemble of $K$ models?''} When the loss is convex and the ordering of the ensemble does not matter, Theorems~\ref{th:exch} and~\ref{th:strong} provide a clear and positive answer to this question. 
When the loss is non-convex, however, a more nuanced picture must be drawn: ensembles will get better in some settings, and worse in others. 
\review{Namely, ensembles will keep getting better (respectively worse) when their asymptotic prediction falls in a locally convex (respectively concave) spot of the loss.

\subsection{Practical guidelines}

 Our work leads us to strengthen the practical guidelines provided by \citet[Section 5.2]{probst2018tune}: \emph{in general, ensembles should be as large as computationally feasible}. Indeed, we showed that this is always true for convex losses. Furthermore, even for nonconvex losses, it is likely that, when averaged over the whole dataset, the loss will be decreasing, since it is reasonable to expect to see more ‘‘good'' points than ‘‘bad'' points.  

An additional empirical guideline is related to evaluation of ensemble techniques. When a new technique is proposed, we recommend to report both convex and nonconvex losses when it is possible. Indeed, since \emph{any} ensemble will get better for a convex loss, showing such improvements will hardly be useful to assess the quality of an ensemble technique. Adding other nonconvex metrics can paint a more detailed picture. Concretely, when evaluating ensembles of classifiers, we recommend to always show, at the very least, the cross-entropy (and/or the Brier score) and the classification error. This guideline is already followed quite often (e.g. by  \citealp{lakshminarayanan2017simple}, Figure 2)

\subsection{Future work}}

Interesting future works include considering other aggregation functions than the mean, or looking at the whole distribution of $L(\ybar_k)$ and not just its mean. In particular, it would be interesting to see if the variance of the loss, called \emph{algorithmic variance} by \citet{lopes2019estimating}, is decreasing or not. 
\review{

All our results were at the individual level, and can be generalised to losses averaged over datasets using the fact that averages of decreasing sequences are decreasing. However, this reasoning will not work for metrics that are not simple averages. For instance, it would be interesting to study the area under the receiver operating characteristic curve or the median squared and errors. \citet{probst2018tune} noticed non-monotonic behaviours for these three losses.

Finally, it would be interesting to assess whether or not ensembling can improve model calibration. This question was investigated by researchers working on scoring rules. In particular, it was shown that ensembles of calibrated models can be uncalibrated \citep{hora2004probability,ranjan2010combining,gneiting2013combining}. It would be interesting to study under which conditions calibration may be monotonically improving.

}

\acks{This work was supported by the French government, through the 3IA Côte d’Azur Investments in the Future project managed by the National Research Agency (ANR) with the reference number ANR-19-P3IA-0002. DG also acknowledges the support of ANR through project NIM-ML (ANR-21-CE23-0005-01) and of EU Horizon 2020 project AI4Media (contract no. 951911).
{Parts of this work were done as DG was employed at Universit\'e C\^ote d'Azur.} The authors thank Raphaël Razafindralambo for spotting an error in an earlier version of this paper.}

\vskip 0.2in
\bibliography{sample}

\newpage 
\appendix

\section{Proof of strict improvements under strong convexity}
\label{app:strong}

We prove here Theorem \ref{th:strong}. Strong convexity leads to the following strengthening of Jensen's inequality (see, \emph{e.g.}, \citealp{nikodem2014strongly}, Theorem~2).

\begin{theorem}[Jensen improvement]
\label{th:strong_jensen}
If $L: C \rightarrow \mathbb{R}$ is $\mu$-strongly convex, then, for all $x_1,\ldots,x_K$, and
	\begin{equation}
		\label{eq:nikodem}
		L\left(\frac{1}{K}\sum_{k=1}^K   x_k \right) \leq \frac{1}{K}\sum_{k=1}^K  L\left( x_k \right) - \mu  \frac{1}{K}\sum_{k=1}^K \left|\left|  x_k - \frac{1}{K}\sum_{j=1}^K  x_j\right|\right|^2.
	\end{equation}
\end{theorem}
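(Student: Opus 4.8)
The plan is to obtain this strengthened Jensen inequality by absorbing the quadratic correction into the function and then applying ordinary Jensen. The key observation I would use is that, whenever the norm $\|\cdot\|$ on $C$ comes from an inner product (which holds in every setting where we apply the result, namely $C\subseteq\mathbb{R}^d$ with the Euclidean norm), a function $L$ is $\mu$-strongly convex if and only if $g:=L-\mu\|\cdot\|^2$ is convex in the usual sense. This equivalence is a direct consequence of the elementary identity $t\|x\|^2+(1-t)\|y\|^2-\|tx+(1-t)y\|^2=t(1-t)\|x-y\|^2$: inserting $g(tx+(1-t)y)=L(tx+(1-t)y)-\mu\|tx+(1-t)y\|^2$ into the convexity inequality for $g$ and rearranging reproduces exactly the definition of $\mu$-strong convexity, and every step is reversible.

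Given this, the proof is short. First I would apply ordinary Jensen's inequality to the convex function $g$ at $x_1,\dots,x_K$ with uniform weights $1/K$, obtaining $g(\bar x)\le\tfrac1K\sum_{k=1}^K g(x_k)$ with $\bar x:=\tfrac1K\sum_{k=1}^K x_k$; unfolding the definition of $g$ and isolating $L(\bar x)$ gives $L(\bar x)\le\tfrac1K\sum_{k=1}^K L(x_k)-\mu\left(\tfrac1K\sum_{k=1}^K\|x_k\|^2-\|\bar x\|^2\right)$. Second, I would use the bias--variance (``parallel axis'') identity $\tfrac1K\sum_{k=1}^K\|x_k\|^2-\|\bar x\|^2=\tfrac1K\sum_{k=1}^K\|x_k-\bar x\|^2$, which follows by expanding $\|x_k-\bar x\|^2$ and summing over $k$. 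Substituting it yields precisely Eq.~\eqref{eq:nikodem}.

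An alternative, if one wishes to avoid the substitution trick, is a direct induction on $K$: the case $K=1$ is trivial and the case $K=2$ with $t=1/2$ is exactly the definition of $\mu$-strong convexity (the remainder $\tfrac12\|x_1-\bar x\|^2+\tfrac12\|x_2-\bar x\|^2$ collapses to $\tfrac14\|x_1-x_2\|^2$); for the inductive step one writes $\bar x_K=\tfrac{K-1}{K}\bar x_{K-1}+\tfrac1K x_K$, applies $\mu$-strong convexity with $t=(K-1)/K$, then the induction hypothesis to $\bar x_{K-1}$, and recombines the quadratic remainders using the within-group sum-of-squares recursion $\sum_{k=1}^K\|x_k-\bar x_K\|^2=\sum_{k=1}^{K-1}\|x_k-\bar x_{K-1}\|^2+\tfrac{K-1}{K}\|x_K-\bar x_{K-1}\|^2$ (whose cross term vanishes because $\sum_{k=1}^{K-1}(x_k-\bar x_{K-1})=0$).

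I do not anticipate a real obstacle here. The only point requiring care is that both the $t(1-t)\|x-y\|^2$ identity and the parallel-axis identity rely on the norm being Hilbertian; for an arbitrary norm on $C$ the statement is more delicate, but $C$ is Euclidean in every case we use it, and otherwise one simply invokes \citet{nikodem2014strongly}, Theorem~2. Finally, this is precisely the tool needed to sharpen Lemma~\ref{lem:triangle}: applying Eq.~\eqref{eq:nikodem} to the $K$ leave-one-out averages $\tfrac1{K-1}\sum_{k\neq j}\hat{y}_k$, taking expectations, and using exchangeability together with a short covariance computation will deliver the quantitative bound of Theorem~\ref{th:strong}.
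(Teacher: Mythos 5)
The paper does not prove Theorem~\ref{th:strong_jensen} at all: it is quoted directly from \citet{nikodem2014strongly} (Theorem~2) and used as a black box inside the proof of Theorem~\ref{th:strong}. Your argument is a correct, self-contained proof. The route you chose (absorb the quadratic correction into $g := L - \mu\norm{\cdot}^2$, apply ordinary Jensen, then undo the substitution via the parallel-axis identity $\tfrac1K\sum_k\norm{x_k}^2-\norm{\bar x}^2=\tfrac1K\sum_k\norm{x_k-\bar x}^2$) is the standard one, and both identities you invoke are elementary polarization computations. You have also correctly tracked the paper's convention: its definition of $\mu$-strong convexity carries no $\tfrac12$ in front of $\mu t(1-t)\norm{x-y}^2$, so the matching convex companion is $g=L-\mu\norm{\cdot}^2$ rather than $L-\tfrac{\mu}{2}\norm{\cdot}^2$, and the remainder in the conclusion comes out with coefficient $\mu$ as claimed. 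Your caveat about the norm being Hilbertian is apt — both the equivalence ``$L$ is $\mu$-strongly convex $\Leftrightarrow$ $g$ is convex'' and the parallel-axis identity rely on the polarization identity, and this restriction is equally implicit in the cited reference (where the norm on the domain is taken to come from an inner product) and in every use the paper makes of the result. The inductive alternative you sketch is also correct; the within-group sum-of-squares recursion $\sum_{k\le K}\norm{x_k-\bar x_K}^2=\sum_{k\le K-1}\norm{x_k-\bar x_{K-1}}^2+\tfrac{K-1}{K}\norm{x_K-\bar x_{K-1}}^2$ makes the quadratic remainders recombine with equality at each step, so no slack is lost.
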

Our goal is to combine this result with our basic identity Eq.~\eqref{eq:triangle} to prove Theorem~\ref{th:strong}.
We use Theorem~\ref{th:strong_jensen} with $x_j = \frac{1}{K-1}  \sum_{k \neq j}  \hat{y}_k$. 
Using Eq.~\eqref{eq:triangle} and the exchangeability assumption, we find that
\begin{equation}
\mathbb{E}\left[	L\left(\frac{1}{K}\sum_{k=1}^K   \hat{y}_k \right) \right] \leq \mathbb{E} \left[ L\left( \frac{1}{K-1} \sum_{k = 1}^{K-1} \hat{y}_k \right)   \right] - \mu \mathbb{E}\left[ \left|\left| \frac{1}{K-1}\sum_{k=1}^{K-1}   \hat{y}_k - \frac{1}{K}\sum_{k=1}^K   \hat{y}_k\right|\right|^2\right].
\end{equation}
We can then compute the expectation on the right-hand side
\small
\begin{align}
\mathbb{E}\left[ \left|\left| \frac{1}{K-1}\sum_{k=1}^{K-1}   \hat{y}_k - \frac{1}{K}\sum_{k=1}^K   \hat{y}_k\right|\right|^2\right] &= \frac{1}{K^2(K-1)^2} 	\mathbb{E}\left[ \left|\left| K\sum_{k=1}^{K-1}   \hat{y}_k - (K-1) \left(\hat{y}_K + \sum_{k=1}^{K-1} \hat{y}_k \right)\right|\right|^2\right] \nonumber \\
&= \frac{1}{K^2(K-1)^2} \mathbb{E}\left[ \left|\left| \sum_{k=1}^{K-1}   \hat{y}_k - (K-1) \hat{y}_K \right|\right|^2\right] \nonumber \\
&= \frac{1}{K^2(K-1)^2} \textup{tr} \left( \textup{Cov}\left( \sum_{k=1}^{K-1}   \hat{y}_k - (K-1) \hat{y}_K \right) \right). \label{eq:strongproof} 
\end{align}
\normalsize
We can now use Bienaymé's identity and exchangeability to compute the trace of the covariance:
\begin{multline}
\textup{tr} \left( \textup{Cov}\left( \sum_{k=1}^{K-1}   \hat{y}_k - (K-1) \hat{y}_K \right) \right) =  (K-1) \textup{tr} \left( \Sigma \right)+ (K-1)(K-2)\textup{tr} \left( \textup{Cov}\left(\yhat_1,\yhat_2\right)\right)   \\ + (K-1)^2 \textup{tr} \left( \Sigma \right)- 2 \textup{tr}\left( \textup{Cov}\left( \sum_{k=1}^{K-1}   \hat{y}_k, (K-1) \hat{y}_K \right)\right),
\end{multline}
finally leading to 
\begin{equation}
\textup{tr} \left( \textup{Cov}\left( \sum_{k=1}^{K-1}   \hat{y}_k - (K-1) \hat{y}_K \right) \right) = K(K-1) \textup{tr} (\Sigma) (1- \rho)
\, .
\end{equation}	
Plugging this expression into Equation \eqref{eq:strongproof} gives the desired bound.
$\blacksquare$

\section{More details on nonconvex losses}

\label{app:spherical}

\subsection{More details on the spherical score}

Consider the setting of Section~\ref{sec:smoothnoncvx}.
The spherical score is defined as 
\begin{equation}
L_1(\yhat) = - \frac{\yhat}{\sqrt{\yhat^2+(1-\yhat)^2}}
\, ,
\end{equation}
when the true label is $y = 1$. When the true label is $0$, as in Section~\ref{sec:smoothnoncvx}, it is defined as
\begin{equation}
L_0(\yhat) = - \frac{1-\yhat}{\sqrt{\yhat^2+(1-\yhat)^2}}.
\end{equation}
The second derivative of $L_0$ is $\yhat \mapsto (-4 \yhat^2 + \yhat + 1)/(2 (\yhat - 1) x + 1)^{5/2}$, whose unique root in $[0,1]$ is $1/8 + \sqrt{17}/8 \approx 0.64$. Therefore, as seen in Figure~\ref{fig:sigmoid}, $L_0$ will be strictly convex on $[0,1/8 + \sqrt{17}/8 )$ and strictly concave on  $(1/8 + \sqrt{17}/8 ,1]$.

\review{
\subsection{More details on the tangent loss}

The tangent loss \citep{masnadi2010design} is a loss function for binary classification.  There are two labels, $y=1$ and $y=-1$, and the output of our model is a score $\hat{y} \in \mathbb{R}$ that indicates its confidence in predicting that the label is $1$. When the true label is $y \in \{-1,1 \}$, it is defined as 
\begin{equation}
    L_y(\hat{y}) =  (2 \textup{arctan}(y \, \hat{y}) - 1)^2.
\end{equation}

This function is convex around its minimum of one, but nonconvex when preditions are overconfident. Indeed, when $y = 1$, the fonction is locally convex on $[y_{-},y_+]$ with $y_- \approx -0.51$ and  $y_+ \approx 1.26$, and locally concave elsewhere.
}

\section{Proof of the theorem for smooth losses (Theorem \ref{th:smooth-ncvx})}
\label{sec:proof_smooth}

We only treat the case $H(\hat{y}_\infty) \succ 0 $, since the case $H(\hat{y}_\infty) \prec 0 $ follows by replacing $L$ by $-L$.
We will base our proof on the following lemma, which generalises a result of \citet[Section 4.3]{small2010expansions}, and is obtained by Taylor-expanding the loss up to the fourth order. In this context, such a technique is often called the delta method for moments (see, \emph{e.g.},  \citealp{small2010expansions,bickel2015mathematical}). 
The proof of the lemma is slightly tedious, and is provided after the main theorem.

\begin{lemma}[\textbf{fourth order delta method}]
\label{lem:delta}
Let $X_1,\ldots,X_n \in \mathbb{R}^d$ be a sequence of nondegenerate i.i.d. random variables, whose first $5$ moments are finite, with mean $\mu$ and covariance matrix $\Sigma \succeq 0$. 
Let $L$ be a function with continuous and bounded partial derivatives of order up to 5, with Hessian matrix $H$. Then, there exists a constant $\alpha \in \mathbb{R}$, that depends on the first $3$ moments of $X$ and the partial derivatives of the loss of order up to $4$, such that, when $K$ goes to infinity
\begin{equation}
\mathbb{E}[L(\bar{X}_K)] = L(\mu) +\frac{\textup{tr}(H(\mu) \Sigma)}{2K}+ \frac{\alpha}{K^2} + \mathcal{O}\left(\frac{1}{K^{5/2}}\right)
\, .
\end{equation}
\end{lemma}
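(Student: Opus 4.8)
The plan is to perform a Taylor expansion of $L$ around $\mu$ evaluated at the random point $\bar{X}_K$, and then take expectations, tracking the order in $1/K$ of each resulting term. Write $\Delta_K = \bar{X}_K - \mu$, so that $\Expec[\Delta_K] = 0$ and the centered moments of $\Delta_K$ scale in a well-understood way: entrywise, $\Expec[\Delta_K^{\otimes 2}]$ is $\Sigma/K$, the third moment tensor $\Expec[\Delta_K^{\otimes 3}]$ is of order $1/K^2$ (it equals the third cumulant of $X$ divided by $K^2$), the fourth moment tensor decomposes into a product-of-covariances piece of order $1/K^2$ plus a fourth-cumulant piece of order $1/K^3$, and in general $\Expec[\Delta_K^{\otimes m}] = \mathcal{O}(K^{-\lceil m/2\rceil})$. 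Since $L$ has bounded continuous derivatives up to order $5$, I would expand
\begin{equation}
L(\bar{X}_K) = L(\mu) + \nabla L(\mu)^\top \Delta_K + \tfrac{1}{2} \Delta_K^\top H(\mu) \Delta_K + \tfrac{1}{6} D^3 L(\mu)[\Delta_K^{\otimes 3}] + \tfrac{1}{24} D^4 L(\mu)[\Delta_K^{\otimes 4}] + R_5,
\end{equation}
where $R_5$ is the fifth-order Lagrange remainder, bounded in absolute value by $\frac{1}{120}\|D^5 L\|_\infty \|\Delta_K\|^5$.

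Taking expectations kills the linear term. The quadratic term gives $\frac{1}{2}\Expec[\Delta_K^\top H(\mu)\Delta_K] = \frac{1}{2K}\operatorname{tr}(H(\mu)\Sigma)$ exactly. The cubic term contributes $\frac{1}{6} D^3 L(\mu)$ contracted against $\Expec[\Delta_K^{\otimes 3}]$, which is $\mathcal{O}(1/K^2)$ — this is part of the $\alpha/K^2$ term and depends only on the third moments of $X$ and third derivatives of $L$. The quartic term contributes $\frac{1}{24} D^4 L(\mu)$ contracted against $\Expec[\Delta_K^{\otimes 4}]$; here the dominant $1/K^2$ piece comes from the product-of-covariances part of the fourth moment and depends only on $\Sigma$ and fourth derivatives of $L$, while the fourth-cumulant part is $\mathcal{O}(1/K^3)$ and absorbed into the error. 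So the constant $\alpha$ is the sum of the cubic contribution and the leading quartic contribution, which is exactly the claimed dependence ("first $3$ moments of $X$ and partial derivatives of the loss of order up to $4$" — note the covariance is itself a function of the first two moments, and a product of two covariances lands inside a degree-$4$ expression). Finally, for the remainder, I would bound $\Expec[|R_5|] \leq \frac{1}{120}\|D^5 L\|_\infty \Expec[\|\Delta_K\|^5]$ and show $\Expec[\|\Delta_K\|^5] = \mathcal{O}(K^{-5/2})$; this is the step that genuinely uses the finiteness of the first $5$ moments (via, e.g., the Marcinkiewicz–Zygmund or Rosenthal inequality applied coordinatewise, or a direct moment computation on $\|\sum (X_k - \mu)\|^5$).

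The main obstacle I expect is the careful bookkeeping of the moment-tensor scalings — in particular verifying cleanly that $\Expec[\|\Delta_K\|^5] = \mathcal{O}(K^{-5/2})$ rather than something weaker, since the fifth power is odd and one cannot just pair terms. Using $\|\Delta_K\|^5 = (\|\Delta_K\|^2)^{5/2}$ together with the fact that $\|\sqrt{K}\Delta_K\|^2$ has bounded moments up to order $5/2$ (which follows from finiteness of the first five moments of $X$ and a Rosenthal-type bound) is the cleanest route; alternatively one expands $\|\Delta_K\|^4$ exactly, getting $\mathcal{O}(K^{-2})$, and applies Cauchy–Schwarz as $\Expec[\|\Delta_K\|^5] \le \Expec[\|\Delta_K\|^4]^{1/2}\,\Expec[\|\Delta_K\|^6]^{1/2}$, though that needs a sixth moment — so the Rosenthal route is preferable given we only assume five moments. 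The rest is routine multilinear algebra: expanding the derivative tensors, collecting the $K^{-2}$ coefficients into $\alpha$, and confirming no hidden $K^{-3/2}$ term appears (it does not, because the only odd-order tensor contributing below $K^{-5/2}$ is the cubic one, which sits at $K^{-2}$).
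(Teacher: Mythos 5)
Your proof is correct and follows essentially the same route as the paper: both perform a fourth-order Taylor expansion of $L$ at $\mu$, take expectations, and bookkeep the $1/K$-scaling of the centered moment tensors of $\bar{X}_K$ (third moment $\sim K^{-2}$ via the third cumulant, fourth moment $\sim K^{-2}$ from the covariance pairings plus a $K^{-3}$ cumulant correction). The only difference is that the paper invokes Theorem~5.3.2 of Bickel and Doksum to obtain the expansion together with the $\mathcal{O}(K^{-5/2})$ remainder in one stroke, whereas you derive the expansion directly and bound the Lagrange remainder explicitly via a Rosenthal-type estimate on $\mathbb{E}\bigl[\|\bar{X}_K - \mu\|^5\bigr]$, which makes the role of the fifth-moment and fifth-derivative hypotheses more transparent.
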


\review{
\noindent
\emph{Proof of Theorem~\ref{th:smooth-ncvx}.}
}
Applying Lemma~\ref{lem:delta} successively to $\bar{y}_K$ and $\bar{y}_{K+1}$ and taking the difference between the two terms, we obtain 
\begin{align}
\mathbb{E}[L(\bar{y}_{K})] - \mathbb{E}[L(\bar{y}_{K+1})] &= \frac{\textup{tr}(H(\hat{y}_\infty) \Sigma)}{2} \left(\frac{1}{K} - \frac{1}{K+1}\right) + \alpha \left(\frac{1}{K^2} - \frac{1}{(K+1)^2}\right) + \mathcal{O}\left(\frac{1}{K^{5/2}}\right) \\
	&= \frac{\textup{tr}(H(\hat{y}_\infty) \Sigma)}{2} \frac{1}{K(K+1)} + \alpha \frac{1+2K}{K^2(K+1)^2}  + \mathcal{O}\left(\frac{1}{K^{5/2}}\right).
\end{align}
Therefore,
\begin{align}
	K(K+1)\left( \mathbb{E}[L(\bar{y}_{K})] - \mathbb{E}[L(\bar{y}_{K+1})] \right)
	&= \frac{\textup{tr}(H(\hat{y}_\infty) \Sigma)}{2} + \alpha \frac{1+2K}{K(K+1)}  + \mathcal{O}\left(\frac{1}{K^{1/2}}\right),
\end{align}
which implies that $K(K+1)\left( \mathbb{E}[L(\bar{y}_{K})] - \mathbb{E}[L(\bar{y}_{K+1})] \right)$ will converge to $\textup{tr}(H(\hat{y}_\infty) \Sigma)/2$ when~$K$ goes to infinity. 
Therefore, for $K$ large enough, the sign of $\mathbb{E}[L(\bar{y}_{K})] - \mathbb{E}[L(\bar{y}_{K+1})]$ is constant and exactly equal to the sign of $\textup{tr}(H(\hat{y}_\infty) \Sigma)$.

Since $H(\hat{y}_\infty) \succ 0$, $\Sigma \succeq 0$, and $\Sigma \neq 0$ because we assumed that the random variable is nondegenerate, a standard property of symmetric matrices (see, \emph{e.g.}, \citealp{serre2010}, Proposition~6.1) ensures that $H(\hat{y}_\infty) \Sigma$ is diagonalisable, with only real nonnegative eigenvalues (at least one of them being positive). This implies that $\textup{tr}(H(\hat{y}_\infty) \Sigma)>0$, and, in turn, that for $K$ large enough, $\mathbb{E}[L(\bar{y}_{K+1})] < \mathbb{E}[L(\bar{y}_{K})]$. 
\qed

\subsection{Proof of the fourth order delta method (Lemma~\ref{lem:delta})}

Lemma \ref{lem:delta} was proven in the univariate case by \citet[Section 4.3]{small2010expansions} and we extend it here to the multivariate case. We will see it as a consequence of the multivariate version of delta method presented by \citet{bickel2015mathematical}.

Under the assumptions of Lemma~\ref{lem:delta}, Theorem~5.3.2 of \citet{bickel2015mathematical} implies
\begin{multline}
	\label{eq:bickel}
	\mathbb{E}[L(\bar{X}_K)] = L(\mu) +  \sum_{r=2}^{4} \frac{1}{r!} \sum_{j_1,j_2,\ldots ,j_r = 1}^d \frac{\partial^r L}{\partial x_{j_1} \cdots \partial x_{j_r}}(\mu)\mathbb{E}[ (\bar{X}_{Kj_1} - \mu_{j_1})\cdots(\bar{X}_{Kj_r} - \mu_{j_r})] \\ + \mathcal{O}(K^{-5/2})
 \, .
\end{multline}
Let us look at the quadratic term. We have
\begin{align}
	\frac{1}{2} \sum_{j_1,j_2= 1}^d \frac{\partial^2 L}{\partial x_{j_1} \partial x_{j_2}}(\mu)\mathbb{E}[ (\bar{X}_{Kj_1} - \mu_{j_1})(\bar{X}_{Kj_2} - \mu_{j_2})] &=\frac{1}{2} \mathbb{E} \left[ (\bar{X}_{K} - \mu) ^T H(\mu) (\bar{X}_{K} - \mu)\right] \\
	&= \frac{1}{2} \textup{tr}(H(\mu) \textup{Cov}(\bar{X}_{K})) = \frac{\textup{tr}(H(\mu) \Sigma)}{2K}.
\end{align}

Let us now look at the third and fourth order terms. We need to assess how $K$ affects the multivariate moments of the empirical mean. Let us define first the multivariate moments of $X$ as, for $r \in \{3,4\}$ and $j_1,\ldots,j_r \in \{1,\ldots,d\}$
\begin{equation}
\mu_{j_1,\ldots,j_r} = \mathbb{E}[ (X_{j_1} - \mu_{j_1})\ldots (X_{j_r} - \mu_{j_r})]
\, ,
\end{equation}
The moments of $\bar{X}_K$ can now be expressed in terms of moments of $X$. Following for instance \citet[p.~54]{hall2013bootstrap} or \citet[Section~2.3]{mccullagh2018tensor}, we have 
\begin{equation}
\mathbb{E}[ (\bar{X}_{Kj_1} - \mu_{j_1})(\bar{X}_{Kj_2} - \mu_{j_2})(\bar{X}_{Kj_3} - \mu_{j_3})] = \frac{1}{K^2} \mu_{j_1,j_2,j_3}
 \, ,
\end{equation}
and 
\begin{multline}
	\mathbb{E}[ (\bar{X}_{Kj_1} - \mu_{j_1})(\bar{X}_{Kj_2} - \mu_{j_2})(\bar{X}_{Kj_3} - \mu_{j_3})(\bar{X}_{Kj_4} - \mu_{j_4})] = \\ \frac{\mu_{j_1,j_2}\mu_{j_3,j_4} + \mu_{j_1,j_3}\mu_{j_2,j_4} + \mu_{j_1,j_4}\mu_{j_2,j_3}}{K^2}  + \mathcal{O}\left( \frac{1}{K^3}\right),
\end{multline}
where the constant in the $\mathcal{O}$ is a multivariate cumulant.
Plugging these expressions into Equation \eqref{eq:bickel} finally gives
\begin{equation}
	\mathbb{E}[L(\bar{X}_K)] = L(\mu) +\frac{\textup{tr}(H(\mu) \Sigma)}{2K}+ \frac{\alpha}{K^2} + \mathcal{O}\left(\frac{1}{K^{5/2}}\right),
\end{equation}
with
\begin{multline}
	\alpha  = \frac{1}{3!} \sum_{j_1,j_2,j_3 = 1}^d \frac{\partial^3 L}{\partial x_{j_1}  \partial x_{j_2}\partial x_{j_3}}(\mu) \mu_{j_1,j_2,j_3} \\ + \frac{1}{4!} \sum_{j_1,j_2,j_3,j_4 = 1}^d  \frac{\partial^4 L}{\partial x_{j_1}  \partial x_{j_2}\partial x_{j_3}\partial x_{j_4}}(\mu)\left ( \mu_{j_1,j_2}\mu_{j_3,j_4}  + \mu_{j_1,j_3}\mu_{j_2,j_4} + \mu_{j_1,j_4}\mu_{j_2,j_3} \right).
\end{multline}
One can check that, in the univariate case ($d=1$), our expression for $\alpha$ is identical to the one given by \citet[Section 4.3]{small2010expansions}. \qed

\section{Omitted proofs for Section~\ref{sec:monotonicity-tail-probabilities}}
\label{sec:proof-monotonicity-tail-proba}

\subsection{Proof of Theorem~\ref{th:monotonicity-tail-proba}}
\label{sec:proof-monotonicity-tail-proba-univariate}

For all $i \geq 1$, let us set $Y_i\defeq X_i - \mu$. 
Note that, by definition, the $Y_i$s are i.i.d. and centered. 
We are going to apply two theorems of \citet{petrov1965probabilities} to the random variables $Y_i$s.
Let us recall briefly Petrov's results here. 
We start with some notation: 
$V(x)$ denotes the cumulative distribution function of $X_1$, 
$\mathfrak{U}$ denotes the points of increase of $V$ and $A$ its supremum, 
$\mathcal{B}^+$ denotes the set of non-negative values $h$ such that $\int_0^{+\infty}\exps{hx}\Diff V(x) < +\infty$ and $B$ its supremum. 
Further, assuming $B>0$, for any $0<h<B$, we define 
\[
R(h) = \int_{-\infty}^{+\infty} \exps{hx} \Diff V(x)
\, , \qquad 
m(h) = \frac{1}{R(h)}\int_{-\infty}^{+\infty} x\exps{hx} \Diff V(x)
\qquad \text{and}\qquad
\sigma(h)^2 = \frac{\Diff m(h)}{\Diff h}
\, .
\]
Finally, define $A_0=\lim_{h\to B^-} m(h)$. 
We are now able to state the results:

\begin{theorem}[Theorem~3 and Theorem~6 from \citet{petrov1965probabilities}]
\label{th:petrov}
Let $Y_1,Y_2,\ldots$ be a sequence of i.i.d. random variables. 
\begin{itemize}
\item Assume that the $Y_i$s are non-lattice such that $B>0$. 
Let $c$ be any constant satisfying the condition $\expec{Y_1} < c < A_0$. 
Then 
\[
\proba{Y_1+ \cdots + Y_n \geq nc}  = \frac{\exp\left(n\log R(h) - nhc\right)}{h\sigma(h)\sqrt{2\pi n}}(1+\littleo{1})
\, .
\]
Here $h$ is the unique real root of the equation $m(h)=c$. 
\item Assume that the $Y_i$ are lattice distributed with values in $a+NH$, $N\in\Relint$. 
Assume that $\expec{Y_1}> -\infty$, $A_0 < +\infty$, and $B>0$. 
If $nx\in na+\Relint H$, then 
\[
\proba{Y_1+\cdots + Y_n = nx} = \frac{H\exp \left( n\log R(h) - nhx\right)}{\sigma(h)\sqrt{2\pi n}} \left( 1 + \mathcal{O}\left(\frac{1}{n}\right) \right)
\, ,
\]
and 
\[
\proba{Y_1 + \cdots + Y_n \geq nx} \leq \frac{H \exp\left(n\log R(h) - nhx\right) }{\sigma(h)\sqrt{2\pi n}(1-\exps{-Hh})}\left(1+\mathcal{O}\left(\frac{1}{n}\right)\right)
\, 
\]
as $n\to +\infty$ uniformly in $x\in [\expec{Y_1}+\epsilon, A_0-\epsilon]$, with $\epsilon > 0$ arbitrary. 
Here $h$ is the unique real root of the equation $m(h)=x$. 
\end{itemize}
\end{theorem}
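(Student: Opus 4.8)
The statement is a verbatim restatement of Theorems~3 and~6 of \citet{petrov1965probabilities}, so in the paper it is invoked rather than reproved; the sketch below is the classical route one would follow to establish it. The engine is the \emph{Esscher change of measure} (exponential tilting). Fix $c$ with $\expec{Y_1} < c < A_0$. Since $B>0$, the cumulant-generating function $h\mapsto \log R(h)$ is finite and, as $Y_1$ is non-degenerate, strictly convex on $(0,B)$, so $m=(\log R)'$ is continuous and strictly increasing there, with $m(0^+)=\expec{Y_1}$ and $m(B^-)=A_0$; hence $m(h)=c$ has a unique root $h\in(0,B)$, and $\sigma(h)^2=m'(h)>0$. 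Define the tilted law $\Diff\tilde V(y)=R(h)^{-1}\exps{hy}\Diff V(y)$; under $\tilde V$ the i.i.d.\ copies $\tilde Y_1,\ldots,\tilde Y_n$ have mean $c$ and variance $\sigma(h)^2$. The first step is the exact identity, obtained by writing $\Diff V^{*n}(s)=R(h)^n\exps{-hs}\Diff\tilde V^{*n}(s)$ and integrating over $[nc,\infty)$,
\[
\proba{Y_1+\cdots+Y_n \geq nc} = R(h)^n\exps{-nhc}\,\expec{\exps{-h(\tilde S_n - nc)}\mathbf{1}(\tilde S_n - nc \geq 0)}
\, ,
\]
with $\tilde S_n=\tilde Y_1+\cdots+\tilde Y_n$ and $R(h)^n\exps{-nhc}=\exp(n\log R(h)-nhc)$ already matching the exponential in the target formula.

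The second step is to evaluate $I_n\defeq\expec{\exps{-h(\tilde S_n - nc)}\mathbf{1}(\tilde S_n - nc\geq 0)}$ sharply. Heuristically, $\tilde S_n-nc$ has a density close to the Gaussian $\tfrac{1}{\sigma(h)\sqrt{2\pi n}}\exps{-t^2/(2n\sigma(h)^2)}$ near $t=0$; since the weight $\exps{-ht}$ concentrates the integral on the scale $t\asymp 1/h$, where the Gaussian factor equals $\tfrac{1}{\sigma(h)\sqrt{2\pi n}}(1+\littleo{1})$, one expects
\[
I_n = \frac{1}{\sigma(h)\sqrt{2\pi n}}\int_0^{\infty}\exps{-ht}\Diff t\,(1+\littleo{1}) = \frac{1}{h\sigma(h)\sqrt{2\pi n}}(1+\littleo{1})
\, ,
\]
and substituting into the identity yields the non-lattice formula. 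Making this rigorous is where the non-lattice hypothesis enters: one expresses $I_n$ through the characteristic function of $\tilde S_n-nc$ (Esseen's smoothing lemma, or a Fourier/Parseval argument), splits the frequency axis into a central window where an Edgeworth / local central limit approximation applies and a complementary region where the non-lattice condition gives $|\hat{\tilde V}(\theta)|<1$, hence $\sup_{|\theta|\geq\delta}|\hat{\tilde V}(\theta)|<1$ on compact sets, so the complementary region contributes only lower-order terms after the $n$-fold product.

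For the lattice case the argument is identical in spirit but uses the \emph{local} limit theorem for lattice sums. With $h$ the root of $m(h)=x$ and span $H$, the tilted law is lattice with span $H$ and mean $x$, and Gnedenko's local CLT (the Edgeworth correction vanishing at the centering point, which produces the $\mathcal{O}(n^{-1})$) gives, uniformly in $x$ on compacts of $(\expec{Y_1},A_0)$, $\tilde\Proba(\tilde S_n=nx)=\tfrac{H}{\sigma(h)\sqrt{2\pi n}}(1+\mathcal{O}(n^{-1}))$; multiplying by $R(h)^n\exps{-nhx}$ gives the point formula. For the tail one sums the point formula over the lattice points $nx, nx+H, nx+2H,\ldots$: bounding each $\tilde\Proba(\tilde S_n=nx+kH)$ by $\tfrac{H}{\sigma(h)\sqrt{2\pi n}}(1+\mathcal{O}(n^{-1}))$ (the Gaussian factor being $\leq 1$) and the tilting weight by $\exps{-h(nx+kH)}$, the geometric series $\sum_{k\geq 0}\exps{-hkH}=(1-\exps{-hH})^{-1}$ reproduces exactly the claimed upper bound.

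The main obstacle is the second step: upgrading the central limit theorem for the tilted sum from convergence in distribution to the \emph{sharp, uniform} asymptotics needed to evaluate the exponentially weighted quantity $I_n$ (resp.\ the lattice point mass) with a multiplicative $1+\littleo{1}$ (resp.\ $1+\mathcal{O}(n^{-1})$) error. This is precisely the technical heart of \citet{petrov1965probabilities} and of the broader ``strong large deviations'' literature \citep{bahadur1960deviations}, and it is exactly where the hypotheses bite: $B>0$ so the tilting exists, $c<A_0$ (resp.\ $\expec{Y_1}>-\infty$ and $A_0<+\infty$) so the root $h$ lies in the interior of the effective domain, and the lattice/non-lattice dichotomy controlling the characteristic function. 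Given these ingredients the remainder is routine bookkeeping.
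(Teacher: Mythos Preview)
Your assessment is correct: the paper does not prove this theorem but merely quotes it from \citet{petrov1965probabilities} as a tool for the proof of Theorem~\ref{th:monotonicity-tail-proba}. The sketch you give (Esscher tilting to recentre at $c$, followed by a sharp local/Edgeworth-type CLT for the tilted sum, with the non-lattice hypothesis controlling the characteristic function away from the origin and, in the lattice case, a geometric summation over the span) is precisely Petrov's own route, so there is nothing to compare.
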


Theorem~\ref{th:petrov} gives us a control of the tail probability, which is precise enough to show monotonicity. 
There are two cases to consider: the \emph{non-lattice} case (assuming (1), (2), and (3)) and the \emph{lattice} case (assuming (1), (2), and (3bis)). 
We denote $p_n  = \proba{\Xbar_n \geq \mu + \varepsilon}$, for $\varepsilon$ satisfying the assumptions of the theorem.

\paragraph{Non-lattice case.}
Under assumption (3), we know that the $Y_i$s are non-lattice random variables. 
Let us check carefully the assumptions of Petrov's result: 
Under assumption (1), we know that $B=+\infty$. 
Under assumption (2), we know that $\proba{Y_1 > \varepsilon} >0$. 
In the language of \citet{petrov1965probabilities}, it implies that there is a growth point of $Y_1$ greater than $\varepsilon$. 
Thus $A$
is greater than $\varepsilon$. 
Since we are in the case $B=+\infty$, we know that $A_0=A$, and in particular $A_0 > \varepsilon > 0$.
Thus we can apply the first part of Theorem~\ref{th:petrov} to the $Y_i$s with $c=\varepsilon$. 
We obtain 
\begin{equation}
\label{eq:pn-non-lattice}
\forall n\in\Integers^\star, \qquad p_n = \frac{\expl{n\log R(h) - nh\varepsilon}}{h\sigma(h)\sqrt{2\pi n}}(1+u_n)
\, ,
\end{equation}
where $u_n =\littleo{1}$ as $n\to +\infty$. 
From Eq.~\eqref{eq:pn-non-lattice}, we deduce that, for all $n\geq 1$,
\begin{equation} 
\label{eq:pn-non-lattice-aux-1}
\frac{p_{n+1}}{p_n} = \rho \sqrt{\frac{n}{n+1}}\frac{1+u_{n+1}}{1+u_{n}}
\, ,
\end{equation}
where $\rho \defeq \expl{\log R(h) - h\varepsilon}$. 
By the first lemma in \citet{petrov1965probabilities}, $m$ is strictly increasing and continuous in $(0,+\infty)$. 
Thus, by definition of $h$, the mapping $\alpha: t\mapsto \log R(t) - t\varepsilon$ is decreasing for $t<h$ and increasing for $t >h$, attaining a (strict) minimum in $t=h$. 
Since $\alpha(0)=0$, we deduce that $\alpha(h)<0$, which in turn implies $\rho <1$ in Eq.~\eqref{eq:pn-non-lattice-aux-1}. 
Since both $\sqrt{n/(n+1)}$ and $(1+u_{n+1})/(1+u_n)$ are arbitrarily close to $1$ for large $n$, we deduce the result in the case of a non-lattice distribution. 

\paragraph{Lattice case.}
As in the previous paragraph, under assumption (1) and (2), $B=+\infty$ and $A_0 >\varepsilon >0$. 
Moreover, assumption (3bis) guarantees that $n\varepsilon$ takes values in the lattice. 
Similarly to Eq.~\eqref{eq:pn-non-lattice}, we obtain 
\begin{equation}
\label{eq:pn-lattice}
\forall n\in\Integers^\star, \qquad p_n = \frac{H\expl{n\log R(h) - nh\varepsilon}}{\sigma(h)\sqrt{2\pi n}(1-\exps{-Hh})}(1+u_n)
\, ,
\end{equation}
where 
$u_n=\littleo{1}$. 
From Eq.~\eqref{eq:pn-lattice}, we deduce that, for all $n\geq 1$,
\begin{equation} 
\label{eq:pn-lattice-aux-1}
\frac{p_{n+1}}{p_n} = \rho \sqrt{\frac{n}{n+1}}\frac{1+u_{n+1}}{1+u_{n}}
\, ,
\end{equation}
where $\rho$ is as before. 
We conclude in the same fashion as in the first part of the proof and obtain monotonicity for  $\proba{\Xbar_n \geq \mu + \varepsilon}$. 

Let us now deal with the strict inequality case. 
Decomposing $\proba{\Xbar_n > \mu + \varepsilon}$ as the difference between $\proba{\Xbar_n \geq \mu + \varepsilon}$ and $\proba{\Xbar_n = \mu + \varepsilon}$, and using again Theorem~\ref{th:petrov}, we obtain
\begin{equation}
\label{eq:pn-lattice2}
\proba{\Xbar_n > \mu + \varepsilon}= \frac{H\expl{n\log R(h) - nh\varepsilon}}{h\sigma(h)\sqrt{2\pi n}}\left(\frac{1}{1-\exps{-Hh}} - 1 \right)(1+u_n)
\, ,
\end{equation}
where $u_n=\littleo{1}$. Since $1/(1-\exps{-Hh}) - 1 > 0$, the previous reasoning applies, and $\proba{\Xbar_n > \mu + \varepsilon}$ is also eventually monotonic.
\qed

\begin{remark}
It is possible to relax Assumption (1): a careful reading of \citet{petrov1965probabilities} reveals that one only needs a non-degenerate interval $[0,B)$ on which the moment generating function of $X_1$ is well-defined, with $B>0$. 
But in that event, $\varepsilon$ may need to be taken ``small enough.''
If Assumption (2) is not satisfied, then simple counter-examples such as a Dirac at $\mu$ break the statement. 
Assumption (3bis) cannot be relaxed, as will be seen in the second example of the following discussion.
\end{remark}

\subsection{Proof of Theorem~\ref{th:monotonicity-tail-proba-multivariate}}
\label{sec:proof-monotonicity-tail-proba-multivariate}

Let us set $a\defeq \mu + \epsilon$. 
We first notice that the assumptions of Lemma~\ref{lemma:tau-existence} (which is stated and proved in Section~\ref{sec:existence-tau}) are satisfied, thus there exists $\tau\in U_\alpha$ such that $\nabla \varphi(\tau) = a$ and $\tau_{(1)}>0$. 
We now prove a stronger result than Theorem~\ref{th:monotonicity-tail-proba-multivariate}, namely, for $n$ large enough,
\begin{equation}
\label{eq:xbarn-multivariate-asymptotic}
\proba{\Xbar_n \geq a} = \frac{\expl{-n (\tau^\top a - \varphi(\tau))}}{(2\pi n)^{D/2}\prod_{k=1}^D \tau_k \deter{\nabla^2\varphi(\tau)}^{1/2}}(1+\littleo{1})
\, .
\end{equation}
The result follows from this last display, since $\tau^\top a - \varphi(a) >0$. 
Indeed, $t\mapsto t^\top a - \varphi(t)$ is a strictly concave function, with maximum achieved at $t=\tau$ and value $0$ at $0$. 
All that is left to do is to prove Eq.~\eqref{eq:xbarn-multivariate-asymptotic}. 
We use Theorem~1 in \citet{joutard_2017}. 
More precisely, we apply Theorem~1 to $Z_n=\Xbar_n$, with rate $b_n=n$ and $H=0$. 
We have several assumptions to check. 

First, in \citet{joutard_2017}'s notation, the moment-generating function of $b_nZ_n$ is given by $\phi_n(t)=\phi(t)^n$, since the $X_i$s are i.i.d. 
Simply by taking the log, we obtain that the normalized cumulant generating function of $b_nZ_n$ is given by $\varphi_n(t)=\log \phi(t)$. 
By assumption, $\phi$ is defined and differentiable on $U_\alpha$.  
Since $\phi$ is positive by construction, we have shown the existence of a differentiable function $\varphi$ such that $\lim_n \varphi_n(t)=\varphi(t)$ for all $t\in U_\alpha$. 

We now proceed to show that assumption A1-3 are satisfied. 
We start with A1, asking that $\varphi_n$ is an holomorphic function on $D_\alpha^p$, where $D_\alpha\defeq \{z\in\Complex \}$. 
We notice that, since $\varphi_n=\varphi$, we only have to show the claim for $\varphi$. 
Let us fix all variables except the $i$th. 
That is, let us consider the mapping $\varphi_i : s \mapsto \varphi(t_1,\ldots,t_{i-1},s,t_{i+1},\ldots,t_p)$ for some $t_1,\ldots,t_{i-1},t_{i+1},\ldots,t_p\in\Reals$. 
By our assumption on $\varphi$, $\varphi_i$ is well-defined on the segment $(-\alpha,\alpha)$, and therefore $\varphi_i$ as a function of the complex variable is well-defined and holomorphic on the strip $\{z=a+\ii b, a\in (-\alpha,\alpha), b\in\Reals\}$. 
In particular, $\varphi_i$ is holomorphic on the disk $D_\alpha$. %
Hartogs' theorem tells us that, since $\varphi:D_\alpha^p\to \Complex$ is a continuous function. 
Finally, Osgood's Lemma guarantees that $\varphi$ is holomorphic on $D_\alpha^p$. 
That it is bounded is straightforward from our assumption. 

Let us now turn to A2. 
Again, since $\varphi_n=\varphi$, the first part of the assumption is trivial to check (one can take $H(t)=0$). 
In order to prove that $\nabla^2\varphi(\tau)$ is positive definite, we apply exponential tilting, also called by  \citet{petrov1965probabilities} the \emph{Cram\'er method}. 
More precisely, let us define the random vector $Z$ with density with respect to Lebesgue measure on $\Reals^p$ defined as
\[
\frac{1}{\phi(\tau)}\expl{\inner{\tau}{x}}\rho(x)
\, ,
\]
where, as we recall, $\rho$ is the density of $X$. 
Then one can readily check that $\nabla^2\varphi(\tau)=\cov{Z}$. 
In particular, $\nabla^2\varphi(\tau)$ is positive semi-definite. 
Since, by assumption, $X$ has non-zero density everywhere, the same is true for $Z$ by construction. 
In particular, $Z$ is not supported in an hyperplane. 
Thus its covariance matrix has no zero eigenvalue and we can conclude.

All that is left to do is to prove that A3 is satisfied. 
We first notice that, according to the discussion at the beginning of this proof, for any $t\in\Reals$, 
\begin{equation}
\label{eq:ratio-A3}
\abs{\frac{\phi_n(\tau + \ii t)}{\phi_n(\tau)}} = \abs{\frac{\phi(\tau + \ii t)}{\phi(\tau)}}
\, .
\end{equation}
Thus showing that $\abs{\frac{\phi(\tau + \ii t)}{\phi(\tau)}}<1$ will guarantee that Eq.~\eqref{eq:ratio-A3} is $\littleo{1/b_n^{p/2}}$. 
But one can recognize that 
\[
\frac{\phi(\tau + \ii t)}{\phi(\tau)} = \expec{\expl{\ii\inner{t}{Z}}}
\, ,
\]
which is the characteristic function of the random vector $Z$ defined in the previous paragraph. 
Again, since $X$ has non-zero density everywhere, the same is true for $Z$ and Lemma~\ref{lemma:feller-multivariate} (which is proved in Section~\ref{sec:feller-multivariate}) guarantees that  $\abs{\expec{\expl{\ii\inner{t}{X}}}} < 1$ for any $t\neq 0$. 
\qed

\section{When monotonicity does not hold}
\label{sec:limitations-monotonicity}

In this section, we give two examples in which the assumptions of Theorem~\ref{th:monotonicity-tail-proba} are not satisfied and $(p_n)_{n\geq 1}$ is not monotonous.
We set $S_n\defeq X_1 + \cdots + X_n$. 

\begin{example}
Let us assume that $X_1$ has a $\alpha$-stable distribution, with density $f(x;\alpha,\beta,c,\mu)$. 
In that case, $S_n-n\mu = \sum_{i=1}^{n}1\cdot (X_i-\mu)$ has density 
\[
\frac{1}{n^{1/\alpha}}f(y/n^{1/\alpha};\alpha,\beta,c,\mu)
\, .
\]
We can directly compute $p_n$ by a change of variables, and we obtain
\begin{align*}
p_n &= \proba{S_n-n\mu > n\epsilon} \\
&= \int_{n\epsilon}^{+\infty} \frac{1}{n^{1/\alpha}} f(y/n^{1/\alpha};\alpha,\beta,c,\mu)\Diff y \\
p_n &= \int_{n^{1-1/\alpha}\epsilon}^{+\infty} f(x;\alpha,\beta,c,\mu)\Diff x
\, .
\end{align*}
Whenever $\alpha < 1$, $\left(n^{1-1/\alpha}\epsilon\right)_n$ is a decreasing sequence, and according to the last display  $p_n$ is \emph{increasing}. 
This is even though the distribution is symmetric around zero. 
We showcase this effect in Figure~\ref{fig:alpha-stable}. 
\end{example} 

\begin{example}
Let us assume that $X_1\sim \bernoulli{p}$, and therefore $S_n\sim \binomial{n,p}$. 
Thus 
\begin{align*}
p_n &= \proba{X_1+\cdots +  X_n \geq n(p+\varepsilon)} \\
&= \sum_{k = \ceil{n(p+\varepsilon)} }^n \binom{n}{k}p^k(1-p)^{n-k}
\, .
\end{align*}
While innocent-looking, this last display has a complicated behavior. 
In particular, it is not monotonic (see Figure~\ref{fig:bernoulli}).
Transforming slightly this example in order to satisfy the assumptions of Theorem~\ref{th:monotonicity-tail-proba} radically changes the behavior of $p_n$, as predicted. 
One possible way to do so is to add a small mass $\varepsilon/(2(\mu+\varepsilon)-1)$ at $\mu+\varepsilon$, while setting the probability to be equal to $0$ and $1$ equal to $(2\mu-1)/(2(\mu+\varepsilon)-1)$. 
We refer to Figure~\ref{fig:bernoulli} for an illustration. 
\end{example}

\begin{figure}
\centering
\includegraphics[scale=0.3]{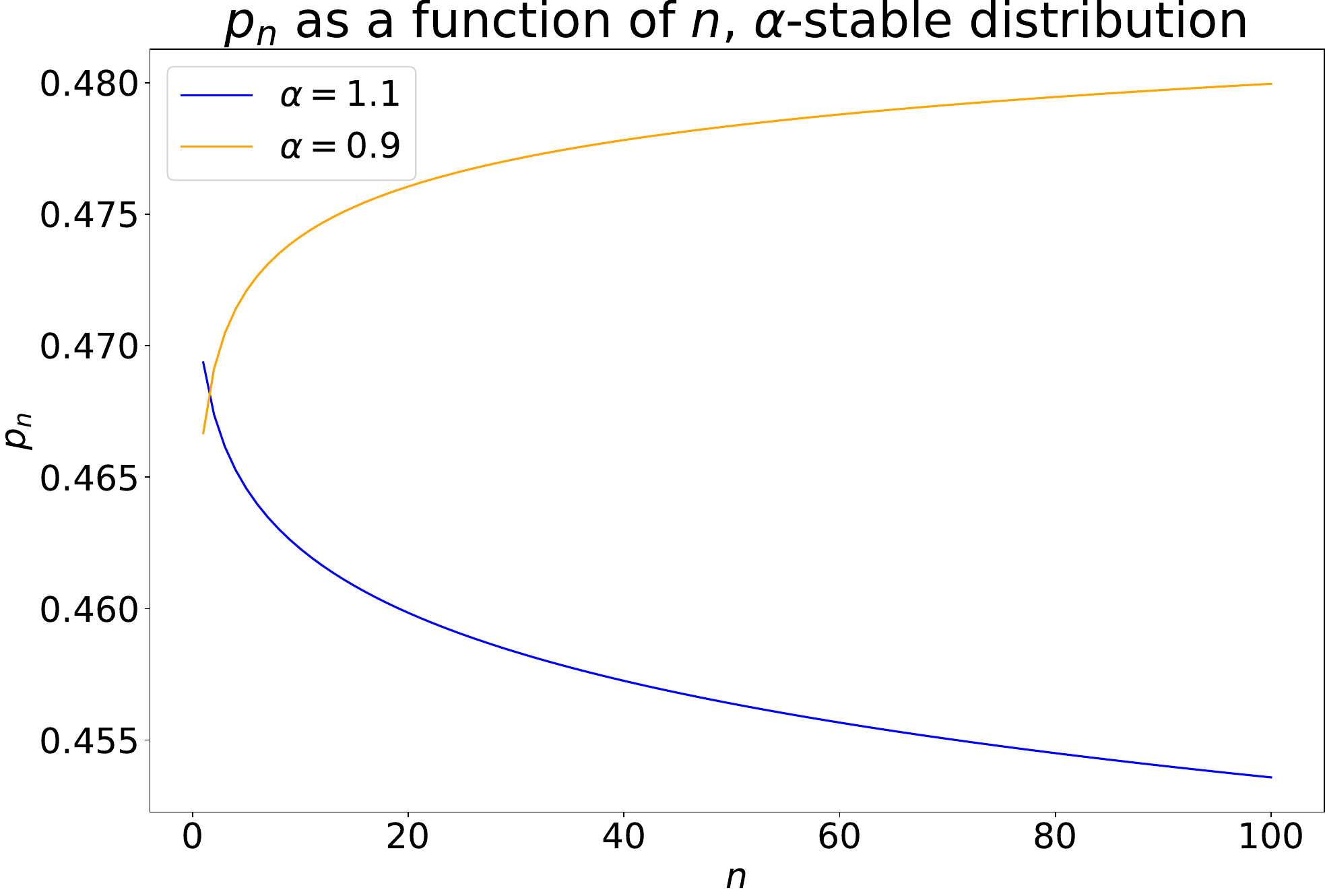}
\caption{\label{fig:alpha-stable}Evolution of $p_n$ when $X_1$ follows the $\alpha$-stable distribution. Here we considered $\epsilon=0.1$. When $\alpha < 1$, $p_n$ is increasing while it is decreasing, as expected, when $\alpha > 1$.}
\end{figure}

\begin{figure}
\centering
\includegraphics[scale=0.3]{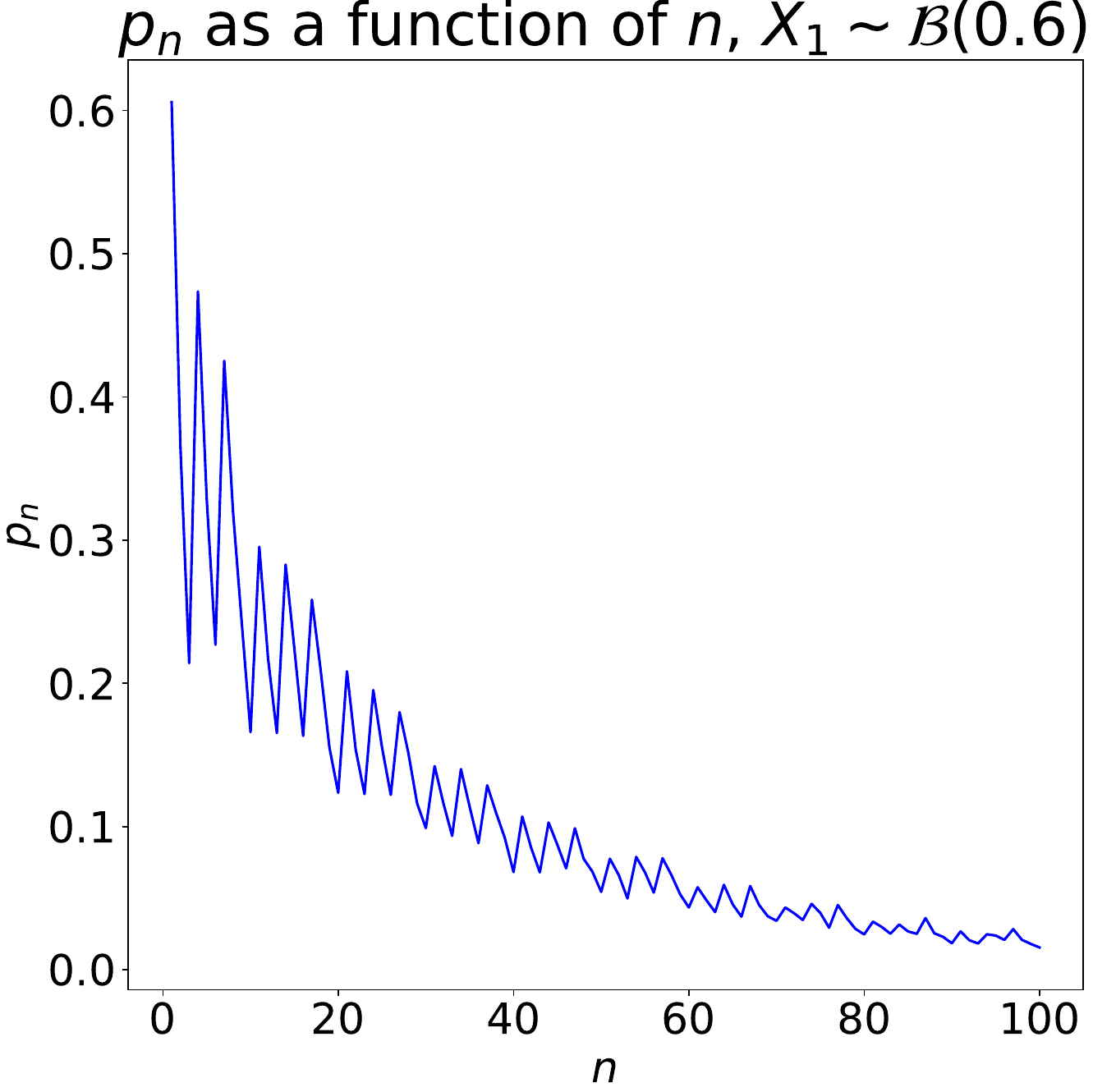}
\hfill 
\includegraphics[scale=0.3]{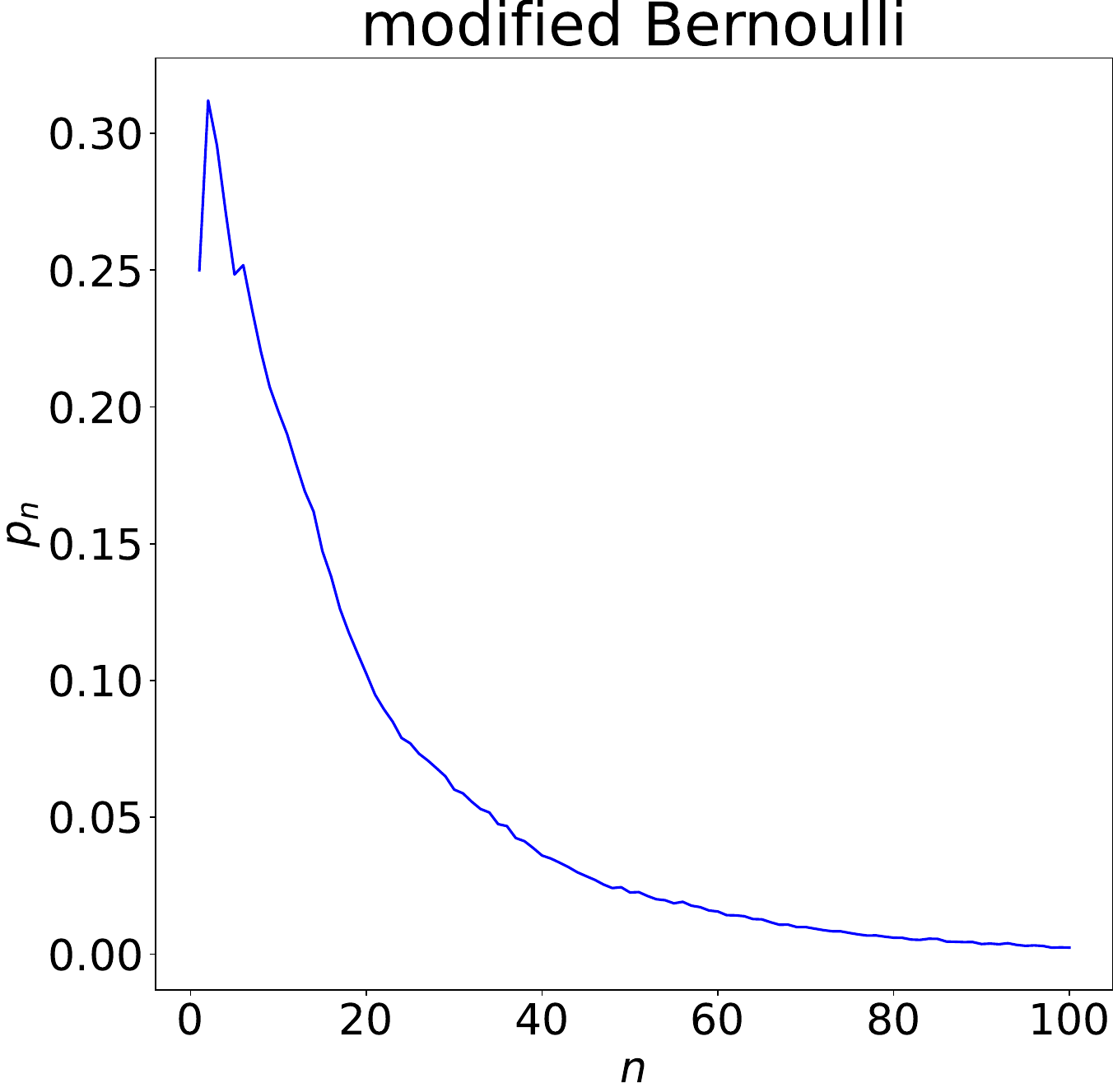}
\caption{\label{fig:bernoulli}\emph{Left:} evolution of $p_n$ when $X_1$ follows a Bernoulli distribution. Here we took $\varepsilon=0.1$. \emph{Right:} same example, with some mass added at $\mu+\varepsilon$. We observe that the sequence $(p_n)_{n\geq 1}$ is decreasing for sufficiently large $n$, as predicted by Theorem~\ref{th:monotonicity-tail-proba}. }
\end{figure}

\section{Technical results}
\label{sec:technical-results}

\subsection{On the existence of \texorpdfstring{$\tau$}{T}}
\label{sec:existence-tau}

For a subset $A$ of $\mathbb{R}^D$, we denote by $C(A)$ its convex hull, and by $\textup{int}(A)$ its interior.

\begin{lemma}[Existence of $\tau$]
\label{lemma:tau-existence}
Let $X \in \mathbb{R}^D$ be a random variable with logaritmic moment generating function~$\varphi$, support $\mathcal{S}$, and mean $\mu$. 
We assume that $X$ is absolutely continuous with respect to the Lebesgue measure, and that there exists $\alpha \in ]0, \infty]$ such that $\{ t \in \mathbb{R}^D | \varphi(t) < \infty \} = U_\alpha$, where $U_\alpha$ denotes the ball of radius $\alpha$. 
Let $\lambdamin$ (resp. $\lambdamax$) denote the minimal (resp. maximal) eigenvalue of $\nabla^2\varphi$. 
Then, for all $\varepsilon \in \mathbb{R}^D$ with strictly positive entries such that $\mu + \varepsilon \in \textup{int}(C(\mathcal{S}))$ and
\begin{equation} 
\label{eq:condition-epsilon-appendix}
\min_{1\leq i\leq D} \frac{\epsilon_i}{\norm{\epsilon}} > \sqrt{2\left(1-\sqrt{\frac{\lambdamin}{\lambdamax}} \right)}
\, ,
\end{equation}
there exists $\tau \in U_\alpha$ with strictly positive entries such that $\nabla \varphi (\tau) = \mu + \varepsilon$.
\end{lemma}

\begin{remark}
Assumption~\eqref{eq:condition-epsilon-appendix} is quite restrictive. 
Indeed, assuming for a moment that $\norm{\epsilon}=1$, we see that the locus of points such that $\epsilon_i>c$ for all $i$ is empty whenever $c\geq 1/\sqrt{d}$. 
Therefore, one needs 
\[
1-\frac{1}{2d}\leq \frac{\lambdamin}{\lambdamax} \left( \leq 1\right)
\]
for Eq.~\eqref{eq:condition-epsilon-appendix} to hold for a non-empty set of points. 
For large $d$, this means requiring nearly isotropic $\nabla^2\varphi$. 
\end{remark}

\begin{proof} 
Let $\varepsilon \in \mathbb{R}^D$ satisfying our assumptions.
Since $\{ t \in \mathbb{R}^D | \varphi(t) < \infty \} = U_\alpha$, $\varphi$ is ``steep'' in the sense of \citet{barndorff_1978}, and his Theorem~9.2 implies then that $ \nabla \varphi (U_\alpha) = \textup{int}(C(\mathcal{S}))$. 
Therefore, there exists $\tau \in U_\alpha$ such that $\nabla \varphi (\tau) = \mu + \varepsilon$.
    
All that is left to do is to show that all entries of $\tau$ are strictly positive. 
Since $\varphi$ is $\lambdamax$-smooth, one can write
\begin{equation}
\label{eq:l-smooth-bound}
\frac{1}{\lambdamax}\norm{\nabla\varphi (\tau) - \nabla \varphi(0)} ^2 \leq (\nabla\varphi(\tau)-\nabla\varphi(0))^\top (\tau - 0) = (\mu + \epsilon - \mu)^\top \tau = \epsilon^\top \tau
\, .
\end{equation}
Thus we have proved that $\epsilon^\top \tau \geq \frac{1}{\lambdamax}\norm{\epsilon}^2$. 
In particular, $\tau$ points in the same direction as $\epsilon$. 
Though this is not sufficient to conclude: the halfspace of directing vector $\epsilon$ contains points having non-positive coordinates. 
We note that, since $X$ is absolutely continuous, $\lambdamin >0$, and $\varphi$ is $\lambdamin$-strongly convex.  
Therefore, 
\begin{equation}
\label{eq:mu-strongly-convex-bound}
(\nabla\varphi(\tau)-\nabla\varphi(0))^\top (\tau - 0) \geq \lambdamin \norm{\tau - 0}^2
\, ,
\end{equation}
that is, $\epsilon^\top \tau \geq \lambdamin \norm{\tau}^2$. 
Combining both bounds, we see that 
\[
\epsilon^\top\tau \geq \sqrt{\frac{1}{\lambdamax}\norm{\epsilon}^2 \cdot \lambdamin \norm{\tau}^2}
\, ,
\]
from which we deduce that 
\begin{equation} 
\label{eq:def-cone}
\cos(\epsilon,\tau) \geq \sqrt{\frac{\lambdamin}{\lambdamax}}
\, .
\end{equation}
Equivalently, the angle between $\epsilon$ and $\tau$ is less than $\theta_0\defeq \arccos\left(\sqrt{\frac{\lambdamin}{\lambdamax}}\right)$, meaning that $\tau$ belongs to $\mathcal{C}$ the half-cone of direction $\epsilon$ and aperture $2\theta_0$. 

Let us now prove that, under our assumption, no point of $\mathcal{C}$ has a negative coordinate. 
Without loss of generality, we can assume that $\norm{\epsilon}=\norm{\tau}=1$. 
Using Eq.~\eqref{eq:def-cone}, we see that 
\[
\norm{\epsilon-\tau}^2 = 1+1-2\epsilon^\top\tau \leq 2\left(1-\sqrt{\frac{\lambdamin}{\lambdamax}} \right) \eqdef \rho^2 
\, .
\]
To put it plainly, $\tau$ belongs to $\mathcal{B}$, the closed ball of center $\epsilon$ and radius $\rho$. 
The distance from the center of $\mathcal{B}$ to each hyperplane $\{x_i=0\}$ is given by $\epsilon_i$. 
Our assumption guarantees that $\epsilon_i > \rho$ for all $i\in [d]$, thus $\mathcal{B}$ does not intersect any of the hyperplanes $\{x_i=0\}$. 
In particular, all point inside $\mathcal{B}$ have positive coordinates, and since $\tau\in\mathcal{B}$ we can conclude. 
\end{proof}

\begin{figure}
    \centering
\begin{tikzpicture}
\draw[thick,->] (0,0) -- (6,0) node[anchor=north west] {$x_i$};
\draw[thick,->] (0,0) -- (0,6);
\draw[->] (0,0) -- (5,2) node[anchor=south] {$\epsilon$};
\draw (0,0) -- (6,0.66*6);
\draw (0,0) -- (6.8,0.18*6.8);
\draw[->] (0,0) -- (6.5,1.8) node[anchor=north] {$\tau$};
\draw[blue] (0,0)+(0:2.5cm) arc (0:21.77:2.5cm) node[anchor=north west] {$\theta$};
\draw[red] (0,0)+(21.77-11.46:3.5cm) arc (21.77-11.46:21.77:3.5cm) node[anchor=south] {$\theta_0$};
\draw[dotted] (5,2) -- (5,0) node[anchor=north] {$\epsilon_i$};
\end{tikzpicture}
    \caption{\label{fig:cone-explanation}Visual help for the proof of Lemma~\ref{lemma:tau-existence}. The vector $\tau$ is contained in a half cone of axis $\epsilon$ and angle $\theta_0$ (in red). 
    }
\end{figure}
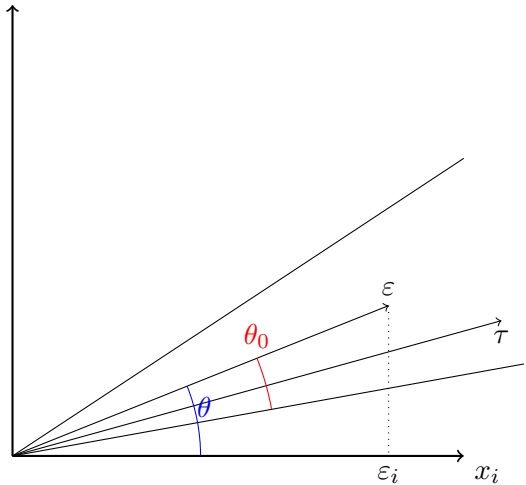

\subsection{On the characteristic function of a random vector with non-zero density}
\label{sec:feller-multivariate}

In this section, we state and prove the following result, which is a straightforward multivariate extension of Lemma~3 in Chapter XV.7 of \citet{feller_1971}. 

\begin{lemma}[Characteristic function of random vector]
\label{lemma:feller-multivariate}
Let $X\in\Reals^p$ be a random vector with non-zero density almost everywhere. 
Then 
\[
\forall t \neq 0, \qquad \abs{\expec{\expl{\ii\inner{t}{X}}}} < 1
\, .
\]
\end{lemma}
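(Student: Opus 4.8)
The plan is to adapt the classical one-dimensional argument (Feller, Chapter XV.7, Lemma~3) to the multivariate setting. The statement is that if $X\in\Reals^p$ has a density with respect to Lebesgue measure that is nonzero almost everywhere, then $\abs{\expec{\expl{\ii\inner{t}{X}}}}<1$ for every $t\neq 0$. First I would fix $t\neq 0$ and argue by contradiction: suppose $\abs{\expec{\expl{\ii\inner{t}{X}}}}=1$. Writing $\expec{\expl{\ii\inner{t}{X}}}=\exps{\ii\theta}$ for some real $\theta$, this means $\expec{\expl{\ii(\inner{t}{X}-\theta)}}=1$, hence, taking real parts, $\expec{1-\cos(\inner{t}{X}-\theta)}=0$. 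Since $1-\cos(\cdot)\geq 0$, the integrand must vanish almost everywhere with respect to the law of $X$; because $X$ has a density that is positive almost everywhere, $1-\cos(\inner{t}{x}-\theta)=0$ for Lebesgue-almost every $x\in\Reals^p$.

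The second step is to extract a contradiction from $\cos(\inner{t}{x}-\theta)=1$ for a.e.\ $x$. This forces $\inner{t}{x}-\theta\in 2\pi\Relint$ for almost every $x$, i.e.\ the map $x\mapsto \inner{t}{x}$ takes values in the countable set $\theta + 2\pi\Relint$ on a set of full Lebesgue measure. But since $t\neq 0$, the image of $\Reals^p$ under the linear functional $x\mapsto\inner{t}{x}$ is all of $\Reals$, and the preimage of any single value $\theta+2\pi k$ is an affine hyperplane, which has Lebesgue measure zero in $\Reals^p$. A countable union of such hyperplanes still has measure zero, so it cannot contain a set of full measure. This is the desired contradiction, and it completes the proof.

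I would present the argument cleanly in two displayed steps — one reducing to the pointwise identity $\cos(\inner{t}{x}-\theta)=1$ a.e., and one observing that the corresponding level sets are a measure-zero union of hyperplanes — and remark that when $p=1$ this is exactly Feller's lemma. There is essentially no real obstacle here; the only mild care needed is to phrase ``almost everywhere with respect to the density'' versus ``almost everywhere with respect to Lebesgue measure'' correctly, using the hypothesis that the density does not vanish to pass between the two. In short: the reduction to a pointwise cosine identity via nonnegativity of $1-\cos$ is routine, and the geometric fact that countably many hyperplanes are Lebesgue-null in $\Reals^p$ finishes it; I expect the write-up to be short.
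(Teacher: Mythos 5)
Your proof is correct and follows essentially the same route as the paper: reduce the modulus-one condition to the identity $\expec{1-\cos(\inner{t}{X}-\theta)}=0$, use nonnegativity to deduce a pointwise cosine identity almost everywhere, and observe that this would confine the density's support to a Lebesgue-null countable union of hyperplanes. The only cosmetic difference is that you absorb the phase $\theta$ directly rather than, as the paper does, first treating the case $\expec{\expl{\ii\inner{t}{X}}}=1$ and then reducing the general case by the translation $Y = X - c$; the content is the same.
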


\begin{proof}
Let us notice that the triangle inequality guarantees that $\abs{\expec{\expl{\ii\inner{t}{X}}}} \leq 1$ for any $t\in\Reals^p$. 
Therefore, one only needs to show that $\abs{\expec{\expl{\ii\inner{t}{X}}}} \neq 1$, which we will achieve reasoning by contradiction. 

First, let us suppose that there exists $t\neq 0$ such that $\expec{\expl{\ii\inner{t}{X}}} = 1$. 
Then the expectation of the non-negative function $x\mapsto 1-\cos \inner{t}{x}$ is exactly $0$, meaning that $\inner{t}{x} = 0$ modulo $2\pi$ wherever $X$ has non-zero mass. 
Since $t\neq 0$, this implies that $X$ is supported in the collection of hyperplanes defined by 
\[
\inner{t}{x} = 2k\pi, \qquad k\in\Relint
\, , 
\]
which contradicts $X$ having non-zero density everywhere. 

Let us now come back to the original problem, and assume that $\abs{\expec{\expl{\ii\inner{t}{X}}}} = 1$. 
Therefore there exists $b\in\Reals$ such that $\abs{\expec{\expl{\ii\inner{t}{X}}}}=\exps{\ii b}$. 
Since $t\neq 0$, there exists a vector $c\in\Reals^p$ such that $b=\inner{t}{c}$, and one can write
\[
\expec{\expl{\ii \inner{t}{X - c}}} = 1
\, .
\]
We recognize the characteristic function of the random vector $Y\defeq X-c$, which also has non-zero density everywhere. 
Applying the previous reasoning to $Y$ also brings contradiction, and we can conclude. 
\end{proof}

\section{Three more movies}

\label{app:more_movies}

The data from \citet{simoiu2019studying} deal with $20$ movies, and only three were displayed in the main paper. We look here at three more movies: Logan, The Space Between Us, and Smurfs, The Lost Village. As illustrated in Figure \ref{fig:crowds2}, the results are again in line with our theory. An interesting feature of the Smurfs movie is that  while it is in the nonconvex part of the loss, it is close to the inflexion point, since $|y - \bar{y}_\infty| \approx 13.8$ is closer to $10$ than any of the five other movies. This perhaps explains while the Welsch and Geman-McClure losses are initially decreasing, and eventually increasing.

Convergence is in general much slower when $\bar{y}_\infty$ is close to the inflection point of $10$, than when raters are either very accurate (like for The Space Between Us), or clearly missed the target (John Wick). This can be explained by the delta method expansion of Lemma \ref{lem:delta}, that implies 
\begin{equation}
    \mathbb{E} [L(\ybar_k)] = L(\ybar_\infty) + \frac{L''(\ybar_\infty) \text{Var}(\yhat_1)}{2 K} + \mathcal{O}\left(\frac{1}{K^{3/2}}\right),
\end{equation}
which means that convergence will be slower when the second derivative is small, \emph{i.e.} near the inflexion point.

\begin{figure}
\centering
\includegraphics[width = \columnwidth]{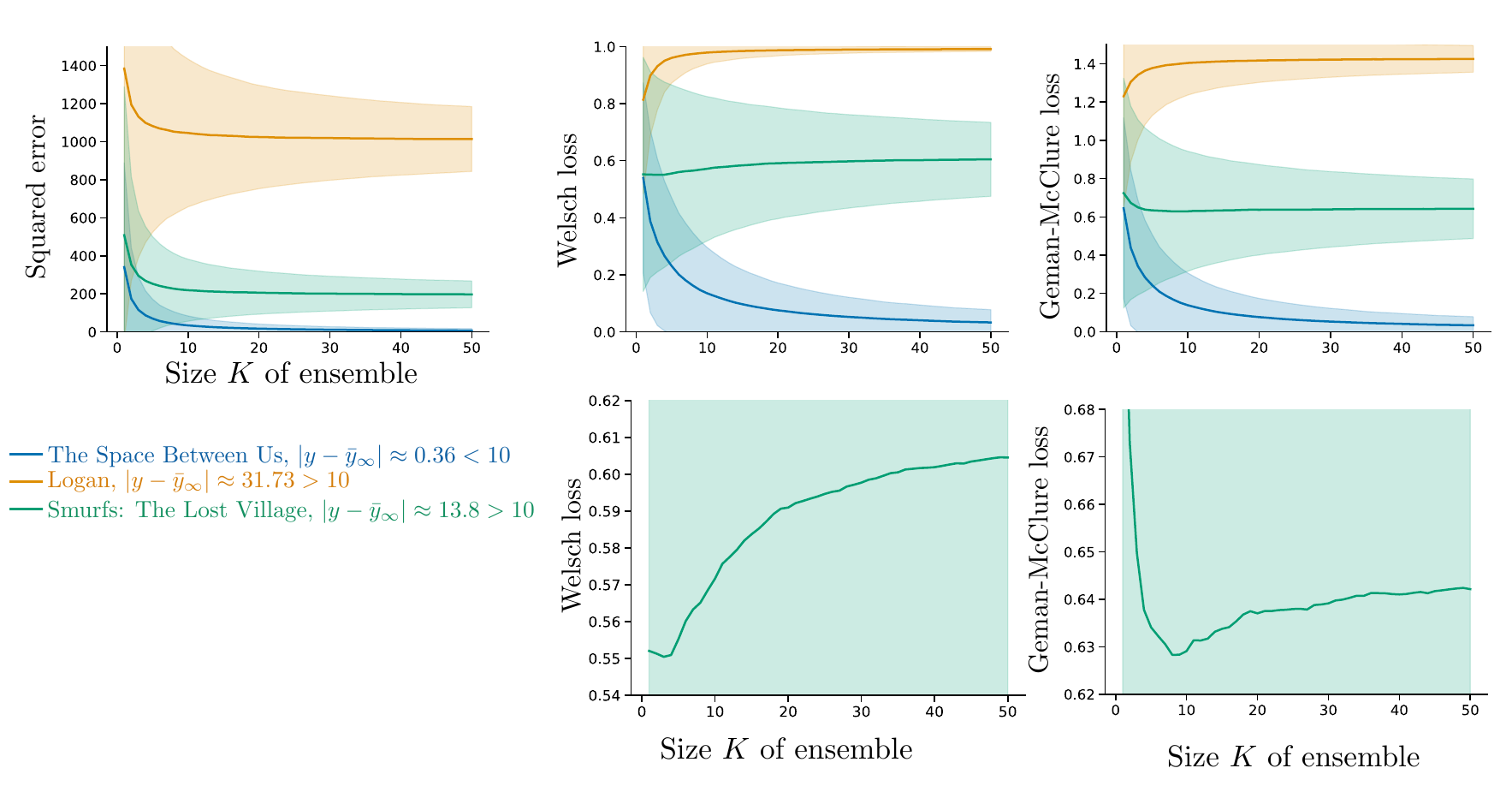}
\caption{\label{fig:crowds2}Evolution of the error at predicting the ratings of three movies as the size of the crowd grows (mean and standard deviation over $10,000$ repetitions). \emph{(Left)} The squared error is decreasing for all three movies, as predicted by Theorem \ref{th:exch}. \emph{(Middle and Right)} The Welsh and Geman-McClure losses are clearly increasing for Logan, and decreasing for The Space Between us, in line with Theorem \ref{th:smooth-ncvx}. The Smurfs are not monochromatic as we expected: as displayed in the two zoom-ins of the bottom panels, the nonconvex losses are decreasing at first, then end-up increasing. This remains consistent with Theorem \ref{th:smooth-ncvx}, that merely ensures that these two loss will be \emph{eventually} monotonic.}
\end{figure}

\end{document}